\title{Policy Poisoning\\in Batch Reinforcement Learning and Control}
\author{%
  Yuzhe Ma \\
  University of Wisconsin--Madison\\
  \texttt{yzm234@cs.wisc.edu} \\
  \And
  Xuezhou Zhang \\
  University of Wisconsin--Madison \\
  \texttt{zhangxz1123@cs.wisc.edu} \\
  \AND
  Wen Sun \\
  Microsoft Research New York \\
  \texttt{Sun.Wen@microsoft.com} \\
  \And
  Xiaojin Zhu\\
  University of Wisconsin--Madison \\
  \texttt{jerryzhu@cs.wisc.edu} \\
}
\newcommand{\Q}{\mathcal{Q}}
\newcommand{\RR}{\mathcal{R}}
\newcommand{\bv}[1]{\mathbf{#1}}
\newcommand{\A}{\mathcal{A}}
\newtheorem{theorem}{Theorem}
\newtheorem{lemma}[theorem]{Lemma}
\newtheorem{proposition}[theorem]{Proposition}
\newtheorem{corollary}[theorem]{Corollary}
\newtheorem{definition}{Definition}
\newcommand{\thmref}[1]{Theorem~\ref{#1}}
\newcommand{\lemref}[1]{Lemma~\ref{#1}}
\newcommand{\I}{\mathcal{I}}
\renewcommand{\S}{\mathcal{S}}
\renewcommand\ind[1]{\ensuremath{\mathds{1}\left[#1\right]}}
\newcommand{\interior}[1]{%
  {\kern0pt#1}^{\mathrm{o}}%
}
\definecolor{C0}{HTML}{1F77B4}
\definecolor{C1}{HTML}{FF7F0E}
\definecolor{C2}{HTML}{2ca02c}
\definecolor{C3}{HTML}{d62728}
\definecolor{C4}{HTML}{9467bd}
\definecolor{C5}{HTML}{8c564b}
\definecolor{C6}{HTML}{e377c2}
\definecolor{C7}{HTML}{7F7F7F}
\definecolor{C8}{HTML}{bcbd22}
\definecolor{C9}{HTML}{17BECF}
\begin{document}

\maketitle

\begin{abstract}
We study a security threat to batch reinforcement learning and control where the attacker aims to poison the learned policy.
The victim is a reinforcement learner / controller which first estimates the dynamics and the rewards from a batch data set, and then solves for the optimal policy with respect to the estimates. 
The attacker can modify the data set slightly before learning happens,
and wants to force the learner into learning a target policy chosen by the attacker. 
We present a unified framework for solving batch policy poisoning attacks,
and instantiate the attack on two standard victims:
tabular certainty equivalence learner in reinforcement learning and linear quadratic regulator in control. 
We show that both instantiation result in a convex optimization problem on which global optimality is guaranteed, and provide analysis on attack feasibility and attack cost.
Experiments show the effectiveness of policy poisoning attacks.
\end{abstract}

\section{Introduction}
With the increasing adoption of machine learning, it is critical to study security threats to learning algorithms and design effective defense mechanisms against those threats.
There has been significant work on adversarial attacks~\cite{biggio2018wild,huang2011adversarial}.
We focus on the subarea of data poisoning attacks where the adversary manipulates the training data so that the learner learns a wrong model.
Prior work on data poisoning targeted victims in supervised learning~\cite{mei2015using,koh2018stronger,wang2018data,zhang2019online} and multi-armed bandits~\cite{jun2018adversarial,ma2018data,liu2019data}.
We take a step further and study data poisoning attacks on reinforcement learning (RL).
Given RL's prominent applications in robotics, games and so on, an intentionally and adversarially planted bad policy could be devastating.

While there has been some related work in test-time attack on RL, reward shaping, and teaching inverse reinforcement learning (IRL), little is understood on how to training-set poison a reinforcement learner. 
We take the first step and focus on \emph{batch} reinforcement learner and controller as the victims.  These victims learn their policy from a batch training set. 
We assume that the attacker can modify the rewards in the training set, which we show is sufficient for policy poisoning.
The attacker's goal is to force the victim to learn a particular target policy (hence the name policy poisoning), while minimizing the reward modifications.
Our main contribution is to characterize batch policy poisoning with a unified optimization framework, and to study two instances against tabular certainty-equivalence (TCE) victim and linear quadratic regulator (LQR) victim, respectively.

\section{Related Work}
Of particular interest is the work on \emph{test-time  attacks} against RL~\cite{huang2017adversarial}.
Unlike policy poisoning, there the RL agent carries out an already-learned and fixed policy $\pi$ to e.g. play the Pong Game.
The attacker perturbs pixels in a game board image, which is part of the state $s$.
This essentially changes the RL agent's perceived state into some $s'$.
The RL agent then chooses the action $a':=\pi(s')$ (e.g. move down) which may differ from $a := \pi(s)$ (e.g. move up).
The attacker's goal is to force some specific $a'$ on the RL agent.
Note $\pi$ itself stays the same through the attack.
In contrast, ours is a data-poisoning attack which happens at training time and aims to change $\pi$.

Data-poisoning attacks were previously limited to supervised learning victims, either in batch mode~\cite{biggio2012poisoning,xiao2015feature,li2016data,mei2015using} or online mode~\cite{wang2018data,zhang2019online}. 
Recently data-poisoning attacks have been extended to multi-armed bandit victims~\cite{jun2018adversarial,ma2018data,liu2019data}, but not yet to RL victims.

There are two related but distinct concepts in RL research.
One concept is reward shaping~\cite{ng1999policy,asmuth2008potential,devlin2012dynamic,wiewiora2003potential} which also modifies rewards to affect an RL agent. However, the goal of reward shaping is fundamentally different from ours. Reward shaping aims to speed up convergence to the \emph{same} optimal policy as without shaping.
Note the differences in both the target (same vs. different policies) and the optimality measure (speed to converge vs. magnitude of reward change).

The other concept is teaching IRL~\cite{cakmak2012algorithmic,brown2019machine,kamalaruban19}. 
Teaching and attacking are mathematically equivalent.  However, the main difference to our work is the victim.
They require an IRL agent, which is a specialized algorithm that estimates a reward function from demonstrations of (state, action) trajectories alone (i.e. no reward given).
In contrast, our attacks target more prevalent RL agents and are thus potentially more applicable.
Due to the difference in the input to IRL vs. RL victims, our attack framework is completely different. 

\section{Preliminaries}
\label{prelim}
A Markov Decision Process (MDP) is defined as a tuple $(\S, \A, P, R, \gamma)$, 
where $\S$ is the state space, 
$\A$ is the action space, 
$P:\S\times\A \to \Delta_\S$ is the transition kernel where $\Delta_\S$ denotes the space of probability distributions on $\S$,
$R:\S\times\A \to \mathbb{R}$ is the reward function,
and $\gamma\in [0,1)$ is a discounting factor. 
We define a policy $\pi:\S \to \A$ as a function that maps a state to an action. 
We denote the $Q$ function of a policy $\pi$ as 
$Q^{\pi}(s,a) = \mathbb{E}[ \sum_{\tau=0}^{\infty} \gamma^\tau R(s_\tau,a_\tau)  \mid  s_0 = s, a_0 = a, \pi]$, where the expectation is over the randomness in both transitions and rewards. The $Q$ function that corresponds to the optimal policy can be characterized by the following Bellman optimality equation:
\begin{align}
Q^*(s,a) = R(s, a ) + \gamma \sum_{s'\in \S} P(s'|s,a) \max_{a'\in\A} Q^*(s',a'),\label{eq:Bellman}
\end{align}  and the optimal policy is defined as $\pi^*(s) \in \argmax_{a\in \A} Q^*(s,a)$.

We focus on RL victims who perform batch reinforcement learning.
A training item is a tuple $(s,a,r,s^\prime)\in \S\times\A\times\R\times \S$, where $s$ is the current state, $a$ is the action taken, $r$ is the received reward, and $s^\prime$ is the next state.
A training set is a batch of $T$ training items denoted by $D=(s_t,a_t,r_t,s_t^\prime)_{t=0:T-1}$. 
Given training set $D$, a model-based learner performs learning in two steps:

\textbf{Step 1}. The learner estimates an MDP $\hat M=(\S, \A, \hat P, \hat R, \gamma)$ from $D$. In particular, we assume the learner uses maximum likelihood estimate for
the transition kernel $\hat P : \S\times \A\mapsto \Delta_\S$
	\begin{eqnarray}\label{MLE}
	\hat P &\in& \argmax_{P} \sum_{t=0}^{T-1} \log P(s_t^\prime|s_t,a_t),
	\end{eqnarray}
 and least-squares estimate for the reward function $\hat R : \S\times\A\mapsto \R$
	\begin{eqnarray}\label{MMSE}
	\hat R &=& \argmin_{R} \sum_{t=0}^{T-1} (r_t - R(s_t,a_t))^2.
	\end{eqnarray}
Note that we do not require~\eqref{MLE} to have a unique maximizer $\hat P$. When multiple maximizers exist, we assume the learner arbitrarily picks one of them as the estimate. 
We assume the minimizer $\hat R$ is always unique. We will discuss the conditions to guarantee the uniqueness of $\hat R$ for two learners later.

\textbf{Step 2}. The learner finds the optimal policy $\hat\pi$ that maximizes the expected discounted cumulative reward on the estimated environment $\hat M$, i.e.,
	\begin{eqnarray}\label{optimal_policy}
	\hat\pi \in \argmax_{\pi : \S \mapsto \A} \mathbb{E}_{\hat{P}} \sum_{\tau=0}^\infty \gamma^\tau\hat R(s_\tau,\pi(s_\tau)),
	\end{eqnarray}
where $s_0$ is a specified or random initial state.
Note that there could be multiple optimal policies, thus we use $\in$ in~\eqref{optimal_policy}.
Later we will specialize~\eqref{optimal_policy} to two specific victim learners: the tabular certainty equivalence learner (TCE) and the certainty-equivalent linear quadratic regulator (LQR).

\section{Policy Poisoning} 
We study policy poisoning attacks on model-based batch RL learners.
Our threat model is as follows:

\textbf{Knowledge of the attacker.} 
The attacker has access to the original training set $D^0 = (s_t,a_t,r^0_t,s_t^\prime)_{t=0:T-1}$.
The attacker knows the model-based RL learner's algorithm.
Importantly, the attacker knows how the learner estimates the environment, i.e.,~\eqref{MLE} and~\eqref{MMSE}. In the case~\eqref{MLE} has multiple maximizers, we assume the attacker knows exactly the $\hat P$ that the learner picks.

\textbf{Available actions of the attacker.} 
The attacker is allowed to arbitrarily modify the rewards 
$\bv{r}^0=(r^0_0, ...,r^0_{T-1})$ 
in $D^0$ into $\bv{r}=(r_0,..., r_{T-1})$. 
As we show later, changing $r$'s but not $s,a,s'$ is sufficient for policy poisoning.

\textbf{Attacker's goals.} 
The attacker has a pre-specified target policy $\pi^\dagger$. 
The attack goals are to (1) force the learner to learn $\pi^\dagger$, (2) minimize attack cost $\|\bv r - \bv r^0\|_\alpha$ under an $\alpha$-norm chosen by the attacker.

Given the threat model, we  can formulate policy poisoning as a bi-level optimization problem\footnote{As we will show, the constraint~\eqref{eq:attack_goal} could lead to an open feasible set (e.g., in~\eqref{TCE:goal}) for the attack optimization~\eqref{eq:objective}-\eqref{eq:attack_goal}, on which the minimum of the objective function~\eqref{eq:objective} may not be well-defined. In the case~\eqref{eq:attack_goal} induces an open set, we will consider instead a closed subset of it, and optimize over the subset. How to construct the closed subset will be made clear for concrete learners  later.}:
\begin{eqnarray}
\min_{\bv r,\hat R} &&\|\bv r - \bv r^0\|_\alpha \label{eq:objective}\\
\mbox{s.t. } &&\hat R = \argmin_{R} \sum_{t=0}^{T-1} (r_t - R(s_t,a_t))^2 \label{eq:estimate_R} \\
&&\{\pi^\dagger\} = \argmax_{\pi : \S \mapsto \A} \mathbb{E}_{\hat{P}} \sum_{\tau=0}^\infty \gamma^\tau\hat R(s_\tau,\pi(s_\tau)).  \label{eq:attack_goal}
\end{eqnarray}
The $\hat P$ in~\eqref{eq:attack_goal} does not involve $\bv r$ and is precomputed from $D^0$.
The singleton set $\{\pi^\dagger\}$ on the LHS of~\eqref{eq:attack_goal} ensures that the target policy is learned uniquely, i.e., there are no other optimal policies tied with $\pi^\dagger$.
Next, we instantiate this attack formulation to two representative model-based RL victims.

\subsection{Poisoning a Tabular Certainty Equivalence (TCE) Victim}\label{sec:poison_TCE}
In tabular certainty equivalence (TCE), the environment is a Markov Decision Process (MDP) with finite state and action space.
Given original data $D^0=(s_t,a_t,r_t^0, s_t^\prime )_{0:T-1}$, let $T_{s,a}=\{t\mid s_t=s,a_t=a\}$, the time indexes of all training items for which action $a$ is taken at state $s$. We assume $T_{s,a}\ge 1$, $\forall s,a$, i.e., each state-action pair appears at least once in $D^0$. This condition is needed to ensure that the learner's estimate $\hat P$ and $\hat R$ exist. Remember that we require~\eqref{MMSE} to have a unique solution. For the TCE learner, $\hat R$ is unique as long as it exists. Therefore, $T_{s,a}\ge 1$, $\forall s,a$ is sufficient to guarantee a unique solution to~\eqref{MMSE}.
Let the poisoned data be $D=(s_t,a_t,r_t, s_t^\prime)_{0:T-1}$.
Instantiating model estimation~\eqref{MLE},~\eqref{MMSE} for TCE, we have
\begin{equation}
\label{eq:tabular_transition}
\hat P(s^\prime\mid s,a)=\frac{1}{|T_{s,a}|}\sum_{t\in T_{s,a}}\ind{s_t^\prime=s^\prime},
\end{equation}
where $\ind{}$ is the indicator function, and
\begin{equation}
\label{eq:tabular_reward}
\hat R(s,a)=\frac{1}{|T_{s,a}|}\sum_{t\in T_{s,a}}r_t.
\end{equation}
The TCE learner uses $\hat P, \hat R$ to form an estimated MDP $\hat M$, then solves for the optimal policy $\hat\pi$ with respect to $\hat M$ using the Bellman equation~\eqref{eq:Bellman}.
The attack goal \eqref{eq:attack_goal} can be naively characterized by
\begin{equation}\label{TCE:goal}
Q(s,\pi^\dagger(s))>Q(s,a), \forall s\in \S, \forall a\neq \pi^\dagger(s).
\end{equation}
However, due to the strict inequality,~\eqref{TCE:goal} induces an open set in the $Q$ space, on which the minimum of~\eqref{eq:objective} may not be well-defined. 
Instead, we require a stronger attack goal which leads to a closed subset in the $Q$ space. This is defined as the following $\epsilon$-robust target $Q$ polytope.
\begin{definition}{($\epsilon$-robust target $Q$ polytope)} The set of $\epsilon$-robust $Q$ functions induced by a target policy $\pi^\dagger$ is the polytope 
\begin{equation}\label{TCE:polytope}
\mathcal Q_\epsilon(\pi^\dagger)=\{Q: Q(s,\pi^\dagger(s))\ge Q(s,a)+\epsilon, \forall s\in  \S,\forall a\neq \pi^\dagger(s)\}
\end{equation}
for a fixed $\epsilon>0$.
\end{definition}
The margin parameter $\epsilon$ ensures that $\pi^\dagger$ is the unique optimal policy for any $Q$ in the polytope.
We now have a solvable attack problem, where the attacker wants to force the victim's $Q$ function into the $\epsilon$-robust target $Q$ polytope $\mathcal Q_\epsilon(\pi^\dagger)$:
\begin{eqnarray}\label{attack:formulation}
  \min_{\bv{r} \in \mathbb{R}^T, \hat R, Q \in \mathbb{R}^{|\S| \times|\A|}}  &&\|\bv r-\bv r^0\|_\alpha\label{attack:objective}\\ 
\mbox{s.t. } && \hat R(s,a)=\frac{1}{|T_{s,a}|}\sum_{t\in T_{s,a}}r_t \label{attack:c1}\\
             &&  Q(s, a)=\hat R(s, a)+\gamma \sum_{s^{\prime}} \hat P\left(s^{\prime} | s, a\right) Q\left(s^{\prime}, \pi^\dagger(s^\prime)\right), \forall s, \forall a\label{attack:c2}\\
  &&Q(s,\pi^\dagger(s))\ge Q(s,a)+\epsilon, \forall s\in  \S,\forall a\neq \pi^\dagger(s).\label{attack:c3}
\end{eqnarray}
The constraint~\eqref{attack:c2} enforces Bellman optimality on the value function $Q$, in which $\max_{a'\in\A}Q(s',a')$ is replaced by $Q\left(s^{\prime}, \pi^\dagger(s^\prime)\right)$, since the target policy is guaranteed to be optimal by \eqref{attack:c3}. 
Note that problem \eqref{attack:objective}-\eqref{attack:c3}  is a convex program with linear constraints given that $\alpha\geq 1$, thus could be solved to global optimality.
However, we point out that~\eqref{attack:formulation}-\eqref{attack:c3} is a more stringent formulation than~\eqref{eq:objective}-\eqref{eq:attack_goal} due to the additional margin parameter $\epsilon$ we introduced. 
The feasible set of~\eqref{attack:formulation}-\eqref{attack:c3} is a subset of~\eqref{eq:objective}-\eqref{eq:attack_goal}. 
Therefore, the optimal solution to~\eqref{attack:formulation}-\eqref{attack:c3} could in general be a sub-optimal solution to~\eqref{eq:objective}-\eqref{eq:attack_goal} with potentially larger objective value.
We now study a few theoretical properties of policy poisoning on TCE.
All proofs are in the appendix.
First of all, the attack is always feasible.

\begin{restatable}{proposition}{feasibility}
\label{feasibility}
The attack problem~\eqref{attack:formulation}-\eqref{attack:c3} is always feasible for any target policy $\pi^\dagger$.
\end{restatable}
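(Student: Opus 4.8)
The plan is to exploit the two degrees of freedom the attacker actually controls—the rewards $\bv r$ (hence $\hat R$ through the averaging map) and, through the Bellman constraint, the induced $Q$ function—and to exhibit a single explicit feasible triple $(\bv r,\hat R,Q)$. Rather than attacking the constraints in the order written, I would work \emph{backwards}: start from a $Q$ function that already sits inside the $\epsilon$-robust target $Q$ polytope, then solve for the $\hat R$ and $\bv r$ that reproduce it.

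First I would note that the polytope $\mathcal{Q}_\epsilon(\pi^\dagger)$ of \eqref{TCE:polytope} is nonempty by an explicit construction: setting $Q(s,\pi^\dagger(s))=\epsilon$ and $Q(s,a)=0$ for every $a\neq\pi^\dagger(s)$ gives $Q(s,\pi^\dagger(s))\ge Q(s,a)+\epsilon$ at every state, so constraint \eqref{attack:c3} holds. Fix any such $Q$. Given this $Q$, I would then \emph{define} $\hat R$ so that the Bellman constraint \eqref{attack:c2} holds by construction, namely $\hat R(s,a)=Q(s,a)-\gamma\sum_{s'}\hat P(s'\mid s,a)\,Q(s',\pi^\dagger(s'))$. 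Since $\hat P$ is fixed and precomputed from $D^0$ while $Q$ is already chosen, the right-hand side is a well-defined real number for every $(s,a)$, so \eqref{attack:c2} is satisfied automatically.

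Finally I would realize $\hat R$ through the rewards. The key point is that the reward-averaging map \eqref{attack:c1} is surjective onto all of $\mathbb{R}^{|\S|\times|\A|}$ precisely because $|T_{s,a}|\ge 1$ for every $(s,a)$: taking $r_t=\hat R(s_t,a_t)$ for each $t$ makes the empirical mean over $T_{s,a}$ equal $\hat R(s,a)$, so \eqref{attack:c1} holds. The resulting triple $(\bv r,\hat R,Q)$ then satisfies \eqref{attack:c1}, \eqref{attack:c2}, and \eqref{attack:c3} simultaneously, which establishes feasibility for the arbitrarily chosen $\pi^\dagger$.

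The construction is essentially obstacle-free, which is what makes the proposition hold for \emph{any} target policy; the only thing worth checking carefully is the surjectivity of the reward-to-$\hat R$ map, and this is exactly where the standing assumption $T_{s,a}\ge 1$ (each state–action pair is observed at least once) is used. I would also emphasize that no constraint couples the choices across different state–action pairs except through the fixed $\hat P$, so when inverting \eqref{attack:c2} and \eqref{attack:c1} there is no hidden consistency condition to satisfy.
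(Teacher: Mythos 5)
Your proof is correct, and it reaches exactly the same witness as the paper: the same $Q$ (value $\epsilon$ on target actions, $0$ elsewhere), hence the same $\hat R$ (your formula evaluates to $(1-\gamma)\epsilon$ on target actions and $-\gamma\epsilon$ elsewhere, since $\sum_{s'}\hat P(s'\mid s,a)\,Q(s',\pi^\dagger(s'))=\epsilon$), and the same rewards $r_t=\hat R(s_t,a_t)$. The difference is in how the $Q$-to-$\hat R$ step is justified. The paper routes it through Proposition~\ref{rQ:bijection}, a bijection between reward functions and $Q$ tables induced by the \emph{max-form} Bellman optimality equation, proved via the Banach fixed point theorem; this forces the paper to arrange (and remark) that $\max_a Q(s,a)$ is attained at $\pi^\dagger(s)$ so that the bijection-derived reward also satisfies~\eqref{attack:c2}. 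You instead observe that~\eqref{attack:c2} has already had the max replaced by $Q(s',\pi^\dagger(s'))$, so for a fixed $Q$ it is simply an explicit affine formula for $\hat R$, and you define $\hat R$ by inverting it--no fixed-point argument and no coincidence-of-max check needed. Your route is the more elementary and self-contained one for this proposition; the paper's buys reusable machinery, since Proposition~\ref{rQ:bijection} (and the contraction argument behind it) is needed again in the proof of Theorem~\ref{TCE:bound-cost}, where the constructed $Q'$ is not constant across states and the max-form bijection genuinely matters. Your closing observations--that surjectivity of the averaging map~\eqref{attack:c1} is exactly where $|T_{s,a}|\ge 1$ enters, and that no cross-$(s,a)$ consistency condition arises--are accurate and slightly more explicit than what the paper states.
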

Proposition \ref{feasibility} states that for any target policy $\pi^\dagger$, there exists a perturbation on the rewards that teaches the learner that policy. Therefore, the attacker changing $r$'s but not $s,a,s'$ is already sufficient for policy poisoning.

We next bound the attack cost.
Let the MDP estimated on the clean data be $\hat M^0=(\S,\A,\hat P, \hat R^0, \gamma)$.
Let $Q^0$ be the $Q$ function that satisfies the Bellman optimality equation on $\hat M^0$. Define $\Delta(\epsilon)=\max_{s\in\S}[\max_{a\neq \pi^\dagger(s)} Q^0(s,a)-Q^0(s,\pi^\dagger(s))+\epsilon]_+$, where $[]_+$ takes the maximum over 0. 
Intuitively, $\Delta(\epsilon)$ measures how suboptimal the target policy $\pi^\dagger$ is compared to the clean optimal policy $\pi^0$ learned on $\hat M^0$, up to a margin parameter $\epsilon$.
\begin{restatable}{theorem}{TCEboundcost}
\label{TCE:bound-cost}
Assume $\alpha\ge 1$ in~\eqref{attack:objective}. Let $\bv r^*$, $\hat R^*$ and $Q^*$ be an optimal solution to~\eqref{attack:formulation}-\eqref{attack:c3}, then
\begin{equation}
\frac{1}{2}(1-\gamma)\Delta(\epsilon)\left(\min_{s,a} {|T_{s,a}|}\right)^\frac{1}{\alpha}\le \|\bv r^*-\bv r^0\|_\alpha\le \frac{1}{2}(1+\gamma)\Delta(\epsilon) T^{\frac{1}{\alpha}}.
\end{equation}
\end{restatable}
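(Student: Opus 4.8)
The plan is to control the attack cost by relaying it through two intermediate objects, the reward table $\hat R$ and the $Q$ table, exploiting that the raw rewards $\bv r$, the estimate $\hat R$, and $Q$ are tied together by the averaging \eqref{attack:c1} and the Bellman system \eqref{attack:c2}. The first thing I would establish is that every feasible $Q^*$ is \emph{exactly} the optimal $Q$-function of the poisoned MDP $\hat M^*=(\S,\A,\hat P,\hat R^*,\gamma)$: restricting \eqref{attack:c2} to $a=\pi^\dagger(s)$ shows that $Q^*(\cdot,\pi^\dagger(\cdot))$ is the value of $\pi^\dagger$ on $\hat M^*$, and then \eqref{attack:c3} makes $\pi^\dagger$ greedy with respect to its own $Q$, hence optimal. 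Symmetrically, $Q^0$ is the optimal $Q$-function of $\hat M^0$. Both sides of the theorem then reduce to comparing two optimal $Q$-functions that share the same $\hat P,\gamma$ and differ only through the reward.

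For the lower bound I would first convert the definition of $\Delta(\epsilon)$ into a statement about $\|Q^*-Q^0\|_\infty$ (the bound is trivial when $\Delta(\epsilon)=0$, so assume it positive). Pick the state $s^*$ and action $a^*\neq\pi^\dagger(s^*)$ attaining $\Delta(\epsilon)$, so that $Q^0(s^*,a^*)-Q^0(s^*,\pi^\dagger(s^*))=\Delta(\epsilon)-\epsilon$ on the clean estimate, while feasibility \eqref{attack:c3} forces $Q^*(s^*,\pi^\dagger(s^*))-Q^*(s^*,a^*)\ge\epsilon$. Adding these gives $|Q^*(s^*,\pi^\dagger(s^*))-Q^0(s^*,\pi^\dagger(s^*))|+|Q^*(s^*,a^*)-Q^0(s^*,a^*)|\ge\Delta(\epsilon)$, so one of the two terms is at least $\Delta(\epsilon)/2$ and hence $\|Q^*-Q^0\|_\infty\ge\Delta(\epsilon)/2$; this is the origin of the factor $\tfrac12$. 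Next I would invoke the standard contraction estimate for optimal $Q$-functions, $\|Q^*-Q^0\|_\infty\le\frac{1}{1-\gamma}\|\hat R^*-\hat R^0\|_\infty$ (the two Bellman optimality operators differ additively by $\hat R^*-\hat R^0$ and are $\gamma$-contractions), which yields $\|\hat R^*-\hat R^0\|_\infty\ge\tfrac12(1-\gamma)\Delta(\epsilon)$. Finally I convert back to $\bv r$: from \eqref{attack:c1}, $|\hat R^*(s,a)-\hat R^0(s,a)|\le\frac{1}{|T_{s,a}|}\sum_{t\in T_{s,a}}|r^*_t-r^0_t|$, and Hölder gives $|\hat R^*(s,a)-\hat R^0(s,a)|\le|T_{s,a}|^{-1/\alpha}\|\bv r^*-\bv r^0\|_\alpha$; maximizing over $(s,a)$ produces $\|\bv r^*-\bv r^0\|_\alpha\ge(\min_{s,a}|T_{s,a}|)^{1/\alpha}\|\hat R^*-\hat R^0\|_\infty$, and the three estimates chain into the claimed lower bound.

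For the upper bound I would instead exhibit a single feasible solution and bound its cost, since $\bv r^*$ is optimal. I construct the target table directly: set $Q(s,\pi^\dagger(s))=Q^0(s,\pi^\dagger(s))+\tfrac12\Delta(\epsilon)$ and $Q(s,a)=Q^0(s,a)-\tfrac12\Delta(\epsilon)$ for $a\neq\pi^\dagger(s)$. The definition of $\Delta(\epsilon)$ makes this $Q$ satisfy the margin constraint \eqref{attack:c3}, and by construction $\|Q-Q^0\|_\infty=\tfrac12\Delta(\epsilon)$. I then define $\hat R$ through \eqref{attack:c2} and realize it by adding the constant $\hat R(s,a)-\hat R^0(s,a)$ to every reward in $T_{s,a}$, which meets \eqref{attack:c1}. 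The crucial estimate is $\|\hat R-\hat R^0\|_\infty\le(1+\gamma)\|Q-Q^0\|_\infty$: subtracting the two reward-from-$Q$ expressions, the instantaneous term contributes $\|Q-Q^0\|_\infty$ and the discounted term contributes $\gamma|\max_{a'}Q(s',a')-\max_{a'}Q^0(s',a')|\le\gamma\|Q-Q^0\|_\infty$, using that $\pi^\dagger$ is greedy for $Q$; this is where $(1+\gamma)$ appears. Spreading each per-pair shift uniformly over $T_{s,a}$ and using $\sum_{s,a}|T_{s,a}|=T$ gives $\|\bv r-\bv r^0\|_\alpha=\big(\sum_{s,a}|T_{s,a}|\,|\hat R(s,a)-\hat R^0(s,a)|^\alpha\big)^{1/\alpha}\le\|\hat R-\hat R^0\|_\infty\,T^{1/\alpha}\le\tfrac12(1+\gamma)\Delta(\epsilon)T^{1/\alpha}$, and optimality of $\bv r^*$ closes the argument.

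The main obstacle I anticipate is the identification in the first paragraph: recognizing that \eqref{attack:c2} together with the strict-margin constraint \eqref{attack:c3} upgrades $Q^*$ from merely ``the $Q$-function of $\pi^\dagger$'' to ``the optimal $Q$-function of $\hat M^*$'', since otherwise the two Bellman operators are not directly comparable and the clean $\frac{1}{1-\gamma}$ and $(1+\gamma)$ estimates are unavailable. A secondary care point is the asymmetry between the directions — the lower bound pays a factor $\frac{1}{1-\gamma}$ whereas the upper bound only gains $(1+\gamma)$ — which is exactly why the upper-bound construction must be performed at the $Q$ level (choose $Q$, then derive $\hat R$) rather than by shifting rewards directly, as a direct reward shift would only deliver the weaker constant $\frac{1}{1-\gamma}$.
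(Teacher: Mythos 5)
Your proof is correct, and its overall architecture coincides with the paper's: the same shifted table (target actions raised by $\tfrac{1}{2}\Delta(\epsilon)$, all others lowered by $\tfrac{1}{2}\Delta(\epsilon)$) certifies the upper bound, the same dichotomy at the pair attaining $\Delta(\epsilon)$ (one of the two $Q$-deviations must be at least $\tfrac{1}{2}\Delta(\epsilon)$) launches the lower bound, and both directions are relayed through the $(1\pm\gamma)$ sensitivity estimate, which is exactly the paper's \lemref{rQ:sensitivity}. You also rightly make explicit a point the paper leaves implicit: constraints \eqref{attack:c2}--\eqref{attack:c3} together force any feasible $Q$ to be the genuine optimal $Q$-function of $(\S,\A,\hat P,\hat R,\gamma)$, which is what licenses applying the sensitivity lemma at all.

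The genuinely different step is how you pass from $\|\hat R^*-\hat R^0\|_\infty$ back to $\|\bv r^*-\bv r^0\|_\alpha$ in the lower bound. The paper does this structurally: it first proves a smoothing inequality (\lemref{lem:allequal}), uses it to show that some optimal attack shifts all rewards within each $T_{s,a}$ by a common amount (\lemref{optimal-attack-form}), and only then restricts the cost to the argmax pair $(s^*,a^*)$, where every residual equals $|\hat R^*(s^*,a^*)-\hat R^0(s^*,a^*)|$. You bypass both lemmas: applying the triangle inequality and H\"older directly to the averaging constraint \eqref{attack:c1} gives $|\hat R^*(s,a)-\hat R^0(s,a)|\le |T_{s,a}|^{-1/\alpha}\|\bv r^*-\bv r^0\|_\alpha$ for \emph{every} feasible $\bv r^*$, not just uniform-shift ones, and the bound follows by evaluating at the argmax pair. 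This is a real simplification — it removes two lemmas and sidesteps the non-uniqueness caveat the paper must append at the end of its proof (that it suffices to bound an optimal solution of the special form). What the paper's longer route buys is the standalone structural fact of \lemref{optimal-attack-form}, that per-pair uniform shifts are optimal, which has independent interest but is not needed for the theorem. Your closing observation about the asymmetry of the two directions (contraction gives $\frac{1}{1-\gamma}$ one way but only $1+\gamma$ the other, forcing the upper-bound construction to be done at the $Q$ level) is also accurate and matches why the paper constructs $Q'$ first and derives $\hat R'$ from it.
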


\begin{corollary}
If $\alpha=1$, then the optimal attack cost is $O(\Delta(\epsilon) T)$. If $\alpha=2$, then the optimal attack cost is $O(\Delta(\epsilon) \sqrt{T})$. If $\alpha=\infty$, then the optimal attack cost is $O(\Delta(\epsilon))$.
\end{corollary}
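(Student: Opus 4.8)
The plan is to prove the two-sided cost bound of \thmref{TCE:bound-cost}; the corollary then follows immediately by substituting $\alpha\in\{1,2,\infty\}$ and noting $T^{1/\alpha}$ equals $T$, $\sqrt{T}$, $1$ respectively. The proof rests on two ingredients. First, a cost-reduction step: for any reward vector $\bv r$, writing $\delta R(s,a)=\hat R(s,a)-\hat R^0(s,a)=\frac{1}{|T_{s,a}|}\sum_{t\in T_{s,a}}(r_t-r_t^0)$ for the induced change in the averaged reward \eqref{attack:c1}, the power-mean (Jensen) inequality gives $\sum_{t\in T_{s,a}}|r_t-r_t^0|^\alpha\ge |T_{s,a}|\,|\delta R(s,a)|^\alpha$, with equality when the change is spread equally within the group. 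Summing over $(s,a)$ and using $\sum_{s,a}|T_{s,a}|=T$ lets me sandwich the item-level cost $\|\bv r-\bv r^0\|_\alpha$ between weighted norms of the vector $\delta R$. Second, a Bellman perturbation step: subtracting the clean Bellman optimality equation for $Q^0$ from the poisoned constraint \eqref{attack:c2} yields $\delta Q(s,a)=\delta R(s,a)+\gamma\sum_{s'}\hat P(s'|s,a)\,\delta V(s')$, where $\delta V(s')=\max_a Q(s',a)-\max_a Q^0(s',a)$ (the poisoned side uses $Q(s',\pi^\dagger(s'))=\max_a Q(s',a)$ by feasibility of \eqref{attack:c3}). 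Since $\max$ is $1$-Lipschitz, $|\delta V|\le\max_{s,a}|\delta Q|$, and taking suprema gives $(1-\gamma)\max_{s,a}|\delta Q|\le\max_{s,a}|\delta R|\le(1+\gamma)\max_{s,a}|\delta Q|$.

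For the lower bound I would evaluate feasibility at the state $s^\star$ attaining $\Delta(\epsilon)$ and the action $a^\star$ attaining its inner max. Constraint \eqref{attack:c3} forces $Q^*(s^\star,\pi^\dagger(s^\star))-Q^*(s^\star,a^\star)\ge\epsilon$, and subtracting the corresponding clean gap converts this into $\delta Q(s^\star,\pi^\dagger(s^\star))-\delta Q(s^\star,a^\star)\ge\Delta(\epsilon)$; hence $\max_{s,a}|\delta Q|\ge\tfrac12\Delta(\epsilon)$. The Bellman sandwich then gives $\max_{s,a}|\delta R|\ge\tfrac12(1-\gamma)\Delta(\epsilon)$, and keeping only the single largest-$|\delta R|$ group in the cost-reduction inequality, weighted by at least $\min_{s,a}|T_{s,a}|$, produces the claimed lower bound after taking the $\alpha$-th root.

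For the upper bound I would exhibit one feasible point and bound its cost. The construction is a symmetric shift of $Q^0$: with per-state gap $g(s)=[\max_{a\neq\pi^\dagger(s)}Q^0(s,a)-Q^0(s,\pi^\dagger(s))+\epsilon]_+$ (so $\Delta(\epsilon)=\max_s g(s)$), set $\tilde Q(s,\pi^\dagger(s))=Q^0(s,\pi^\dagger(s))+g(s)/2$ and $\tilde Q(s,a)=Q^0(s,a)-g(s)/2$ for $a\neq\pi^\dagger(s)$. A short check shows $\tilde Q$ meets the margin \eqref{attack:c3}, so $\pi^\dagger$ is its optimal policy; defining $\hat R$ from $\tilde Q$ through \eqref{attack:c2} and recovering $\bv r$ by adding $\delta R(s,a)$ uniformly to each reward in $T_{s,a}$ makes $(\bv r,\hat R,\tilde Q)$ feasible. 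By construction $\max_{s,a}|\tilde Q-Q^0|=\tfrac12\Delta(\epsilon)$, so the Bellman sandwich gives $\max_{s,a}|\delta R|\le\tfrac12(1+\gamma)\Delta(\epsilon)$, and equal spreading with $\sum_{s,a}|T_{s,a}|=T$ bounds the cost of this feasible point, hence the optimum, by $\tfrac12(1+\gamma)\Delta(\epsilon)\,T^{1/\alpha}$.

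I expect the main obstacle to be the Bellman perturbation step, specifically the asymmetry that the clean value $V^0$ is a max over actions under the clean optimal policy while the poisoned value is taken along $\pi^\dagger$; the $1$-Lipschitzness of $\max$ is what rescues the $(1\pm\gamma)$ contraction-type factors, and one must separately verify that feasibility of $\tilde Q$ and the degenerate case $g(s)=0$ (no change, $\pi^\dagger$ already $\epsilon$-robust) do not break either bound. The remaining care is bookkeeping the $\alpha=\infty$ case, where $T^{1/\alpha}=1$ and $(\min_{s,a}|T_{s,a}|)^{1/\alpha}=1$, so both bounds reduce to comparing $\|\bv r-\bv r^0\|_\infty$ directly with $\max_{s,a}|\delta R|$.
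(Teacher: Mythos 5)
Your proof is correct, and its overall architecture matches the paper's proof of \thmref{TCE:bound-cost}, from which the corollary follows by the same substitution $\alpha\in\{1,2,\infty\}$ you make. Your Bellman perturbation sandwich is exactly the paper's \lemref{rQ:sensitivity} (proved the same way, via contraction/1-Lipschitzness of the max), your lower-bound argument at the state attaining $\Delta(\epsilon)$ — forcing $\|Q^*-Q^0\|_\infty\ge\Delta(\epsilon)/2$ — is the paper's case analysis in cleaner form, and your feasible point for the upper bound is the paper's symmetric shift of $Q^0$, up to your per-state gaps $g(s)/2$ in place of the paper's global $\Delta(\epsilon)/2$, an immaterial variation since both give $\|\tilde Q-Q^0\|_\infty=\Delta(\epsilon)/2$. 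The one genuine difference is the cost-reduction step. The paper proves two lemmas: \lemref{lem:allequal} (within-group averaging never increases the $\ell_\alpha$ norm) and, via an iterative exchange argument, \lemref{optimal-attack-form} (some optimal attack shifts every reward in $T_{s,a}$ by a common amount $\psi(s,a)$), and then bounds only optimizers of that special form, with a closing remark that this is without loss because all optima share the same cost. You instead apply the power-mean/Jensen inequality $\sum_{t\in T_{s,a}}|r_t-r_t^0|^\alpha\ge |T_{s,a}|\,|\delta R(s,a)|^\alpha$ directly to \emph{every} feasible $\bv r$. This is shorter and avoids the "restrict to uniform-shift optimizers" detour entirely; what the paper's route buys in exchange is an explicit structural characterization of an optimal attack (uniform per-pair shifts), which is of independent interest beyond the bound itself. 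Both routes need the small separate bookkeeping for $\alpha=\infty$ that you flag.
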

Note that both the upper and lower bounds on the attack cost are linear with respect to $\Delta(\epsilon)$, which can be estimated directly from the clean training set $D^0$. This allows the attacker to easily estimate its attack cost before actually solving the attack problem.

\subsection{Poisoning a Linear Quadratic Regulator (LQR) Victim}\label{sec:poison_LQR}
As the second example, we study an LQR victim that performs system identification from a batch training set~\cite{dean2017sample}.
Let the linear dynamical system be
\begin{equation}
s_{t+1} =As_t+Ba_t+w_t, \forall t\ge 0\label{eq:linear_dynamics},
\end{equation}
where $A\in \R^{n\times n}, B\in\R^{n\times m}$, $s_t\in\R^n$ is the state, $a_t\in\R^m$ is the control signal, and $w_t\sim\mathcal{N}(\bv 0,\sigma^2 I)$ is a Gaussian noise.
When the agent takes action $a$ at state $s$, it suffers a quadratic loss of the general form 
\begin{equation}\label{LQR:loss}
L(s,a)=\frac{1}{2}s^\top Qs+q^\top s+a^\top Ra+c
\end{equation}
for some $Q\succeq0$, $R\succ0$, $q\in \R^n$ and $c\in \R$. 
Here we have redefined the symbols $Q$ and $R$ in order to conform with the notation convention in LQR:
now we use $Q$ for the quadratic loss matrix associated with state, not the action-value function; we use $R$ for the quadratic loss matrix associated with action, not the reward function.
The previous reward function $R(s,a)$ in general MDP (section~\ref{prelim}) is now equivalent to the negative loss $-L(s,a)$.
This form of loss captures various LQR control problems.
Note that the above linear dynamical system can be viewed as an MDP with transition kernel $P(s^\prime\mid s,a)=\mathcal{N}(As+Ba,\sigma^2I)$ and reward function $-L(s,a)$.
The environment is thus characterized by matrices $A$, $B$ (for transition kernel) and $Q$, $R$, $q$, $c$ (for reward function), which are all unknown to the learner.

We assume the clean training data $D^0=(s_t,a_t,r_t^0, s_{t+1})_{0:T-1}$ was generated by running the linear system for multiple episodes following some random policy~\cite{dean2017sample}.
Let the poisoned data be $D=(s_t,a_t,r_t, s_{t+1})_{0:T-1}$.
Instantiating model estimation~\eqref{MLE},~\eqref{MMSE}, the learner performs system identification on the poisoned data:
\begin{equation}\label{LQR:estimate-transition}
(\hat{A}, \hat{B}) \in \argmin _{(A, B)}  \frac{1}{2}\sum_{t=0}^{T-1}\left\|A s_{t}+B a_{t}-s_{t+1} \right\|_{2}^{2}
\end{equation}
\begin{equation}\label{LQR:estimate-reward}
(\hat{Q}, \hat{R},\hat q,\hat c)= \argmin _{(Q\succeq 0, R\succeq \epsilon I,q,c)}  \frac{1}{2}\sum_{t=0}^{T-1}\left\|\frac{1}{2}s_t^\top Qs_t+q^\top s_t+a_t^\top Ra_t+c+r_t\right\|_{2}^{2}.
\end{equation}
Note that in~\eqref{LQR:estimate-reward}, the learner uses a stronger constraint $R\succeq \epsilon I$ than the original constraint $R\succ0$, which guarantees that the minimizer can be achieved. The conditions to further guarantee~\eqref{LQR:estimate-reward} having a unique solution depend on the property of certain matrices formed by the clean training set $D^0$, which we defer to appendix~\ref{LQR:condition_uniqueness}.

The learner then computes the optimal control policy with respect to $\hat A$, $\hat B$, $\hat Q$, $\hat R$, $\hat q$ and $\hat c$.
We assume the learner solves a discounted version of LQR control
\begin{eqnarray}\label{LQR}
  \max_{\pi:\S\mapsto \A}  &&-\mathbb{E}\left[\sum_{\tau=0}^\infty\gamma^\tau (\frac{1}{2}s_\tau^\top \hat Qs_\tau+\hat q^\top s_\tau+\pi(s_\tau)^\top \hat R\pi(s_\tau)+\hat c)\right]\label{objective-LQR}\\ 
  \text { s.t. } &&  s_{\tau+1} =\hat As_\tau+\hat B\pi(s_\tau)+w_\tau, \forall \tau\ge0.
\end{eqnarray}
where the expectation is over $w_\tau$.
It is known that the control problem has a closed-form solution $\hat a_\tau=\hat\pi(s_\tau)=Ks_\tau+k$, where
\begin{equation}\label{LQR:K}
K=-\gamma\left(\hat R+\gamma \hat B^{\top} X \hat B\right)^{-1} \hat B^{\top} X \hat A,\quad k = - \gamma(\hat R+\gamma \hat B^{\top}X\hat B)^{-1} \hat B^{\top} x.
\end{equation}
Here $X\succeq 0$ is the unique solution of the Algebraic Riccati Equation,
\begin{equation}\label{LQR:Riccatti}
X=\gamma \hat A^{\top} X \hat A-\gamma^{2} \hat A^{\top} X \hat B\left(\hat R+\gamma \hat B^{\top} X \hat B\right)^{-1} \hat B^{\top} X \hat A+\hat Q,
\end{equation}
and $x$ is a vector that satisfies
\begin{equation}\label{LQR:Riccatti-2}
x=\hat q+\gamma (\hat A+\hat BK)^\top x.
\end{equation}

The attacker aims to force the victim into taking target action $\pi^\dagger(s), \forall s\in\R^n$. 
Note that in LQR, the attacker cannot arbitrarily choose $\pi^\dagger$, as the optimal control policy $K$ and $k$ enforce a linear structural constraint between $\pi^\dagger(s)$ and $s$. One can easily see that the target action must obey $\pi^\dagger(s)=K^\dagger s+k^\dagger$ for some $(K^\dagger,k^\dagger)$ in order to achieve successful attack. 
Therefore we must  assume instead that the attacker has a target policy specified by a pair $(K^\dagger, k^\dagger)$. 
However, an arbitrarily linear policy may still not be feasible. 
A target policy $(K^\dagger, k^\dagger)$ is feasible if and only if it is produced by solving some Riccati equation, namely, it must lie in the following set:
\begin{equation}
\{(K,k): \exists Q\succeq 0,R\succeq \epsilon I, q\in\R^n, c\in\R, \text{such that}~\eqref{LQR:K},~\eqref{LQR:Riccatti},\text{ and}~\eqref{LQR:Riccatti-2} \text{ are satisfied}\}.
\end{equation}
Therefore, to guarantee feasibility, we assume the attacker always picks the target policy $(K^\dagger, k^\dagger)$ by solving an LQR problem with some attacker-defined loss function.
We can now pose the policy poisoning attack problem: 
\begin{eqnarray}\label{attack:LQR}
  \min_{\bv r,  \hat Q, \hat R, \hat q, \hat c, X, x}  &&\|\bv r-\bv r^0\|_\alpha\label{attack:objective-LQR}\\ 
  \text { s.t. }
  &&-\gamma\left(\hat R+\gamma \hat B^{\top} X \hat B\right)^{-1} \hat B^{\top} X \hat A=K^\dagger\label{attack-LQR-c1}\\
  && -\gamma\left(\hat R+\gamma \hat B^{\top} X \hat B\right)^{-1} \hat B^{\top} x = k^\dagger\label{attack-LQR-c2}\\
  && X=\gamma \hat A^{\top} X \hat A-\gamma^{2} \hat A^{\top} X \hat B\left(\hat R+\gamma \hat B^{\top} X \hat B\right)^{-1} \hat B^{\top} X \hat A+\hat Q\label{attack-LQR-c3}\\
   && x =   \hat q +\gamma(\hat A+\hat BK^\dagger)^\top x\label{attack-LQR-c4}\\
  &&(\hat{Q}, \hat{R},\hat q,\hat c) =\argmin _{(Q\succeq 0, R\succeq \epsilon I,q,c)}  \sum_{t=0}^{T-1}\left\|\frac{1}{2}s_t^\top Qs_t+q^\top s_t+a_t^\top Ra_t+c+r_t\right\|_{2}^{2}
\label{attack-LQR-c5}\\
  &&X\succeq 0\label{attack-LQR-c6}.
\end{eqnarray}
Note that the estimated transition matrices $\hat A$, $\hat B$ are not optimization variables because the attacker can only modify the rewards, which will not change the learner's estimate on $\hat A$ and $\hat B$.
The attack optimization~\eqref{attack:objective-LQR}-\eqref{attack-LQR-c6} is hard to solve due to the constraint~\eqref{attack-LQR-c5} itself being a semi-definite program (SDP). 
To overcome the difficulty, we pull all the positive semi-definite constraints out of the lower-level optimization. This leads to a more stringent surrogate attack optimization (see appendix~\ref{convex:surrogate}). 
Solving the surrogate attack problem, whose feasible region is a subset of the original problem, in general gives a suboptimal solution to~\eqref{attack:objective-LQR}-\eqref{attack-LQR-c6}. But it comes with one advantage: convexity.

\section{Experiments}
Throughout the experiments, we use CVXPY~\cite{cvxpy} to implement the optimization. 
All code can be found in \href{https://github.com/myzwisc/PPRL\_NeurIPS19}{https://github.com/myzwisc/PPRL\_NeurIPS19}.

\subsection{Policy Poisoning Attack on TCE Victim}

\textbf{Experiment 1.}
We consider a simple MDP with two states $A,B$ and two actions: \textit{stay} in the same state or \textit{move} to the other state, shown in figure \ref{fig:toy-MDP}. 
The discounting factor is $\gamma=0.9$. 
The MDP's $Q$ values are shown in table~\ref{tab:toy-MDP-original-Q}. Note that the optimal policy will always pick action \textit{stay}.
\begin{figure}[]
\begin{subfigure}[c]{0.33\textwidth}
  \begin{subfigure}{1\textwidth}
  \centering
  \begin{tikzpicture}
        	\tikzstyle{n} = [very thick,circle,inner sep=0mm,minimum width=6mm]
        	\tikzstyle{a} = [thick,>=latex,->]
        	\def\dx{1.2}
        	\def\dy{-1.2}
        	\node[n,C1,draw=C1] (2) at (\dy,0) {\textbf{\textsf{A}}};
        	\node[n,C2,draw=C2] (1) at (\dx,0) {\textbf{\textsf{B}}};
        	\path[a]
        	(2) edge [loop below] node {+1}(2) 
        	(1) edge [loop below] node {+1}(1) 
        	(2) edge [bend right=20] node[below] {0}(1)
        	(1) edge [bend right=20] node[above] {0}(2);
        	\end{tikzpicture}
  \caption{A toy MDP with two states.}%
  \label{fig:toy-MDP}
\end{subfigure}
\begin{subfigure}{1\textwidth}
	\centering
	\begin{tabular}{ c | c  c  c}
		& stay & move\\ \hline
		A & 10 & 9 \\
		B & 10 & 9
	\end{tabular}
	\caption{Original $Q$ values.}%
	\label{tab:toy-MDP-original-Q}
\end{subfigure}
\begin{subfigure}{1\textwidth}
	\centering
	\begin{tabular}{ c | c  c  c}
		& stay & move\\ \hline
		A & 9 & 10 \\
		B & 9 & 10
	\end{tabular}
	\caption{Poisoned $Q$ values.}%
	\label{tab:toy-MDP-poisoned-Q}
\end{subfigure}
\end{subfigure}
\begin{subfigure}[c]{.555\textwidth}
	\centering
	\includegraphics[width=0.73\textwidth]{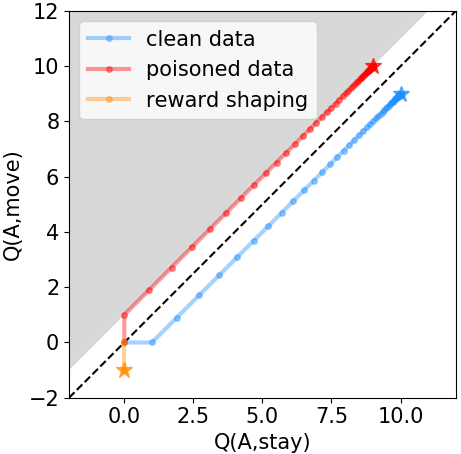}
	\caption{Trajectory for the $Q$ values of state $A$ during value iteration.}
	\label{fig:traj-toy-MDP}
\end{subfigure}%
\caption{Poisoning TCE in a two-state MDP.}
\end{figure}
The clean training data $D^0$ reflects this underlying MDP, and consists of 4 tuples:
\begin{eqnarray}
(A,\textit{stay},1,A)\quad(A,\textit{move},0,B)\quad (B,\textit{stay},1,B)\quad(B,\textit{move},0,A)\nonumber
\end{eqnarray}
Let the attacker's target policy be $\pi^\dagger(s)=$\textit{move}, for any state $s$. 
The attacker sets $\epsilon=1$ and uses $\alpha=2$, i.e. $\|\bv r - \bv r^0\|_2$ as the attack cost. 
Solving the policy poisoning attack optimization problem~\eqref{attack:formulation}-\eqref{attack:c3} produces the poisoned data:
\begin{eqnarray}
(A,\textit{stay},0,A)
\quad(A,\textit{move},1,B) \quad(B,\textit{stay},0,B)
\quad(B,\textit{move},1,A)\nonumber
\end{eqnarray}
with attack cost $\|\bv r - \bv r^0\|_2 = 2$. The resulting poisoned $Q$ values are shown in table~\ref{tab:toy-MDP-poisoned-Q}.
To verify this attack, we run TCE learner on both clean data and poisoned data.
Specifically, we estimate the transition kernel and the reward function as in~\eqref{eq:tabular_transition} and~\eqref{eq:tabular_reward} on each data set, and then run value iteration until the $Q$ values converge. 
In Figure~\ref{fig:traj-toy-MDP}, we show the trajectory of $Q$ values for state $A$, where the $x$ and $y$ axes denote $Q(A,stay)$ and $Q(A, move)$ respectively.
All trajectories start at $(0,0)$.
The dots on the trajectory correspond to each step of value iteration, while the star denotes the converged $Q$ values.
The diagonal dashed line is the (zero margin) policy boundary, while the gray region is the
$\epsilon$-robust target $Q$ polytope with an offset $\epsilon=1$ to the policy boundary. 
The trajectory of clean data converges to a point below the policy boundary, where the action $stay$ is optimal.
With the poisoned data, the trajectory of $Q$ values converge to a point exactly on the boundary of the $\epsilon$-robust target $Q$ polytope, where the action $move$ becomes optimal. 
This validates our attack. 

We also compare our attack with reward shaping~\cite{ng1999policy}.
We let the potential function $\phi(s)$ be the optimal value function $V(s)$ for all $s$ to shape the clean dataset. The dataset after shaping is
\begin{eqnarray}
(A,\textit{stay},0,A)\quad(A,\textit{move},-1,B)\quad (B,\textit{stay},0,B)\quad(B,\textit{move},-1,A)\nonumber
\end{eqnarray}
In Figure~\ref{fig:traj-toy-MDP}, we show the trajectory of $Q$ values after reward shaping. Note that same as on clean dataset, the trajectory after shaping converges to a point also below the policy boundary. This means reward shaping can not make the learner learn a different policy from the original optimal policy. Also note that after reward shaping, value iteration converges much faster (in only one iteration), which matches the benefits of reward shaping shown in \cite{ng1999policy}. More importantly, this illustrates the difference between our attack and reward shaping.

\begin{figure}[h]
	\begin{subfigure}[t]{.48\textwidth}
		\centering
		\resizebox{1\textwidth}{!}{%
		\begin{tikzpicture}
		\draw[step=2cm,black,thin,opacity=0.5] (0,0) grid (14,12);
		\fill[black] (0,2) rectangle (2,12);
		\fill[black] (4,0) rectangle (14,2);
		\fill[black] (4,4) rectangle (10,8);
		\fill[black] (4,10) rectangle (8,12);
		\fill[black] (12,0) rectangle (14,12);
		\fill[black] (10,10) rectangle (12,12);
		\fill[gray,opacity=0.8] (2,4) rectangle (4,8);
		\fill[blue,opacity=0.5] (2,10) rectangle (4,12);
		\node at (1,1) {\LARGE S};
		\node at (3,11) {\LARGE G};
		
		\draw (1,-0.5) -- (1,-2.5) -- (3,-2.5) -- (3,-0.5) -- (1,-0.5);
		\fill[blue,opacity=0.5] (1,-2.5) rectangle (3,-0.5);
		\node at (2,-1.5) {\LARGE G};
		\node[text width=1cm] at (3.6,-1.5) {\LARGE:2};
		
		\draw (6,-0.5) -- (6,-2.5) -- (8,-2.5) -- (8,-0.5) -- (8,-0.5);
		\fill[gray,opacity=0.8] (6,-2.5) rectangle (8,-0.5);
		\node[text width=1cm] at (8.8,-1.5) {\LARGE:$-10$};
		
		\draw (10.6,-0.5) -- (10.6,-2.5) -- (12.6,-2.5) -- (12.6,-0.5) -- (10.6,-0.5);
		\fill[white] (10.6,-2.5) rectangle (12.6,-0.5);
		\node[text width=1cm] at (13.2,-1.5) {\LARGE:$-1$};
		
		\draw [->=stealth,line width=0.5mm, opacity=0.5, blue]
		(1.25 , 0.75) --(3, 0.75)
		(3,0.75) -- (3,3)
		(3,3) -- (11,3)
		(11,3) -- (11,9)
		(11,9) -- (3,9)
		(3,9) -- (3,10);
		\draw [->=stealth,line width=0.5mm,opacity=0.5, red]
		(1.25 , 1.25) --(2.75, 1.25)
		(2.75,1.25) -- (2.75,10);

		\draw [orange,->,>=stealth] (0.5,1) -- (-0.5,1);
		\draw [orange,->,>=stealth] (1,0.5) -- (1,-0.5);
		\draw [orange,->,>=stealth] (1.5,1.2) -- (2.5,1.2);
		\draw [orange,->,>=stealth] (2.5,0.8) -- (1.5,0.8);
		\draw [orange,->,>=stealth] (1,1.5) -- (1,2.5);
		
		\draw [orange,->,>=stealth] (3,0.5) -- (3,-0.5);
		\draw [orange,->,>=stealth] (3.2,2.5) -- (3.2,1.5);
		\draw [orange,->,>=stealth] (3.5,1) -- (4.5,1);
		\draw [orange,->,>=stealth] (2.8,1.5) -- (2.8,2.5);
		
		\draw [orange,->,>=stealth] (3.2,4.5) -- (3.2,3.5);
		\draw [orange,->,>=stealth] (3.5,3.2) -- (4.5,3.2);
		\draw [orange,->,>=stealth] (4.5,2.8) -- (3.5,2.8);
		\draw [orange,->,>=stealth] (2.8,3.5) -- (2.8,4.5);
		\draw [orange,->,>=stealth] (2.5,3) -- (1.5,3);
		
		\draw [orange,->,>=stealth] (5.5,3.2) -- (6.5,3.2);
		\draw [orange,->,>=stealth] (6.5,2.8) -- (5.5,2.8);
		\draw [orange,->,>=stealth] (5,3.5) -- (5,4.5);
		\draw [orange,->,>=stealth] (5,2.5) -- (5,1.5);
		
		\draw [orange,->,>=stealth] (7.5,3.2) -- (8.5,3.2);
		\draw [orange,->,>=stealth] (8.5,2.8) -- (7.5,2.8);
		\draw [orange,->,>=stealth] (7,3.5) -- (7,4.5);
		\draw [orange,->,>=stealth] (7,2.5) -- (7,1.5);
		
		\draw [orange,->,>=stealth] (9.5,3.2) -- (10.5,3.2);
		\draw [orange,->,>=stealth] (10.5,2.8) -- (9.5,2.8);
		\draw [orange,->,>=stealth] (9,3.5) -- (9,4.5);
		\draw [orange,->,>=stealth] (9,2.5) -- (9,1.5);
		
		\draw [orange,->,>=stealth] (11.2,4.5) -- (11.2,3.5);
		\draw [orange,->,>=stealth] (11.5,3) -- (12.5,3);
		\draw [orange,->,>=stealth] (10.8,3.5) -- (10.8,4.5);
		\draw [orange,->,>=stealth] (11,2.5) -- (11,1.5);
		
		\draw [orange,->,>=stealth] (11.2,6.5) -- (11.2,5.5);
		\draw [orange,->,>=stealth] (11.5,5) -- (12.5,5);
		\draw [orange,->,>=stealth] (10.8,5.5) -- (10.8,6.5);
		\draw [orange,->,>=stealth] (10.5,5) -- (9.5,5);
		
		\draw [orange,->,>=stealth] (11.2,8.5) -- (11.2,7.5);
		\draw [orange,->,>=stealth] (11.5,7) -- (12.5,7);
		\draw [orange,->,>=stealth] (10.8,7.5) -- (10.8,8.5);
		\draw [orange,->,>=stealth] (10.5,7) -- (9.5,7);
		
		\draw [orange,->,>=stealth] (11.5,9) -- (12.5,9);
		\draw [orange,->,>=stealth] (11,9.5) -- (11,10.5);
		\draw [orange,->,>=stealth] (9.5,9.2) -- (10.5,9.2);
		\draw [orange,->,>=stealth] (10.5,8.8) -- (9.5,8.8);
		
		\draw [orange,->,>=stealth] (8.8,9.5) -- (8.8,10.5);
		\draw [orange,->,>=stealth] (9.2,10.5) -- (9.2,9.5);
		\draw [orange,->,>=stealth] (7.5,9.2) -- (8.5,9.2);
		\draw [orange,->,>=stealth] (8.5,8.8) -- (7.5,8.8);
		\draw [orange,->,>=stealth] (9,8.5) -- (9,7.5);
		
		\draw [orange,->,>=stealth] (7,9.5) -- (7,10.5);
		\draw [orange,->,>=stealth] (5.5,9.2) -- (6.5,9.2);
		\draw [orange,->,>=stealth] (6.5,8.8) -- (5.5,8.8);
		\draw [orange,->,>=stealth] (7,8.5) -- (7,7.5);
		
		\draw [orange,->,>=stealth] (5,9.5) -- (5,10.5);
		\draw [orange,->,>=stealth] (3.5,9.2) -- (4.5,9.2);
		\draw [orange,->,>=stealth] (4.5,8.8) -- (3.5,8.8);
		\draw [orange,->,>=stealth] (5,8.5) -- (5,7.5);
		
		\draw [orange,->,>=stealth] (2.8,9.5) -- (2.8,10.5);
		\draw [orange,->,>=stealth] (3.2,10.5) -- (3.2,9.5);
		\draw [orange,->,>=stealth] (2.5,9) -- (1.5,9);
		\draw [orange,->,>=stealth] (2.8,7.5) -- (2.8,8.5);
		\draw [orange,->,>=stealth] (3.2,8.5) -- (3.2,7.5);
		
		\draw [orange,->,>=stealth] (2.8,5.5) -- (2.8,6.5);
		\draw [orange,->,>=stealth] (3.2,6.5) -- (3.2,5.5);
		\draw [orange,->,>=stealth] (2.5,7) -- (1.5,7);
		\draw [orange,->,>=stealth] (3.5,7) -- (4.5,7);
		
		\draw [orange,->,>=stealth] (2.5,5) -- (1.5,5);
		\draw [orange,->,>=stealth] (3.5,5) -- (4.5,5);
		
		\draw [orange,->,>=stealth] (2.5,11) -- (1.5,11);
		\draw [orange,->,>=stealth] (3.5,11) -- (4.5,11);
		\draw [orange,->,>=stealth] (3,11.5) -- (3,12.5);
		
		\draw [orange,->,>=stealth] (8.5,11) -- (7.5,11);
		\draw [orange,->,>=stealth] (9.5,11) -- (10.5,11);
		\draw [orange,->,>=stealth] (9,11.5) -- (9,12.5);
		\node at (4,3.2) {\textcolor{blue}{{\large-0.572}}}; 
		\node at (2.5,4) {\textcolor{red}{{\large+0.572}}}; 
		\node at (6,3.2) {\textcolor{blue}{{\large-0.515}}}; 
		\node at (8,3.2) {\textcolor{blue}{{\large-0.464}}}; 
		\node at (10,3.2) {\textcolor{blue}{{\large-0.417}}}; 
		\node at (10.6,4) {\textcolor{blue}{{\large-0.376}}}; 
		\node at (10.6,6) {\textcolor{blue}{{\large-0.338}}}; 
		\node at (10.6,8) {\textcolor{blue}{{\large-0.304}}}; 
		\node at (10,8.8) {\textcolor{blue}{{\large-0.274}}}; 
		\node at (8,8.8) {\textcolor{blue}{{\large-0.246}}}; 
		\node at (6,8.8) {\textcolor{blue}{{\large-0.221}}}; 
		\node at (4,8.8) {\textcolor{blue}{{\large-0.200}}}; 
		\node at (2.5,10) {\textcolor{red}{{\large+0.238}}}; 
		\node at (3,12) {\textcolor{red}{{\large+2.139}}}; 
		\node at (2.5,8) {\textcolor{red}{{\large+0.464}}}; 
		\node at (2.5,6) {\textcolor{red}{{\large+0.515}}}; 
		\end{tikzpicture}
		}
		\caption{Grid world with a single terminal state $G$.}
		\label{fig:grid-world-1}
	\end{subfigure}
	\begin{subfigure}[t]{.5\textwidth}
		\centering	
		\resizebox{1\textwidth}{!}{%
		\begin{tikzpicture}
		\draw (1,-0.5) -- (1,-2) -- (2.5,-2) -- (2.5,-0.5) -- (1,-0.5);
		\fill[blue,opacity=0.5] (1,-2) rectangle (2.5,-0.5);
		\node at (1.75,-1.25) {\Large G1};
		\node[text width=0.5cm] at (2.9,-1.25) {\Large :1};
				
		\draw (5.2,-0.5) -- (5.2,-2) -- (6.7,-2) -- (6.7,-0.5) -- (5.2,-0.5);
		\fill[blue,opacity=0.5] (5.2,-2) rectangle (6.7,-0.5);
		\node at (5.95,-1.25) {\Large G2};
		\node[text width=0.7cm] at (7.15,-1.25) {\Large:2};
		
		\draw (9,-0.5) -- (9,-2) -- (10.5,-2) -- (10.5,-0.5) -- (9,-0.5);
		\fill[white] (9,-2) rectangle (10.5,-0.5);
		\node[text width=0.7cm] at (11,-1.25) {\Large:$-1$};
		
		\draw[step=2cm,black,thin,opacity=0.5] (0,0) grid (12,12);
		\fill[black] (2,8) rectangle (4,12);
		\fill[black] (8,8) rectangle (10,12);
		\node at (1,11) {\Large G1};
		\node at (11,11) {\Large G2};
		\fill[blue,opacity=0.5] (0,10) rectangle (2,12);
		\fill[blue,opacity=0.5] (10,10) rectangle (12,12);
		\node at (3,7) {\Large S};
		
		\tikzset{>=latex}
		\draw [->=stealth,line width=0.5mm, opacity=0.5,blue]
		(2.75 , 7) --(1, 7)
		(1 , 7) --(1, 10);
		\draw [->=stealth,line width=0.5mm, opacity=0.5,red]
		(3.25, 7) --(11,7)
		(11, 7) -- (11, 10);

		\draw [orange,->,>=stealth] (0.5,1) -- (-0.5,1);
		\draw [orange,->,>=stealth] (1,0.5) -- (1,-0.5);
		\draw [orange,->,>=stealth] (1.5,1.2) -- (2.5,1.2);
		\draw [orange,->,>=stealth] (2.5,0.8) -- (1.5,0.8);
		\draw [orange,->,>=stealth] (0.8,1.5) -- (0.8,2.5);
		\draw [orange,->,>=stealth] (1.2,2.5) -- (1.2,1.5);
		
		\draw [orange,->,>=stealth] (0.5,3) -- (-0.5,3);
		\draw [orange,->,>=stealth] (1.5,3.2) -- (2.5,3.2);
		\draw [orange,->,>=stealth] (2.5,2.8) -- (1.5,2.8);
		\draw [orange,->,>=stealth] (0.8,3.5) -- (0.8,4.5);
		\draw [orange,->,>=stealth] (1.2,4.5) -- (1.2,3.5);
		
		\draw [orange,->,>=stealth] (0.5,5) -- (-0.5,5);
		\draw [orange,->,>=stealth] (1.5,5.2) -- (2.5,5.2);
		\draw [orange,->,>=stealth] (2.5,4.8) -- (1.5,4.8);
		\draw [orange,->,>=stealth] (0.8,5.5) -- (0.8,6.5);
		\draw [orange,->,>=stealth] (1.2,6.5) -- (1.2,5.5);
		
		\draw [orange,->,>=stealth] (0.5,7) -- (-0.5,7);
		\draw [orange,->,>=stealth] (1.5,7.2) -- (2.5,7.2);
		\draw [orange,->,>=stealth] (2.5,6.8) -- (1.5,6.8);
		\draw [orange,->,>=stealth] (0.8,7.5) -- (0.8,8.5);
		\draw [orange,->,>=stealth] (1.2,8.5) -- (1.2,7.5);
		
		\draw [orange,->,>=stealth] (0.5,9) -- (-0.5,9);
		\draw [orange,->,>=stealth] (1.5,9) -- (2.5,9);
		\draw [orange,->,>=stealth] (0.8,9.5) -- (0.8,10.5);
		\draw [orange,->,>=stealth] (1.2,10.5) -- (1.2,9.5);
		
		\draw [orange,->,>=stealth] (0.5,11) -- (-0.5,11);
		\draw [orange,->,>=stealth] (1.5,11) -- (2.5,11);
		\draw [orange,->,>=stealth] (1,11.5) -- (1,12.5);
		
		\draw [orange,->,>=stealth] (3,0.5) -- (3,-0.5);
		\draw [orange,->,>=stealth] (3.5,1.2) -- (4.5,1.2);
		\draw [orange,->,>=stealth] (4.5,0.8) -- (3.5,0.8);
		\draw [orange,->,>=stealth] (2.8,1.5) -- (2.8,2.5);
		\draw [orange,->,>=stealth] (3.2,2.5) -- (3.2,1.5);
		
		\draw [orange,->,>=stealth] (5,0.5) -- (5,-0.5);
		\draw [orange,->,>=stealth] (5.5,1.2) -- (6.5,1.2);
		\draw [orange,->,>=stealth] (6.5,0.8) -- (5.5,0.8);
		\draw [orange,->,>=stealth] (4.8,1.5) -- (4.8,2.5);
		\draw [orange,->,>=stealth] (5.2,2.5) -- (5.2,1.5);
		
		\draw [orange,->,>=stealth] (7,0.5) -- (7,-0.5);
		\draw [orange,->,>=stealth] (7.5,1.2) -- (8.5,1.2);
		\draw [orange,->,>=stealth] (8.5,0.8) -- (7.5,0.8);
		\draw [orange,->,>=stealth] (6.8,1.5) -- (6.8,2.5);
		\draw [orange,->,>=stealth] (7.2,2.5) -- (7.2,1.5);
		
		\draw [orange,->,>=stealth] (9,0.5) -- (9,-0.5);
		\draw [orange,->,>=stealth] (9.5,1.2) -- (10.5,1.2);
		\draw [orange,->,>=stealth] (10.5,0.8) -- (9.5,0.8);
		\draw [orange,->,>=stealth] (8.8,1.5) -- (8.8,2.5);
		\draw [orange,->,>=stealth] (9.2,2.5) -- (9.2,1.5);
		
		\draw [orange,->,>=stealth] (11,0.5) -- (11,-0.5);
		\draw [orange,->,>=stealth] (11.5,1) -- (12.5,1);
		\draw [orange,->,>=stealth] (10.8,1.5) -- (10.8,2.5);
		\draw [orange,->,>=stealth] (11.2,2.5) -- (11.2,1.5);
		
		\draw [orange,->,>=stealth] (3.5,3.2) -- (4.5,3.2);
		\draw [orange,->,>=stealth] (4.5,2.8) -- (3.5,2.8);
		\draw [orange,->,>=stealth] (2.8,3.5) -- (2.8,4.5);
		\draw [orange,->,>=stealth] (3.2,4.5) -- (3.2,3.5);
		
		\draw [orange,->,>=stealth] (3.5,5.2) -- (4.5,5.2);
		\draw [orange,->,>=stealth] (4.5,4.8) -- (3.5,4.8);
		\draw [orange,->,>=stealth] (2.8,5.5) -- (2.8,6.5);
		\draw [orange,->,>=stealth] (3.2,6.5) -- (3.2,5.5);
		
		\draw [orange,->,>=stealth] (3.5,7.2) -- (4.5,7.2);
		\draw [orange,->,>=stealth] (4.5,6.8) -- (3.5,6.8);
		\draw [orange,->,>=stealth] (3,7.5) -- (3,8.5);
		
		\draw [orange,->,>=stealth] (5.5,3.2) -- (6.5,3.2);
		\draw [orange,->,>=stealth] (6.5,2.8) -- (5.5,2.8);
		\draw [orange,->,>=stealth] (4.8,3.5) -- (4.8,4.5);
		\draw [orange,->,>=stealth] (5.2,4.5) -- (5.2,3.5);
		
		\draw [orange,->,>=stealth] (5.5,5.2) -- (6.5,5.2);
		\draw [orange,->,>=stealth] (6.5,4.8) -- (5.5,4.8);
		\draw [orange,->,>=stealth] (4.8,5.5) -- (4.8,6.5);
		\draw [orange,->,>=stealth] (5.2,6.5) -- (5.2,5.5);
		
		\draw [orange,->,>=stealth] (5.5,7.2) -- (6.5,7.2);
		\draw [orange,->,>=stealth] (6.5,6.8) -- (5.5,6.8);
		\draw [orange,->,>=stealth] (4.8,7.5) -- (4.8,8.5);
		\draw [orange,->,>=stealth] (5.2,8.5) -- (5.2,7.5);
		
		\draw [orange,->,>=stealth] (5.5,9.2) -- (6.5,9.2);
		\draw [orange,->,>=stealth] (6.5,8.8) -- (5.5,8.8);
		\draw [orange,->,>=stealth] (4.8,9.5) -- (4.8,10.5);
		\draw [orange,->,>=stealth] (5.2,10.5) -- (5.2,9.5);
		\draw [orange,->,>=stealth] (4.5,9) -- (3.5,9);
		
		\draw [orange,->,>=stealth] (5.5,11.2) -- (6.5,11.2);
		\draw [orange,->,>=stealth] (6.5,10.8) -- (5.5,10.8);
		\draw [orange,->,>=stealth] (5,11.5) -- (5,12.5);
		\draw [orange,->,>=stealth] (4.5,11) -- (3.5,11);
		
		\draw [orange,->,>=stealth] (7.5,3.2) -- (8.5,3.2);
		\draw [orange,->,>=stealth] (8.5,2.8) -- (7.5,2.8);
		\draw [orange,->,>=stealth] (6.8,3.5) -- (6.8,4.5);
		\draw [orange,->,>=stealth] (7.2,4.5) -- (7.2,3.5);
		
		\draw [orange,->,>=stealth] (7.5,5.2) -- (8.5,5.2);
		\draw [orange,->,>=stealth] (8.5,4.8) -- (7.5,4.8);
		\draw [orange,->,>=stealth] (6.8,5.5) -- (6.8,6.5);
		\draw [orange,->,>=stealth] (7.2,6.5) -- (7.2,5.5);
		
		\draw [orange,->,>=stealth] (7.5,7.2) -- (8.5,7.2);
		\draw [orange,->,>=stealth] (8.5,6.8) -- (7.5,6.8);
		\draw [orange,->,>=stealth] (6.8,7.5) -- (6.8,8.5);
		\draw [orange,->,>=stealth] (7.2,8.5) -- (7.2,7.5);
		
		\draw [orange,->,>=stealth] (7.5,9) -- (8.5,9);
		\draw [orange,->,>=stealth] (6.8,9.5) -- (6.8,10.5);
		\draw [orange,->,>=stealth] (7.2,10.5) -- (7.2,9.5);
		
		\draw [orange,->,>=stealth] (7.5,11) -- (8.5,11);
		\draw [orange,->,>=stealth] (7,11.5) -- (7,12.5);
		
		\draw [orange,->,>=stealth] (9.5,3.2) -- (10.5,3.2);
		\draw [orange,->,>=stealth] (10.5,2.8) -- (9.5,2.8);
		\draw [orange,->,>=stealth] (8.8,3.5) -- (8.8,4.5);
		\draw [orange,->,>=stealth] (9.2,4.5) -- (9.2,3.5);
		
		\draw [orange,->,>=stealth] (9.5,5.2) -- (10.5,5.2);
		\draw [orange,->,>=stealth] (10.5,4.8) -- (9.5,4.8);
		\draw [orange,->,>=stealth] (8.8,5.5) -- (8.8,6.5);
		\draw [orange,->,>=stealth] (9.2,6.5) -- (9.2,5.5);
		
		\draw [orange,->,>=stealth] (9.5,7.2) -- (10.5,7.2);
		\draw [orange,->,>=stealth] (10.5,6.8) -- (9.5,6.8);
		\draw [orange,->,>=stealth] (9,7.5) -- (9,8.5);
		
		\draw [orange,->,>=stealth] (11.5,3) -- (12.5,3);
		\draw [orange,->,>=stealth] (10.8,3.5) -- (10.8,4.5);
		\draw [orange,->,>=stealth] (11.2,4.5) -- (11.2,3.5);
		
		\draw [orange,->,>=stealth] (11.5,5) -- (12.5,5);
		\draw [orange,->,>=stealth] (10.8,5.5) -- (10.8,6.5);
		\draw [orange,->,>=stealth] (11.2,6.5) -- (11.2,5.5);
		
		\draw [orange,->,>=stealth] (11.5,7) -- (12.5,7);
		\draw [orange,->,>=stealth] (10.8,7.5) -- (10.8,8.5);
		\draw [orange,->,>=stealth] (11.2,8.5) -- (11.2,7.5);
		
		\draw [orange,->,>=stealth] (11.5,9) -- (12.5,9);
		\draw [orange,->,>=stealth] (10.5,9) -- (9.5,9);
		\draw [orange,->,>=stealth] (10.8,9.5) -- (10.8,10.5);
		\draw [orange,->,>=stealth] (11.2,10.5) -- (11.2,9.5);
		
		\draw [orange,->,>=stealth] (11.5,11) -- (12.5,11);
		\draw [orange,->,>=stealth] (10.5,11) -- (9.5,11);
		\draw [orange,->,>=stealth] (11,11.5) -- (11,12.5);
		
		\node at (2.6,2) {\textcolor{red}{{\large+0.012}}}; 
		\node at (4,1.2) {\textcolor{blue}{{\large-0.012}}}; 
		
		\node at (4.6,2) {\textcolor{red}{{\large+0.007}}}; 
		\node at (6,1.2) {\textcolor{blue}{{\large-0.018}}}; 
		
		\node at (2.6,4) {\textcolor{red}{{\large+0.020}}}; 
		\node at (4,3.2) {\textcolor{blue}{{\large-0.009}}}; 
		
		\node at (0.6,6) {\textcolor{blue}{{\large-0.006}}}; 
		
		\node at (6.6,2) {\textcolor{red}{{\large+0.002}}}; 
		\node at (8,1.2) {\textcolor{blue}{{\large-0.018}}}; 
		
		\node at (4.6,4) {\textcolor{red}{{\large+0.012}}}; 
		\node at (6,3.2) {\textcolor{blue}{{\large-0.014}}}; 
		
		\node at (2.6,6) {\textcolor{red}{{\large+0.036}}}; 
		\node at (4,5.2) {\textcolor{blue}{{\large-0.010}}}; 
		\node at (2,4.8) {\textcolor{blue}{{\large-0.007}}}; 
		
		\node at (0.6,8) {\textcolor{blue}{{\large-0.044}}}; 
		
		\node at (8.6,2) {\textcolor{blue}{{\large-0.003}}}; 
		\node at (10,1.2) {\textcolor{blue}{{\large-0.013}}}; 
		
		\node at (6.6,4) {\textcolor{red}{{\large+0.004}}}; 
		\node at (8,3.2) {\textcolor{blue}{{\large-0.015}}}; 
		
		\node at (4.6,6) {\textcolor{red}{{\large+0.015}}}; 
		\node at (6,5.2) {\textcolor{blue}{{\large-0.013}}}; 
		
		\node at (2,6.8) {\textcolor{blue}{{\large-0.043}}}; 
		\node at (4,7.2) {\textcolor{red}{{\large+0.075}}}; 
		
		\node at (0.5,10) {\textcolor{blue}{{\large-0.040}}}; 
		
		\node at (10.6,2) {\textcolor{blue}{{\large-0.012}}}; 
		
		\node at (8.6,4) {\textcolor{blue}{{\large-0.006}}}; 
		\node at (10,3.2) {\textcolor{blue}{{\large-0.011}}}; 
		
		\node at (6.6,6) {\textcolor{red}{{\large+0.008}}}; 
		\node at (8,5.2) {\textcolor{blue}{{\large-0.016}}}; 
		
		\node at (6,7.2) {\textcolor{red}{{\large+0.088}}}; 
		
		\node at (0,11) {\textcolor{blue}{{\large-0.115}}}; 
		\node at (2,11) {\textcolor{blue}{{\large-0.115}}}; 
		\node at (1,12) {\textcolor{blue}{{\large-0.015}}}; 
		\node at (1.7,10) {\textcolor{blue}{{\large-0.115}}}; 
		
		\node at (10.6,4) {\textcolor{blue}{{\large-0.020}}}; 
		
		\node at (8.6,6) {\textcolor{blue}{{\large-0.005}}}; 
		\node at (10,5.2) {\textcolor{blue}{{\large-0.015}}}; 
		
		\node at (8,7.2) {\textcolor{red}{{\large+0.080}}}; 
		
		\node at (5.4,8) {\textcolor{red}{{\large+0.009}}}; 
		\node at (6,9.2) {\textcolor{blue}{{\large-0.004}}}; 
		
		\node at (10.6,6) {\textcolor{blue}{{\large-0.032}}}; 
		
		\node at (10,7.2) {\textcolor{red}{{\large+0.068}}}; 
		
		\node at (7.4,8) {\textcolor{blue}{{\large-0.008}}}; 
		
		\node at (5.4,10) {\textcolor{red}{{\large+0.005}}}; 
		\node at (6,11.2) {\textcolor{blue}{{\large-0.005}}}; 
		
		\node at (10.6,8) {\textcolor{red}{{\large+0.032}}}; 
		
		\node at (7.4,10) {\textcolor{blue}{{\large-0.004}}}; 
		
		\node at (10.6,10) {\textcolor{red}{{\large+0.029}}}; 
		
		\node at (11,12) {\textcolor{red}{{\large+0.262}}}; 
		\end{tikzpicture}
		}
		\caption{Grid world with two terminal states $G_1$ and $G_2$.}
		\label{fig:grid-world-2}
	\end{subfigure}
	\caption{Poisoning TCE in grid-world tasks.}
	\label{fig:attack-MDP}
\end{figure}
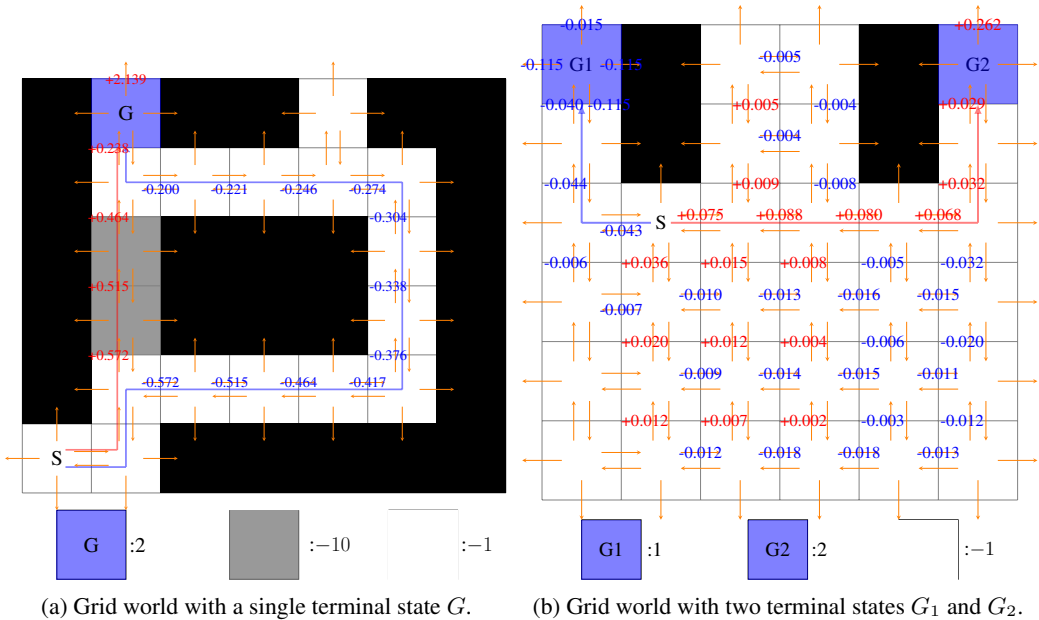

\textbf{Experiment 2.}
As another example, we consider the grid world tasks in~\cite{cakmak2012algorithmic}. In particular, we focus on  two tasks shown in figure~\ref{fig:grid-world-1} and~\ref{fig:grid-world-2}. 
In figure~\ref{fig:grid-world-1}, the agent starts from S and aims to arrive at the terminal cell G. The black regions are walls, thus the agent can only choose to go through the white or gray regions. The agent can take four actions in every state: go left, right, up or down, and stays if the action takes it into the wall. 
Reaching a gray, white, or the terminal state results in rewards $-10$, $-1$, 2, respectively. 
After the agent arrives at the terminal state G, it will stay there forever and always receive reward 0 regardless of the following actions.
The original optimal policy is to follow the blue trajectory. 
The attacker's goal is to force the agent to follow the red trajectory.
Correspondingly, we set the target actions for those states on the red trajectory as along the trajectory.
We set the target actions for the remaining states to be the same as the original optimal policy learned on clean data. 

The clean training data contains a single item for every state-action pair.
We run the attack with $\epsilon=0.1$ and $\alpha=2$. 
Our attack is successful: with the poisoned data, TCE generates a policy that produces the red trajectory in Figure~\ref{fig:grid-world-1}, which is the desired behavior. 
The attack cost is $\|\bv r-\bv r^0\|_2\approx 2.64$, which is small compared to $\|\bv r^0\|_2=21.61$. 
In Figure~\ref{fig:grid-world-1}, we show the poisoning on rewards. Each state-action pair is denoted by an orange arrow. The value tagged to each arrow is the modification to that reward, where red value means the reward is increased and blue means decreased. An arrow without any tagged value means the corresponding reward is not changed by attack.
Note that rewards along the red trajectory are increased, while those along the blue trajectory are reduced, resulting in the red trajectory being preferred by the agent. 
Furthermore, rewards closer to the starting state S suffer larger poisoning since they contribute more to the $Q$ values.
For the large attack +2.139 happening at terminal state, we provide an explanation in appendix~\ref{sparse-attack}.

\textbf{Experiment 3.}
In Figure~\ref{fig:grid-world-2} there are two terminal states G1 and G2 with reward 1 and 2, respectively. 
The agent starts from S. 
Although G2 is more profitable, the path is longer and each step has a $-1$ reward.
Therefore, the original optimal policy is the blue trajectory to G1.
The attacker's target policy is to force the agent along the red trajectory to G2. 
We set the target actions for states as in experiment 2.
The clean training data contains a single item for every state-action pair. 
We run our attack with $\epsilon=0.1$ and $\alpha=2$. Again, after the attack, TCE on the poisoned dataset produces the red trajectory in figure~\ref{fig:grid-world-2}, with attack cost $\|\bv r-\bv r^0\|_2\approx0.38$, compared to $\|\bv r^0\|_2=11.09$. The reward poisoning follows a similar pattern to experiment 2. 

\subsection{Policy Poisoning Attack on LQR Victim}
\begin{figure}[h]
	\begin{subfigure}[t]{.4\textwidth}
		\centering
		\includegraphics[width=0.95\textwidth, height=0.81\textwidth]{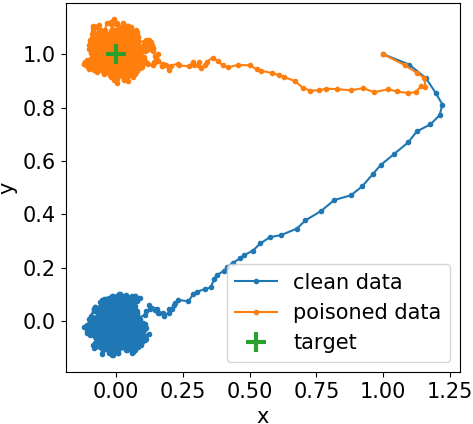}
		\caption{Clean and poisoned vehicle trajectory.}
		\label{fig:traj-LQR}
	\end{subfigure}%
	\begin{subfigure}[t]{.4\textwidth}
		\centering
		\includegraphics[width=0.95\textwidth, height=0.81\textwidth]{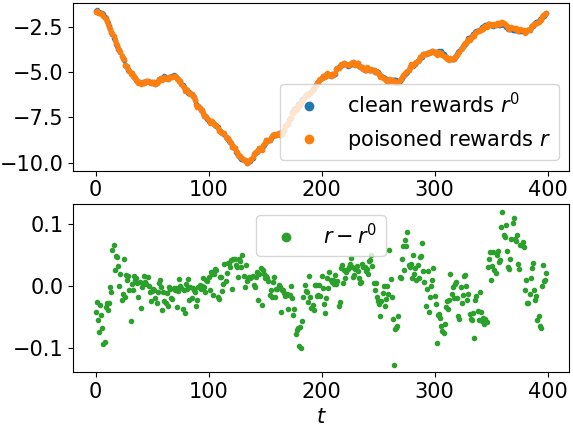}
		\caption{Clean and poisoned rewards.}
		\label{fig:poisoned_rewards}
	\end{subfigure}%
	\caption{Poisoning a vehicle running LQR in 4D state space.}
	\label{fig:poison-LQR}
\end{figure}

\textbf{Experiment 4.}
We now demonstrate our attack on LQR. We consider a linear dynamical system that approximately models a vehicle.
The state of the vehicle consists of its 2D position and 2D velocity: 
 $s_t=(x_t, y_t, v_t^x, v_t^y)\in \R^4$.
The control signal at time $t$ is the force $a_t\in \R^2$ which will be applied on the vehicle for $h$ seconds. We assume there is a friction parameter $\eta$ such that the friction force is $-\eta v_t$.
 Let $m$ be the mass of the vehicle. 
Given small enough $h$, the transition matrices can be approximated by~\eqref{eq:linear_dynamics}
where
\begin{equation}\label{exp:transition_matrix}
A=\left[\begin{array}{cccc}{1} & {0} & {h} & {0} \\ {0} & {1} & {0} & {h} \\ {0} & {0} & {1-h \eta / m} & {0} \\ {0} & {0} & {0} & {1-h \eta / m}\end{array}\right], 
B=\left[ \begin{array}{cc}{0} & {0} \\ {0} & {0} \\ {h / m} & {0} \\ {0} & {h / m}\end{array}\right].
\end{equation}
In this example, we let $h=0.1$, $m=1$, $\eta=0.5$, and $w_t\sim \mathcal{N}(0,\sigma^2I)$ with $\sigma=0.01$. The vehicle starts from initial position $(1,1)$ with velocity $(1,-0.5)$, i.e., $s_0=(1,1,1,-0.5)$. 
The true loss function is $L(s,a)=\frac{1}{2}s^\top Qs+a^\top Ra$ with $Q=I$ and $R=0.1I$ (i.e., $Q=I, R=0.1I, q=0, c=0$ in~\eqref{LQR:loss}). 
Throughout the experiment, we let $\gamma=0.9$ for solving the optimal control policy in~\eqref{LQR}. With the true dynamics and loss function, the computed optimal control policy is
\begin{equation}
K^*=\left[ \begin{array}{cccc}{-1.32} & {0} & {-2.39} & {0} \\ {0} & {-1.32} & {0} & {-2.39}\end{array}\right],
k^*=\left[ \begin{array}{cc}{0} & {0} \end{array}\right],
\end{equation}
which will drive the vehicle to the origin.

The batch LQR learner estimates the dynamics and the loss function from a batch training data. 
To produce the training data, we let the vehicle start from state $s_0$ and simulate its trajectory with a random control policy. Specifically, in each time step, we uniformly sample a control signal $a_t$ in a unit sphere. The vehicle then takes action $a_t$ to transit from current state $s_t$ to the next state $s_{t+1}$, and receives a reward $r_t=-L(s_t,a_t)$. This gives us one training item $(s_t,a_t,r_t,s_{t+1})$. We simulate a total of 400 time steps to obtain a batch data that contains 400 items, on which the learner estimates the dynamics and the loss function. With the learner's estimate, the computed clean optimal policy is
\begin{equation}
\hat K^0=\left[ \begin{array}{cccc}{-1.31} & {1.00\mathrm{e}{-2}} & {-2.41} & {2.03\mathrm{e}{-3}} \\ {-1.97\mathrm{e}{-2}} & {-1.35} & {-1.14\mathrm{e}{-2}} & {-2.42}\end{array}\right],
\hat k^0=\left[ \begin{array}{cc}{-4.88\mathrm{e}{-5}} & {4.95\mathrm{e}{-6}} \end{array}\right].
\end{equation}
The clean optimal policy differs slightly from the true optimal policy due to the inaccuracy of the learner's estimate.
The attacker has a target policy $(K^\dagger, k^\dagger)$ that can drive the vehicle close to its target destination $(x^\dagger,y^\dagger)=(0,1)$ with terminal velocity $(0,0)$, which can be represented as a target state $s^\dagger=(0,1,0,0)$. 
To ensure feasibility, we assume that the attacker starts with the loss function $\frac{1}{2}(s-s^\dagger)^\top Q(s-s^\dagger)+a^\top Ra$ where $Q=I, R=0.1I$.  Due to the offset this corresponds to setting $Q=I, R=0.1I, q=-s^\dagger, c=\frac{1}{2}{s^\dagger}^\top Qs^\dagger=0.5$ in~\eqref{LQR:loss}. The attacker then solves the Riccati equation with its own loss function and the learner's estimates $\hat A$ and $\hat B$ to arrive at the target policy
\begin{equation}
K^\dagger=\left[ \begin{array}{cccc}{-1.31} &  {9.99\mathrm{e}{-3}} & {-2.41} & {2.02\mathrm{e}{-3}} \\ {-1.97\mathrm{e}{-2}}  & {-1.35} & {-1.14\mathrm{e}{-2}}  & {-2.42}\end{array}\right],
k^\dagger=\left[ \begin{array}{cc}{-0.01} & {1.35} \end{array}\right].
\end{equation}

We run our attack~\eqref{attack:objective-LQR}-\eqref{attack-LQR-c6} with $\alpha=2$ and $\epsilon=0.01$ in~\eqref{attack-LQR-c5}. Figure~\ref{fig:poison-LQR} shows the result of our attack. In Figure~\ref{fig:traj-LQR}, we plot the trajectory of the vehicle with policy learned on clean data and poisoned data respectively. 
Our attack successfully forces LQR into a policy that drives the vehicle close to the target destination. 
The wiggle on the trajectory is due to the noise $w_t$ of the dynamical system. 
On the poisoned data, the LQR victim learns the policy
\begin{equation}
\hat K=\left[ \begin{array}{cccc}{-1.31} & {9.99\mathrm{e}{-3}} & {-2.41} & {2.02\mathrm{e}{-3}} \\ {-1.97\mathrm{e}{-2}} & {-1.35} & {-1.14\mathrm{e}{-2}} & {-2.42}\end{array}\right],
\hat k=\left[ \begin{array}{cc}{-0.01} & {1.35} \end{array}\right],
\end{equation}
which matches exactly the target policy $K^\dagger$, $k^\dagger$. 
In Figure~\ref{fig:poisoned_rewards}, we show the poisoning on rewards. 
Our attack leads to very small modification on each reward, thus the attack is efficient. The total attack cost over all 400 items is only $\|\bv r-\bv r^0\|_2=0.73$, which is tiny small compared to $\|\bv r^0\|_2=112.94$. The results here demonstrate that our attack can dramatically change the behavior of LQR by only slightly modifying the rewards in the dataset.

Finally, for both attacks on TCE and LQR, we note that by setting the attack cost norm $\alpha=1$ in~\eqref{eq:objective}, the attacker is able to obtain a \emph{sparse} attack, meaning that only a small fraction of the batch data needs to be poisoned.  Such sparse attacks have profound implications in adversarial machine learning as they can be easier to carry out and harder to detect.  We show detailed results in appendix~\ref{sparse-attack}.

\section{Conclusion}
We presented a policy poisoning framework against batch reinforcement learning and control.
We showed the attack problem can be formulated as convex optimization.
We provided theoretical analysis on attack feasibility and cost. 
Experiments show the attack can force the learner into an attacker-chosen target policy while incurring only a small attack cost.

\textbf{Acknowledgments.} This work is supported in part by NSF 1545481, 1561512, 1623605, 1704117, 1836978 and the MADLab AF Center of Excellence FA9550-18-1-0166.

\bibliography{attack_q_learning}
\bibliographystyle{plainnat}

\newpage
\appendix
{\Large \textbf{Supplementary Material}}
\section{Proof of Proposition~\ref{feasibility}}
The proof of feasibility relies on the following result, which states that there is a bijection mapping between reward space and value function space.

\begin{proposition}\label{rQ:bijection}
Given an MDP with transition probability function $P$ and discounting factor $\gamma\in[0,1)$, let $\RR = \{R: \S\times \A\mapsto \R\}$ denote the set of all possible reward functions, and let $\Q = \{Q: \S\times \A\mapsto \R\}$ denote the set of all possible Q tables. 
Then, there exists a bijection mapping between $\RR$ and $\Q$, induced by Bellman optimality equation.
\end{proposition}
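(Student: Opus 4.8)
The plan is to exhibit the bijection explicitly and verify it by direct computation, so that the only genuinely analytic ingredient is the standard contraction property of the Bellman optimality operator. Concretely, I would define two maps and show they are mutual inverses. In the forward direction, define $\Phi:\RR\to\Q$ by sending a reward function $R$ to the $Q$ table $\Phi(R)$ that solves the Bellman optimality equation~\eqref{eq:Bellman} with reward $R$. In the reverse direction, define $\Psi:\Q\to\RR$ by the explicit formula
\begin{equation*}
\Psi(Q)(s,a) = Q(s,a) - \gamma\sum_{s'\in\S} P(s'|s,a)\max_{a'\in\A} Q(s',a'),
\end{equation*}
which simply reads off the reward that would render a given $Q$ table Bellman-optimal. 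Since $\A$ is finite in the tabular setting where this proposition is applied, the inner maximum is well defined, so $\Psi$ is well defined by inspection; the substance of the proof is making $\Phi$ well defined and checking the two compositions.

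First I would establish that $\Phi$ is well defined, i.e.\ that for each fixed $R$ the Bellman optimality equation has a \emph{unique} solution. This is the one place that requires an argument: the Bellman optimality operator $(\mathcal{T}Q)(s,a) := R(s,a) + \gamma\sum_{s'} P(s'|s,a)\max_{a'} Q(s',a')$ is a $\gamma$-contraction in the sup-norm on $\Q$, using the elementary inequality $|\max_{a'} Q_1(s',a') - \max_{a'} Q_2(s',a')| \le \max_{a'}|Q_1(s',a')-Q_2(s',a')|$ together with $\gamma<1$. As $\Q \cong \mathbb{R}^{|\S|\times|\A|}$ is complete, the Banach fixed-point theorem yields a unique fixed point, which I take to be $\Phi(R)$.

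Next I would verify the two identities. For $\Psi\circ\Phi=\mathrm{id}_\RR$: given $R$, the table $Q=\Phi(R)$ satisfies~\eqref{eq:Bellman}, so plugging it into the defining formula for $\Psi$ returns exactly $R(s,a)$. For $\Phi\circ\Psi=\mathrm{id}_\Q$: given any $Q$, the reward $R=\Psi(Q)$ is constructed precisely so that $Q$ obeys the Bellman optimality equation with reward $R$; hence $Q$ is a fixed point of $\mathcal{T}$, and by the uniqueness established above $\Phi(R)=Q$. Together these show $\Psi$ is a bijection with inverse $\Phi$, which is what the proposition asserts.

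The hard part is really just the single analytic step—the contraction argument guaranteeing that $\Phi$ is single-valued—and this is entirely standard; everything else is bookkeeping through the Bellman equation. The only point of care is to ensure the inner maximum is well defined, which holds because $\A$ is finite in the TCE regime where the result is used.
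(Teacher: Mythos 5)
Your proposal is correct and follows essentially the same route as the paper's proof: the forward map is obtained from the Banach fixed-point theorem applied to the $\gamma$-contraction Bellman operator, and the reverse map is the explicit read-off formula $R(s,a) = Q(s,a) - \gamma\sum_{s'} P(s'|s,a)\max_{a'} Q(s',a')$. Your only addition is to spell out explicitly that the two maps are mutual inverses (the paper asserts bijectivity directly after constructing both directions), which is a welcome bit of extra rigor but not a different argument.
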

\begin{proof}
$\Rightarrow$ Given any reward function $R(s,a)\in\RR$, define the Bellman operator as
\begin{equation}
H_R(Q)(s,a)=R(s, a)+\gamma\sum_{s^\prime}P(s^\prime\mid s, a)\max _{a^\prime} Q(s^\prime, a^\prime).
\end{equation}
Since $\gamma<1$, $H_R(Q)$ is a contraction mapping, i.e., $\|H_R(Q_1)-H_R(Q_2)\|_\infty\le \gamma\|Q_1-Q_2\|_\infty$, $\forall Q_1,Q_2\in\Q$. Then by Banach Fixed Point Theorem, there is a unique $Q\in\Q$ that satisfies $Q=H_R(Q)$, which is the $Q$ that $R$ maps to.

$\Leftarrow$ Given any $Q\in\Q$, one can define the corresponding $R\in\RR$ by
\begin{eqnarray}
R(s,a) = Q(s,a) - \gamma\sum_{s^\prime}P(s^\prime\mid s, a)\max _{a^\prime} Q(s^\prime, a^\prime).
\end{eqnarray}
Thus the mapping is one-to-one.
\end{proof}

\feasibility*
\begin{proof}
For any target policy $\pi^\dagger:\S\mapsto \A$, we construct the following $Q$:
\begin{equation}\label{eq:constructedQ}
Q(s,a)=\left\{
\begin{aligned}
&\epsilon &&\forall s\in \S, a=\pi^\dagger(s),\\
&0,&& \text{ otherwise.}
\end{aligned}
\right.
\end{equation}
The $Q$ values in~\eqref{eq:constructedQ} satisfy the constraint~\eqref{attack:c3}. Note that we construct the $Q$ values so that for all $s\in\S$, $\max_{a}Q(s,a)=\epsilon$. By proposition~\ref{rQ:bijection}, the corresponding reward function induced by Bellman optimality equation is
\begin{equation}\label{eq:constructedRhat}
\hat R(s,a)=\left\{
\begin{aligned}
&(1-\gamma)\epsilon &&\forall s\in \S,a=\pi^\dagger(s),\\
&-\gamma\epsilon,&& \text{ otherwise.}
\end{aligned}
\right.
\end{equation}
Then one can let $r_t= \hat R(s_t,a_t)$ so that $\bv r=(r_0,...,r_{T-1})$, $\hat R$ in~\eqref{eq:constructedRhat}, together with $Q$ in~\eqref{eq:constructedQ} is a feasible solution to~\eqref{attack:formulation}-\eqref{attack:c3}.
\end{proof}

\section{Proof of~\thmref{TCE:bound-cost}}
The proof of~\thmref{TCE:bound-cost} relies on a few lemmas. We first prove the following result, which shows that given two vectors that have equal element summation, the vector whose elements are smoother will have smaller $\ell_\alpha$ norm for any $\alpha\ge 1$.
This result is used later to prove~\lemref{optimal-attack-form}.
\begin{lemma}{}{}\label{lem:allequal}
Let $x,y\in\R^T$ be two vectors. Let $\mathcal{I}\subset \{0,1,...,T-1\}$ be a subset of indexes such that
\begin{equation}
i).\quad x_i=\frac{1}{|\I|}\sum_{j\in \I}y_j, \forall i\in \I, \quad\quad ii).\quad x_i=y_i, \forall i\neq \I. 
\end{equation}
Then for any $\alpha\ge 1$, we have $\|x\|_\alpha\le \|y\|_\alpha$.
\end{lemma}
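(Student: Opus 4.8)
The plan is to exploit the fact that $x$ and $y$ coincide outside $\mathcal{I}$, which localizes the entire comparison to the coordinates in $\mathcal{I}$. Assume first that $1\le\alpha<\infty$. Because $x_i=y_i$ for every $i\notin\mathcal{I}$, the corresponding terms cancel in $\|x\|_\alpha^\alpha-\|y\|_\alpha^\alpha$, so it suffices to show $\sum_{i\in\mathcal{I}}|x_i|^\alpha\le\sum_{i\in\mathcal{I}}|y_i|^\alpha$. Writing $\bar y:=\frac{1}{|\mathcal{I}|}\sum_{j\in\mathcal{I}}y_j$ for the common value that every $x_i$, $i\in\mathcal{I}$, takes by hypothesis~(i), the left-hand side is exactly $|\mathcal{I}|\,|\bar y|^\alpha$.

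The heart of the argument is then the single scalar inequality $|\bar y|^\alpha\le\frac{1}{|\mathcal{I}|}\sum_{j\in\mathcal{I}}|y_j|^\alpha$, which I would obtain from the convexity of the map $t\mapsto|t|^\alpha$ (valid precisely because $\alpha\ge1$). Concretely, the triangle inequality gives $|\bar y|\le\frac{1}{|\mathcal{I}|}\sum_{j\in\mathcal{I}}|y_j|$, and Jensen's inequality applied to the convex function $t\mapsto t^\alpha$ on $[0,\infty)$ upgrades this to $|\bar y|^\alpha\le\bigl(\frac{1}{|\mathcal{I}|}\sum_{j\in\mathcal{I}}|y_j|\bigr)^\alpha\le\frac{1}{|\mathcal{I}|}\sum_{j\in\mathcal{I}}|y_j|^\alpha$. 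Multiplying through by $|\mathcal{I}|$ yields $|\mathcal{I}|\,|\bar y|^\alpha\le\sum_{j\in\mathcal{I}}|y_j|^\alpha$, i.e.\ $\|x\|_\alpha^\alpha\le\|y\|_\alpha^\alpha$, and taking $\alpha$-th roots gives the claim.

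There is essentially no serious obstacle here; the content is just that replacing a block of coordinates by their average is a nonexpansive operation in every $\ell_\alpha$ norm with $\alpha\ge1$, which is Jensen's inequality. The only points requiring a little care are keeping the absolute values correct when $\alpha$ is not an even integer (handled above by first passing to $|\bar y|$ via the triangle inequality), and the boundary case $\alpha=\infty$, which is not covered by the power computation but follows directly: $|\bar y|\le\max_{j\in\mathcal{I}}|y_j|\le\|y\|_\infty$ while the untouched coordinates are unchanged, so $\|x\|_\infty\le\|y\|_\infty$.
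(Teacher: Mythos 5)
Your proof is correct, and it shares the paper's overall skeleton: both arguments localize the comparison to the coordinates in $\mathcal{I}$ (since $x$ and $y$ agree elsewhere), note that there $\sum_{i\in\mathcal{I}}|x_i|^\alpha = |\mathcal{I}|\,|\bar y|^\alpha$ with $\bar y$ the block average, and then reduce everything to a single power-mean inequality. Where you diverge is in the tool and the case structure. The paper handles three cases separately: $\alpha=1$ by the triangle inequality alone, $1<\alpha<\infty$ by H\"older's inequality with conjugate exponent $\beta=\alpha/(\alpha-1)$ applied to $\sum_{j\in\mathcal{I}}|y_j|\cdot 1$, and $\alpha=\infty$ by a max comparison. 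You instead invoke Jensen's inequality for the convex map $t\mapsto t^\alpha$ on $[0,\infty)$, which yields exactly the same power-mean bound that H\"older's produces, but uniformly for all $1\le\alpha<\infty$: the conjugate exponent never appears, so the $\alpha=1$ case needs no special treatment (convexity degenerates to linearity there). Your treatment of $\alpha=\infty$ coincides with the paper's. The net effect is that your route gives a two-case rather than three-case proof and makes the underlying content explicit — replacing a block of coordinates by their average is nonexpansive in every $\ell_\alpha$ norm, $\alpha\ge 1$, precisely because $t\mapsto|t|^\alpha$ is convex — whereas the paper's H\"older-based computation obscures this slightly and pays for it with the extra case split.
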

\begin{proof}
Note that the conditions $i)$ and $ii)$ suggest the summation of elements in $x$ and $y$ are equal, and only elements in $\mathcal{I}$ differ for the two vectors. However, the elements in $\mathcal{I}$ of $x$ are smoother than that of $y$, thus $x$ has smaller norm. To prove the result, we consider three cases separately.

Case 1: $\alpha=1$. Then we have
\begin{equation}
\|x\|_\alpha-\|y\|_\alpha=\sum_{i}|x_i|-\sum_{j}|y_j|=\sum_{i\in \I}|x_i|-\sum_{j\in \I}|y_j|=|\sum_{j\in \I}y_j|-\sum_{j\in \I}|y_j|\le0.
\end{equation}

Case 2: $1< \alpha<\infty$. We show $\|x\|_\alpha^\alpha\le\|y\|_\alpha^\alpha$. Note that
\begin{equation}
\begin{aligned}\label{eq:allequal-case2}
\|x\|_\alpha^\alpha-\|y\|_\alpha^\alpha&=\sum_{i}|x_i|^\alpha-\sum_{j}|y_j|^\alpha=\sum_{i\in \mathcal{I}}|x_i|^\alpha-\sum_{j\in \mathcal{I}}|y_j|^\alpha\\
&=\frac{1}{|\I|^{\alpha-1}}|\sum_{j\in \I}y_j|^\alpha-\sum_{j\in \mathcal{I}}|y_j|^\alpha\le\frac{1}{|\I|^{\alpha-1}}(\sum_{j\in \I}|y_j|)^\alpha-\sum_{j\in \mathcal{I}}|y_j|^\alpha.
\end{aligned}
\end{equation}
Let $\beta=\frac{\alpha}{\alpha-1}$. By Holder's inequality, we have
\begin{equation}\label{eq:allequal-case2holder}
\sum_{j\in \I}|y_j|\le (\sum_{j\in \I}|y_j|^\alpha)^\frac{1}{\alpha}(\sum_{j\in \I}1^\beta)^\frac{1}{\beta}= (\sum_{j\in \I}|y_j|^\alpha)^\frac{1}{\alpha}|\I|^{1-\frac{1}{\alpha}}.
\end{equation}
Plugging~\eqref{eq:allequal-case2holder} into~\eqref{eq:allequal-case2}, we have
\begin{equation}
\|x\|_\alpha^\alpha-\|y\|_\alpha^\alpha\le\frac{1}{|\I|^{\alpha-1}}(\sum_{j\in \I}|y_j|^\alpha)|\I|^{\alpha-1}-\sum_{j\in \mathcal{I}}|y_j|^\alpha= 0.
\end{equation}

Case 3: $\alpha=\infty$. We have
\begin{equation}
\begin{aligned}
\|x\|_\alpha&=\max_i|x_i|=\max\{\frac{1}{|\I|}|\sum_{j\in \I}y_j|,\max_{i\notin \I}|x_i|\}\le\max\{\frac{1}{|\I|}\sum_{j\in \I}|y_j|,\max_{i\notin \I}|x_i|\}\\
&\le \max\{\max_{j\in \I}|y_j|,\max_{i\notin \I}|x_i|\}=\max\{\max_{j\in \I}|y_j|,\max_{j\notin \I}|y_j|\}=\max_j |y_j|=\|y\|_\alpha.
\end{aligned}
\end{equation}
Therefore $\forall \alpha\ge1$, we have $\|x\|_\alpha\le \|y\|_\alpha$.
\end{proof}

Next we prove~\lemref{optimal-attack-form}, which shows that one possible optimal attack solution to~\eqref{attack:formulation}-\eqref{attack:c3} takes the following form: shift all the clean rewards in $T_{s,a}$ by the same amount $\psi(s,a)$. Here $\psi(s,a)$ is a function of state $s$ and action $a$. That means, rewards belonging to different $T_{s,a}$ might be shifted a different amount, but those corresponding to the same $(s,a)$ pair will be identically shifted. 
\begin{lemma}{}{}\label{optimal-attack-form}
There exists a function $\psi(s,a)$ such that $r_t=r_t^0+\psi(s_t,a_t)$, together with some $\hat R$ and $Q$,  is an optimal solution to our attack problem~\eqref{attack:formulation}-\eqref{attack:c3}.
\end{lemma}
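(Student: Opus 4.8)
The plan is to start from an arbitrary optimal solution $(\bv r^*, \hat R^*, Q^*)$ of \eqref{attack:formulation}-\eqref{attack:c3} and reshape it into a solution of the claimed form without increasing the objective. The crucial observation is that the constraints \eqref{attack:c1}-\eqref{attack:c3} depend on $\bv r$ \emph{only} through the per-group averages $\hat R(s,a)=\frac{1}{|T_{s,a}|}\sum_{t\in T_{s,a}}r_t$: given these averages, $Q$ is pinned down by \eqref{attack:c2} (a policy-evaluation linear system for $\pi^\dagger$, uniquely solvable since $\gamma<1$) and the polytope constraint \eqref{attack:c3} is unaffected by how the reward mass is distributed within a group. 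Hence any reallocation of rewards inside each $T_{s,a}$ that preserves its average stays feasible with the same $\hat R^*$ and $Q^*$.

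Concretely, let $\hat R^0(s,a)=\frac{1}{|T_{s,a}|}\sum_{t\in T_{s,a}}r_t^0$ be the clean average, define $\psi(s,a)=\hat R^*(s,a)-\hat R^0(s,a)$, and set $\tilde r_t = r_t^0 + \psi(s_t,a_t)$. By construction the group average of $\tilde{\bv r}$ equals $\hat R^0(s,a)+\psi(s,a)=\hat R^*(s,a)$, so $(\tilde{\bv r},\hat R^*,Q^*)$ satisfies all three constraints and is feasible. It then remains only to show $\|\tilde{\bv r}-\bv r^0\|_\alpha \le \|\bv r^*-\bv r^0\|_\alpha$, which forces $\tilde{\bv r}$ to be optimal as well and proves the lemma, since $\tilde{\bv r}$ has exactly the claimed form.

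For the norm comparison I would enumerate the distinct groups as $T_{s_1,a_1},\dots,T_{s_N,a_N}$, which partition $\{0,\dots,T-1\}$ because each index $t$ belongs to exactly one state-action pair, and build a chain $\bv r^*-\bv r^0 = y^{(0)}, y^{(1)},\dots,y^{(N)} = \tilde{\bv r}-\bv r^0$ where $y^{(k)}$ is obtained from $y^{(k-1)}$ by replacing its entries on $T_{s_k,a_k}$ with their average over that group. Each such step is an instance of \lemref{lem:allequal} with $\mathcal{I}=T_{s_k,a_k}$, so $\|y^{(k)}\|_\alpha\le\|y^{(k-1)}\|_\alpha$; chaining the inequalities yields $\|\tilde{\bv r}-\bv r^0\|_\alpha\le\|\bv r^*-\bv r^0\|_\alpha$. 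One also checks that $y^{(N)}$ equals $\tilde{\bv r}-\bv r^0$, i.e. that the final constant value on $T_{s_k,a_k}$ is $\psi(s_k,a_k)$: this holds since the average of $\bv r^*-\bv r^0$ over $T_{s_k,a_k}$ is $\hat R^*(s_k,a_k)-\hat R^0(s_k,a_k)=\psi(s_k,a_k)$.

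The genuinely delicate point is not any single step but the bookkeeping of the iterative smoothing: one must confirm that processing a later group never disturbs the entries, and hence the average, on an already-processed group, so that each application of \lemref{lem:allequal} has its hypotheses met with the running vector playing the role of $y$. This holds precisely because the groups are disjoint. A minor preliminary is to note that an optimal solution exists at all: the feasible region is nonempty by \propref{feasibility}, it is closed and convex in the free variable $\bv r$ (with $\hat R, Q$ determined by \eqref{attack:c1}-\eqref{attack:c2}), and the objective $\|\bv r - \bv r^0\|_\alpha$ is coercive, so the infimum is attained.
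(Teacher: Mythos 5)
Your proposal is correct and follows essentially the same route as the paper: you construct the same shifted vector $\tilde r_t = r_t^0 + (\hat R^*(s_t,a_t)-\hat R^0(s_t,a_t))$, verify feasibility through the group averages, and invoke \lemref{lem:allequal} group by group exactly as the paper does in its iterative argument. The only cosmetic difference is that you chain the norm inequalities on the difference vectors directly (plus an explicit attainment remark), whereas the paper certifies each intermediate vector as an optimal solution before moving to the next group; the substance is identical.
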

We point out that although there exists an optimal attack taking the above form, it is not necessarily the only optimal solution. 
However, all those optimal solutions must have exactly the same objective value (attack cost), thus it suffices to consider the solution in~\lemref{optimal-attack-form}.
\begin{proof}
Let $\bv r^*=(r_0^*,...,r_{T-1}^*)$, $\hat R^*$ and $Q^*$ be any optimal solution to~\eqref{attack:formulation}-\eqref{attack:c3}. Fix a particular state-action pair $(s,a)$, we have
\begin{equation}
\hat R^*(s,a)=\frac{1}{|T_{s,a}|}\sum_{t\in T_{s,a}}r^*_t.
\end{equation}
Let $\hat R^0(s,a)=\frac{1}{|T_{s,a}|}\sum_{t\in T_{s,a}}r^0_t$ be the reward function for the $(s,a)$ pair estimated from clean data $\bv r^0$. We then define a different poisoned reward vector $\bv r^\prime=(r_0^\prime,...,r_{T-1}^\prime)$, where
\begin{equation}
r^\prime_t=\left\{
\begin{aligned}
&r_t^0+\hat R^*(s,a)-\hat R^0(s,a),&t\in T_{s,a},\\
&r^*_t,& t\notin T_{s,a}.\\
\end{aligned}
\right.
\end{equation}
Now we show $\bv r^\prime$, $\hat R^*$ and $Q^*$ is another optimal solution to~\eqref{attack:formulation}-\eqref{attack:c3}. 
We first verify that $\bv r^\prime$, $\hat R^*$, and $Q^*$ satisfy constraints~\eqref{attack:c1}-\eqref{attack:c3}. To verify~\eqref{attack:c1}, we only need to check $\hat R^*(s,a)=\frac{1}{|T_{s,a}|}\sum_{t\in T_{s,a}}r^\prime_t$, since $\bv r^\prime$ and $\bv r^*$ only differ on those rewards in $T_{s,a}$. We have
\begin{equation}
\begin{aligned}
\frac{1}{|T_{s,a}|}\sum_{t\in T_{s,a}}r^\prime_t&=\frac{1}{|T_{s,a}|}\sum_{t\in T_{s,a}} \left(r_t^0+\hat R^*(s,a)-\hat R^0(s,a)\right)\\
&=\hat R^0(s,a)+\hat R^*(s,a)-\hat R^0(s,a)=\hat R^*(s,a),
\end{aligned}
\end{equation}
Thus $\bv r^\prime$ and $\hat R^*$ satisfy constraint~\eqref{attack:c1}. 
$\hat R^*$ and $Q^*$ obviously satisfy constraints~\eqref{attack:c2} and~\eqref{attack:c3} because $\bv r^*$, $\hat R^*$ and $Q^*$ is an optimal solution.

Let $\delta^\prime=\bv r^\prime-\bv r^0$ and $\delta^*=\bv r^*-\bv r^0$, then one can easily show that $\delta^\prime$ and $\delta^*$ satisfy the conditions in~\lemref{lem:allequal} with $\I=T_{s,a}$. 
Therefore by~\lemref{lem:allequal}, we have
\begin{equation}
\|\bv r^\prime-\bv r^0\|_\alpha=\|\delta^\prime\|_\alpha\le \|\delta^*\|_\alpha=\|\bv r^*-\bv r^0\|_\alpha.
\end{equation}
But note that by our assumption, $\bv r^*$ is an optimal solution, thus $\|\bv r^*-\bv r^0\|_\alpha\le \|\bv r^\prime-\bv r^0\|_\alpha$, which gives $\|\bv r^\prime-\bv r^0\|_\alpha= \|\bv r^*-\bv r^0\|_\alpha$.
This suggests $\bv r^\prime$, $\hat R^*$, and $Q^*$ is another optimal solution. Compared to $\bv r^*$, $\bv r^\prime$ differs in that $r^\prime_t-r^0_t$ now becomes identical for all $t\in T_{s,a}$ for a particular $(s,a)$ pair.
Reusing the above argument iteratively, one can make $r^\prime_t-r^0_t$ identical for all $t\in T_{s,a}$ for all $(s,a)$ pairs, while guaranteeing the solution is still optimal. Therefore, we have
\begin{equation}
 r^\prime_t=r_t^0+\hat R^*(s,a)-\hat R^0(s,a), \forall t\in T_{s,a},\forall s,a,
\end{equation}
together with $\hat R^*$ and $Q^*$ is an optimal solution to~\eqref{attack:formulation}-\eqref{attack:c3}. 
Let $\psi(s,a)=\hat R^*(s,a)-\hat R^0(s,a)$ conclude the proof.
\end{proof}

Finally, \lemref{rQ:sensitivity} provides a sensitive analysis on the value function $Q$ as the reward function changes. 
\begin{lemma}{}{}\label{rQ:sensitivity}
Let $\hat M=(\S,\A,\hat P,\hat R^\prime,\gamma)$ and $\hat M^0=(\S,\A,\hat P,\hat R^0,\gamma)$ be two MDPs, where only the reward function differs. Let $Q^\prime$ and $Q^0$ be action values satisfying the Bellman optimality equation on $\hat M$ and $\hat M^0$ respectively, then
\begin{equation}
(1-\gamma)\|Q^\prime-Q^0\|_\infty\le \|\hat R-\hat R^0\|_\infty \le (1+\gamma)\|Q^\prime-Q^0\|_\infty.
\end{equation}
\end{lemma}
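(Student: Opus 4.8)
The plan is to work directly from the two Bellman optimality equations and reduce everything to a single scalar identity at each state--action pair. By definition, $Q^\prime$ and $Q^0$ satisfy, for every $(s,a)$,
\begin{equation}
Q^\prime(s,a)=\hat R^\prime(s,a)+\gamma\sum_{s^\prime}\hat P(s^\prime\mid s,a)\max_{a^\prime}Q^\prime(s^\prime,a^\prime),\quad Q^0(s,a)=\hat R^0(s,a)+\gamma\sum_{s^\prime}\hat P(s^\prime\mid s,a)\max_{a^\prime}Q^0(s^\prime,a^\prime).
\end{equation}
Subtracting these gives the key identity $Q^\prime(s,a)-Q^0(s,a)=\big(\hat R^\prime(s,a)-\hat R^0(s,a)\big)+\gamma\sum_{s^\prime}\hat P(s^\prime\mid s,a)\big(\max_{a^\prime}Q^\prime(s^\prime,a^\prime)-\max_{a^\prime}Q^0(s^\prime,a^\prime)\big)$, which links the reward gap and the value gap at a single pair. (Here $\hat R^\prime$ is the reward of $\hat M$, written $\hat R$ in the statement.)

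The one technical ingredient I would isolate first is the nonexpansiveness of the max operator: for any two functions $f,g$ on $\A$, $\big|\max_{a^\prime}f(a^\prime)-\max_{a^\prime}g(a^\prime)\big|\le\max_{a^\prime}|f(a^\prime)-g(a^\prime)|$. Applying this with $f=Q^\prime(s^\prime,\cdot)$ and $g=Q^0(s^\prime,\cdot)$ bounds each bracketed term in the identity by $\|Q^\prime-Q^0\|_\infty$, uniformly in $s^\prime$. Since $\hat P(\cdot\mid s,a)$ is a probability distribution, the weighted sum $\sum_{s^\prime}\hat P(s^\prime\mid s,a)\,\|Q^\prime-Q^0\|_\infty$ collapses to $\|Q^\prime-Q^0\|_\infty$, so the whole discounted term is at most $\gamma\|Q^\prime-Q^0\|_\infty$ in absolute value. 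This single bound drives both inequalities.

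For the upper bound I would solve the identity for $\hat R^\prime-\hat R^0$, take absolute values, and use the triangle inequality: $|\hat R^\prime(s,a)-\hat R^0(s,a)|\le|Q^\prime(s,a)-Q^0(s,a)|+\gamma\|Q^\prime-Q^0\|_\infty\le(1+\gamma)\|Q^\prime-Q^0\|_\infty$; taking the supremum over $(s,a)$ yields $\|\hat R^\prime-\hat R^0\|_\infty\le(1+\gamma)\|Q^\prime-Q^0\|_\infty$. For the lower bound I would instead keep the identity in its $Q^\prime-Q^0$ form, bound $|Q^\prime(s,a)-Q^0(s,a)|\le|\hat R^\prime(s,a)-\hat R^0(s,a)|+\gamma\|Q^\prime-Q^0\|_\infty\le\|\hat R^\prime-\hat R^0\|_\infty+\gamma\|Q^\prime-Q^0\|_\infty$, take the supremum over $(s,a)$, and rearrange to get $(1-\gamma)\|Q^\prime-Q^0\|_\infty\le\|\hat R^\prime-\hat R^0\|_\infty$, which is legitimate because $\gamma<1$. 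There is no real obstacle beyond the max-operator inequality; the only point needing a little care is that the lower bound requires taking the supremum \emph{before} rearranging, so that the term $\gamma\|Q^\prime-Q^0\|_\infty$ can be moved to the left-hand side rather than left inside a pointwise estimate.
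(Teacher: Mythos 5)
Your proof is correct and takes essentially the same route as the paper's: your pointwise identity is exactly the paper's decomposition $H_{\hat R^\prime}(Q^\prime)-H_{\hat R^0}(Q^0)=\left[H_{\hat R^\prime}(Q^\prime)-H_{\hat R^\prime}(Q^0)\right]+\left[H_{\hat R^\prime}(Q^0)-H_{\hat R^0}(Q^0)\right]$ written out at each $(s,a)$, and your max-nonexpansiveness step is precisely the contraction property of the Bellman operator that the paper invokes as a black box. The only cosmetic difference is that you work pointwise and take suprema at the end, whereas the paper argues directly at the level of $\ell_\infty$ norms via the triangle and reverse-triangle inequalities.
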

\begin{proof}
Define the Bellman operator as
\begin{equation}
H_{\hat R}(Q)(s,a)=\hat R(s, a)+\gamma\sum_{s^\prime} \hat P(s^\prime\mid s, a) \max _{a^\prime} Q(s^\prime, a^\prime).
\end{equation}
From now on we suppress variables $s$ and $a$ for convenience. 
Note that due to the Bellman optimality, we have $H_{\hat R^0}(Q^0)=Q^0$ and $H_{\hat R^\prime}(Q^\prime)=Q^\prime$, thus
\begin{equation}
\begin{aligned}
\|Q^\prime-Q^0\|_\infty&=\|H_{\hat R^\prime}(Q^\prime)-H_{\hat R^0}(Q^0)\|_\infty\\
&=\|H_{\hat R^\prime}(Q^\prime)-H_{\hat R^\prime}(Q^0)+H_{\hat R^\prime}(Q^0)-H_{\hat R^0}(Q^0)\|_\infty\\
&\le\|H_{\hat R^\prime}(Q^\prime)-H_{\hat R^\prime}(Q^0)\|_\infty+\|H_{\hat R^\prime}(Q^0)-H_{\hat R^0}(Q^0)\|_\infty\\
&\le \gamma \|Q^\prime-Q^0\|_\infty+\|H_{\hat R^\prime}(Q^0)-H_{\hat R^0}(Q^0)\|_\infty \text{ (by contraction of $H_{\hat R^\prime}(\cdot)$)}\\
&=\gamma \|Q^\prime-Q^0\|_\infty+\|\hat R^\prime-\hat R^0\|_\infty \text{ (by $H_{\hat R^\prime}(Q^0)-H_{\hat R^0}(Q^0) =\hat R^\prime-\hat R^0$)}\\
\end{aligned}
\end{equation}
Rearranging we have $(1-\gamma)\|Q^\prime-Q^0\|_\infty\le \|\hat R^\prime-\hat R^0\|_\infty$. Similarly we have
\begin{equation}
\begin{aligned}
\|Q^\prime-Q^0\|_\infty&=\|H_{\hat R^\prime}(Q^\prime)-H_{\hat R^0}(Q^0)\|_\infty\\
&=\|H_{\hat R^\prime}(Q^0)-H_{\hat R^0}(Q^0)+H_{\hat R^\prime}(Q^\prime)-H_{\hat R^\prime}(Q^0)\|_\infty\\
&\ge \|H_{\hat R^\prime}(Q^0)-H_{\hat R^0}(Q^0)\|_\infty-\|H_{\hat R^\prime}(Q^\prime)-H_{\hat R^\prime}(Q^0)\|_\infty\\
&\ge\|H_{\hat R^\prime}(Q^0)-H_{\hat R^0}(Q^0)\|_\infty -\gamma \|Q^\prime-Q^0\|_\infty\\
&=\|\hat R^\prime-\hat R^0\|_\infty-\gamma \|Q^\prime-Q^0\|_\infty
\end{aligned}
\end{equation}
Rearranging we have $\|\hat R^\prime-\hat R^0\|_\infty\le (1+\gamma)\|Q^\prime-Q^0\|_\infty$, concluding the proof.
\end{proof}

Now we are ready to prove our main result.
\TCEboundcost*
\begin{proof}
We construct the following value function $Q^\prime$.
\begin{equation}
Q^\prime(s,a)=\left\{
\begin{aligned}
&Q^0(s,a)+\frac{\Delta(\epsilon)}{2}, &&\forall s\in \S, a=\pi^\dagger(s),\\
&Q^0(s,a)-\frac{\Delta(\epsilon)}{2}, &&\forall s\in \S, \forall a\neq \pi^\dagger(s).\\
\end{aligned}
\right.
\end{equation}
Note that $\forall s\in\S$ and $\forall a\neq\pi^\dagger(s)$, we have
\begin{equation}
\begin{aligned}
\Delta(\epsilon)&=\max_{s^\prime\in\S}[\max_{a^\prime\neq \pi^\dagger(s^\prime)} Q^0(s^\prime,a^\prime)-Q^0(s^\prime,\pi^\dagger(s^\prime))+\epsilon]_+\\
&\ge \max_{a^\prime\neq \pi^\dagger(s)} Q^0(s,a^\prime)-Q^0(s,\pi^\dagger(s))+\epsilon\ge  Q^0(s,a)-Q^0(s,\pi^\dagger(s))+\epsilon,
\end{aligned}
\end{equation}
which leads to
\begin{equation}
Q^0(s,a)-Q^0(s,\pi^\dagger(s)) -\Delta(\epsilon)\le -\epsilon,
\end{equation}
thus we have $\forall  s\in\S$ and $\forall a\neq \pi^\dagger(s)$,
\begin{equation}
\begin{aligned}
Q^\prime(s,\pi^\dagger(s))&= Q^0(s,\pi^\dagger(s))+\frac{\Delta(\epsilon)}{2}\\
&=Q^0(s,a)-[Q^0(s,a)-Q^0(s,\pi^\dagger(s))-\Delta(\epsilon)]-\frac{\Delta(\epsilon)}{2}\\
&\ge Q^0(s,a)+\epsilon-\frac{\Delta(\epsilon)}{2} =Q^\prime(s,a)+\epsilon.
\end{aligned}
\end{equation}
Therefore $Q^\prime$ satisfies the constraint~\eqref{attack:c3}. 
By proposition~\ref{rQ:bijection}, there exists a unique function $R^\prime$ such that $Q^\prime$ satisfies the Bellman optimality equation of MDP $\hat M^\prime=(\S,\A, \hat P, R^\prime,\gamma)$. 
We then construct the following reward vector $\bv r^\prime=(r_0^\prime,...,r_{T-1}^\prime)$ such that $\forall (s,a)$ and $\forall t\in T_{s,a}$, $r_t^\prime=r_t^0+R^\prime(s,a)-\hat R^0(s,a)$, where $\hat R^0(s,a)$ is the reward function estimated from $\bv r^0$.
The reward function estimated on $\bv r^\prime$ is then
\begin{equation}
\begin{aligned}
\hat R^\prime(s,a)&=\frac{1}{|T_{s,a}|}\sum_{t\in T_{s,a}}r_t^\prime=\frac{1}{|T_{s,a}|}\sum_{t\in T_{s,a}}\left(r_t^0+R^\prime(s,a)-\hat R^0(s,a)\right)\\
&= \hat R^0(s,a)+R^\prime(s,a)-\hat R^0(s,a)=R^\prime(s,a).
\end{aligned}
\end{equation} 
Thus $\bv r^\prime$, $\hat R^\prime$ and $Q^\prime$ is a feasible solution to~\eqref{attack:formulation}-\eqref{attack:c3}.
Now we analyze the attack cost for $\bv r^\prime$, which gives us a natural upper bound on the attack cost of the optimal solution $\bv r^*$. 
Note that $Q^\prime$ and $Q^0$ satisfy the Bellman optimality equation for reward function $\hat R^\prime$ and $\hat R^0$ respectively, and
\begin{equation}
\|Q^\prime-Q^0\|_\infty=\frac{\Delta(\epsilon)}{2},
\end{equation}
thus by~\lemref{rQ:sensitivity}, we have $\forall t$,
\begin{equation}\label{eq:upper}
\begin{aligned}
|r^\prime_t-r^0_t|&=|\hat R^\prime(s_t,a_t)-\hat R^0(s_t,a_t)|\le \max_{s,a}|\hat R^\prime(s,a)-\hat R^0(s,a)|=\|\hat R^\prime-\hat R^0\|_\infty\\
&\le(1+\gamma)\|Q^\prime-Q^0\|_\infty=\frac{1}{2}(1+\gamma)\Delta(\epsilon).
\end{aligned}
\end{equation}
Therefore, we have
\begin{equation}
\|\bv r^*-\bv r^0\|_\alpha\le \|\bv r^\prime-\bv r^0\|_\alpha=(\sum_{t=0}^{T-1}|r^\prime_t-r^0_t|^\alpha)^{\frac{1}{\alpha}}\le\frac{1}{2}(1+\gamma)\Delta(\epsilon) T^\frac{1}{\alpha}.
\end{equation}

Now we prove the lower bound. We consider two cases separately.

Case 1: $\Delta(\epsilon)=0$. We must have $Q^0(s,\pi^\dagger(s))\ge Q^0(s,a)+\epsilon$, $\forall s\in\S,\forall a\neq \pi^\dagger(s)$. In this case no attack is needed and therefore the optimal solution is $\bv r^*=\bv r^0$. The lower bound holds trivially.

Case 2: $\Delta(\epsilon)>0$. Let $s^\prime$ and $a^\prime$ ($a^\prime\neq \pi^\dagger(s^\prime)$) be a state-action pair such that 
\begin{equation}
\Delta(\epsilon)=Q^0(s^\prime,a^\prime)-Q^0(s^\prime,\pi^\dagger(s^\prime))+\epsilon.
\end{equation}
Let $\bv r^*$, $\hat R^*$ and $Q^*$ be an optimal solution to~\eqref{attack:formulation}-\eqref{attack:c3} that takes the form in~\lemref{optimal-attack-form}, i.e.,
\begin{equation}
 r^*_t=r_t^0+\hat R^*(s,a)-\hat R^0(s,a), \forall t\in T_{s,a},\forall s,a.
 \end{equation}
Constraint~\eqref{attack:c3} ensures that $Q^*(s^\prime,\pi^\dagger(s^\prime))\ge Q^*(s^\prime,a^\prime)+\epsilon$, in which case either one of the following two conditions must hold:
\begin{equation}
i).\quad Q^*(s^\prime,\pi^\dagger(s^\prime))-Q^0(s^\prime,\pi^\dagger(s^\prime))\ge \frac{\Delta(\epsilon)}{2}, \quad\quad ii).\quad Q^0(s^\prime,a^\prime)-Q^*(s^\prime,a^\prime)\ge \frac{\Delta(\epsilon)}{2},
\end{equation}
since otherwise we have
\begin{equation}
\begin{aligned}
&Q^*(s^\prime,\pi^\dagger(s^\prime))<Q^0(s^\prime,\pi^\dagger(s^\prime))+\frac{\Delta(\epsilon)}{2}=Q^0(s^\prime,\pi^\dagger(s^\prime))+\frac{1}{2}[Q^0(s^\prime,a^\prime)-Q^0(s^\prime,\pi^\dagger(s^\prime))+\epsilon]\\
&=\frac{1}{2}Q^0(s^\prime,a^\prime)+\frac{1}{2}Q^0(s^\prime,\pi^\dagger(s^\prime))+\frac{\epsilon}{2}=Q^0(s^\prime,a^\prime)-\frac{1}{2}[Q^0(s^\prime,a^\prime)-Q^0(s^\prime,\pi^\dagger(s^\prime))+\epsilon]+\epsilon\\
&=Q^0(s^\prime,a^\prime)-\frac{\Delta(\epsilon)}{2}+\epsilon< Q^*(s^\prime,a^\prime)+ \epsilon.\\
\end{aligned}
\end{equation}
Next note that if either $i)$ or $ii)$  holds, we have $\|Q^*-Q^0\|_\infty\ge \frac{\Delta(\epsilon)}{2}$. By~\lemref{rQ:sensitivity}, we have
\begin{equation}\label{eq:lower}
\max_{s,a}|\hat R^*(s,a)-\hat R^0(s,a)|=\|\hat R^*-\hat R^0\|_\infty\ge(1-\gamma)\|Q^*-Q^0\|_\infty\ge \frac{1}{2}(1-\gamma)\Delta(\epsilon).
\end{equation}
Let $s^*,a^*\in\arg\max_{s,a}|\hat R^*(s,a)-\hat R^0(s,a)|$, then we have
\begin{equation}
|\hat R^*(s^*,a^*)-\hat R^0(s^*,a^*)|\ge \frac{1}{2}(1-\gamma)\Delta(\epsilon).
\end{equation}
Therefore, we have
\begin{equation}
\begin{aligned}
\|\bv r^*-\bv r^0\|_\alpha^\alpha&=\sum_{t=0}^{T-1}|r_t^*-r_t^0|^\alpha=\sum_{s,a}\sum_{t\in T_{s,a}} |r_t^*-r_t^0|^\alpha\ge\sum_{t\in T_{s^*,a^*}}|r_t^*-r_t^0|^\alpha \\
&=\sum_{t\in T_{s^*,a^*}}|\hat R^*(s^*,a^*)-\hat R^0(s^*,a^*)|^\alpha\ge\left(\frac{1}{2}(1-\gamma)\Delta(\epsilon)\right)^\alpha|T_{s^*,a^*}| \\
&\ge\left(\frac{1}{2}(1-\gamma)\Delta(\epsilon)\right)^\alpha\min_{s,a} {|T_{s,a}|}.
\end{aligned}
\end{equation}
Therefore $\|\bv r^*-\bv r^0\|_\alpha\ge \frac{1}{2}(1-\gamma)\Delta(\epsilon)\left(\min_{s,a} {|T_{s,a}|}\right)^\frac{1}{\alpha}$. 

We finally point out that while an optimal solution $\bv r^*$ may not necessarily take the form in~\lemref{optimal-attack-form}, it suffices to bound the cost of an optimal attack which indeed takes this form (as we did in the proof) since all optimal attacks have exactly the same objective value.
\end{proof}

\section{Convex Surrogate for LQR Attack Optimization}
\label{convex:surrogate}
By pulling the positive semi-definite constraints on $Q$ and $R$ out of the lower level optimization~\eqref{attack-LQR-c5}, one can turn the original attack optimization~\eqref{attack:objective-LQR}-\eqref{attack-LQR-c6} into the following surrogate optimization:
\begin{eqnarray}\label{attack:LQR-single}
  \min_{\bv r,  \hat Q, \hat R, \hat q, \hat c, X, x}  &&\|\bv r-\bv r_0\|_\alpha\\ 
  \text { s.t. }
   &&-\gamma\left(\hat R+\gamma \hat B^{\top} X \hat B\right)^{-1} \hat B^{\top} X \hat A=K^\dagger,\label{attack:LQR-single-c1}\\
   && -\gamma\left(\hat R+\gamma \hat B^{\top} X \hat B\right)^{-1} \hat B^{\top} x = k^\dagger ,\label{attack:LQR-single-c2}\\
    && X=\gamma \hat A^{\top} X \hat A-\gamma^{2} \hat A^{\top} X \hat B\left(\hat R+\gamma \hat B^{\top} X \hat B\right)^{-1} \hat B^{\top} X \hat A+\hat Q\label{attack:LQR-single-c3}\\
     && x =   \hat q + \gamma(\hat A+\hat BK^\dagger)^\top x \label{attack:LQR-single-c4}\\
     &&(\hat{Q}, \hat{R},\hat q,\hat c) = \argmin \sum_{t=0}^{T-1}\left\|\frac{1}{2}s_t^\top Qs_t+q^\top s_t+a_t^\top Ra_t+c+r_t\right\|_{2}^{2}\label{attack:LQR-single-KKT2}\\
    &&\hat Q\succeq 0, \hat R\succeq \epsilon I, X\succeq 0\label{attack-LQR-single-c9}.
\end{eqnarray}
The feasible set of~\eqref{attack:LQR-single}-\eqref{attack-LQR-single-c9} is a subset of the original problem, thus the surrogate attack optimization is a more stringent  formulation than the original attack optimization, that is, successfully solving the surrogate optimization gives us a (potentially) sub-optimal solution to the original problem. To see why the surrogate optimization is more stringent, we illustrate with a much simpler example as below. A formal proof is straight forward, thus we omit it here. The original problem is~\eqref{bilevel-example}-\eqref{bilevel-example-c1}. The feasible set for $\hat a$ is a singleton set $\{0\}$, and the optimal objective value is 0.\begin{eqnarray}\label{bilevel-example}
  \min_{\hat a}  && 0\\ 
  \text { s.t. }
   && \hat a = \argmin_{a\ge0} (a+3)^2,\label{bilevel-example-c1}
  \end{eqnarray}
 Once we pull the constraint out of the lower-level optimization~\eqref{bilevel-example-c1}, we end up with a surrogate optimization~\eqref{bilevel-example-surrogate}-\eqref{bilevel-example-surrogate-c2}. Note that~\eqref{bilevel-example-surrogate-c1} requires $\hat a=-3$, which does not satisfy~\eqref{bilevel-example-surrogate-c2}. Therefore the feasible set of the surrogate optimization is $\emptyset$, meaning it is more stringent than~\eqref{bilevel-example}-\eqref{bilevel-example-c1}.
  \begin{eqnarray}\label{bilevel-example-surrogate}
  \min_{\hat a}  && 0\\ 
  \text { s.t. }
   && \hat a = \argmin (a+3)^2,\label{bilevel-example-surrogate-c1}\\
   && \hat a\ge 0\label{bilevel-example-surrogate-c2}
  \end{eqnarray}

Back to our attack optimization~\eqref{attack:LQR-single}-\eqref{attack-LQR-single-c9}, this surrogate attack optimization comes with the advantage of being convex, thus can be solved to global optimality.
\begin{restatable}{proposition}{attackLQRconvexity}
\label{attackLQR:convexity}
The surrogate attack optimization~\eqref{attack:LQR-single}-\eqref{attack-LQR-single-c9} is convex.
\end{restatable}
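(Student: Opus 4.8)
The plan is to show that, after two reformulations—clearing the matrix inverse and replacing the inner minimization by its first-order optimality conditions—every constraint of \eqref{attack:LQR-single}-\eqref{attack-LQR-single-c9} becomes either an affine equality or a convex conic constraint, so the program minimizes a convex objective over a convex feasible set. The objective $\|\bv r-\bv r_0\|_\alpha$ is a norm, hence convex for $\alpha\ge1$, and~\eqref{attack:LQR-single-c4} is already affine in $(x,\hat q)$ while~\eqref{attack-LQR-single-c9} is an intersection of positive-semidefinite-cone constraints; so the real work lies in the three remaining constraints~\eqref{attack:LQR-single-c1},~\eqref{attack:LQR-single-c2}, and the Riccati constraint~\eqref{attack:LQR-single-c3}, together with the bilevel constraint~\eqref{attack:LQR-single-KKT2}.

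First I would treat~\eqref{attack:LQR-single-c1} and~\eqref{attack:LQR-single-c2}. Because~\eqref{attack-LQR-single-c9} enforces $\hat R\succeq\epsilon I$ and $X\succeq0$, the matrix $\hat R+\gamma\hat B^\top X\hat B$ is positive definite and therefore invertible, so left-multiplying by it is an equivalence transformation. Clearing the inverse turns~\eqref{attack:LQR-single-c1} into $-\gamma\hat B^\top X\hat A=(\hat R+\gamma\hat B^\top X\hat B)K^\dagger$ and~\eqref{attack:LQR-single-c2} into $-\gamma\hat B^\top x=(\hat R+\gamma\hat B^\top X\hat B)k^\dagger$. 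Since $\hat A,\hat B,K^\dagger,k^\dagger$ are fixed constants rather than optimization variables, both right-hand sides are affine in the decision variables; crucially, because $K^\dagger$ and $k^\dagger$ are constant, no product of two decision variables arises, so~\eqref{attack:LQR-single-c1} is affine in $(\hat R,X)$ and~\eqref{attack:LQR-single-c2} is affine in $(\hat R,X,x)$.

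The main obstacle is the Riccati constraint~\eqref{attack:LQR-single-c3}, which carries the term $\gamma^2\hat A^\top X\hat B(\hat R+\gamma\hat B^\top X\hat B)^{-1}\hat B^\top X\hat A$ that is genuinely quadratic in $X$. The key idea is to reuse~\eqref{attack:LQR-single-c1}, which in cleared form states $(\hat R+\gamma\hat B^\top X\hat B)^{-1}\hat B^\top X\hat A=-K^\dagger/\gamma$. Substituting this identity collapses the quadratic term to $-\gamma\hat A^\top X\hat B K^\dagger$, so~\eqref{attack:LQR-single-c3} reduces to
\begin{equation}
X=\gamma\hat A^\top X(\hat A+\hat B K^\dagger)+\hat Q,
\end{equation}
which is affine in $(X,\hat Q)$. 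This substitution is the crux of the argument: the factor supplied by~\eqref{attack:LQR-single-c1} is exactly what is needed to linearize the Riccati equation.

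Finally I would remove the bilevel structure in~\eqref{attack:LQR-single-KKT2}. Having pulled the semidefinite constraints out into~\eqref{attack-LQR-single-c9}, the inner problem is an \emph{unconstrained} least-squares whose residual $\tfrac12 s_t^\top Qs_t+q^\top s_t+a_t^\top Ra_t+c+r_t$ is affine in the entries of $(Q,R,q,c)$ and in $r_t$, so the inner objective is a convex quadratic. For a convex quadratic the set of global minimizers coincides with the solution set of the stationarity (normal) equations, which is an affine system in $(\hat Q,\hat R,\hat q,\hat c,\bv r)$; replacing the inner minimization by these normal equations therefore yields an equivalent affine constraint coupling $\bv r$ to $(\hat Q,\hat R,\hat q,\hat c)$. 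Collecting the pieces, the feasible set is the intersection of affine subspaces with the convex cones $\{\hat Q\succeq0\}$, $\{\hat R\succeq\epsilon I\}$, and $\{X\succeq0\}$, hence convex, and minimizing the convex objective over it is a convex program, which establishes the proposition.
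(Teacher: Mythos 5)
Your proposal is correct and follows essentially the same route as the paper's proof: clear the inverse in~\eqref{attack:LQR-single-c1}--\eqref{attack:LQR-single-c2}, use the cleared form of~\eqref{attack:LQR-single-c1} to collapse the Riccati constraint to the affine equation $X=\gamma\hat A^\top X(\hat A+\hat BK^\dagger)+\hat Q$, and replace the inner unconstrained least-squares problem~\eqref{attack:LQR-single-KKT2} by its stationarity (KKT/normal) equations, which are linear in $(\hat Q,\hat R,\hat q,\hat c,\bv r)$. If anything, you are slightly more explicit than the paper in justifying why these reformulations are equivalences (positive definiteness of $\hat R+\gamma\hat B^\top X\hat B$ under~\eqref{attack-LQR-single-c9}), which is a welcome addition rather than a deviation.
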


\begin{proof}
First note that the sub-level optimization~\eqref{attack:LQR-single-KKT2} is itself a convex problem, thus is equivalent to the corresponding KKT condition. We write out the KKT condition of~\eqref{attack:LQR-single-KKT2} to derive an explicit form of our attack formulation as below:
\begin{eqnarray}\label{attack:LQR-single-app}
  \min_{\bv r,  \hat Q, \hat R, \hat q, \hat c, X, x}  &&\|\bv r-\bv r_0\|_\alpha\\ 
  \text { s.t. }
   &&-\gamma\left(\hat R+\gamma \hat B^{\top} X \hat B\right)^{-1} \hat B^{\top} X \hat A=K^\dagger,\label{attack:LQR-single-c1-app}\\
   && -\gamma\left(\hat R+\gamma \hat B^{\top} X \hat B\right)^{-1} \hat B^{\top} x = k^\dagger ,\label{attack:LQR-single-c2-app}\\
    && X=\gamma \hat A^{\top} X \hat A-\gamma^{2} \hat A^{\top} X \hat B\left(\hat R+\gamma \hat B^{\top} X \hat B\right)^{-1} \hat B^{\top} X \hat A+\hat Q\label{attack:LQR-single-c3-app}\\
     && x =   \hat q + \gamma(\hat A+\hat BK^\dagger)^\top x \label{attack:LQR-single-c4-app}\\
  &&\sum_{t=0}^{T-1} (\frac{1}{2}s_t^\top \hat Qs_t+\hat q^\top s_t+a_t^\top \hat Ra_t+\hat c+r_t)s_ts_t^\top=0,\label{attack:LQR-single-c5-app}\\
  &&\sum_{t=0}^{T-1} (\frac{1}{2}s_t^\top \hat Qs_t+\hat q^\top s_t+a_t^\top \hat Ra_t+\hat c+r_t)a_ta_t^\top=0,\label{attack:LQR-single-c6-app}\\
  &&\sum_{t=0}^{T-1} (\frac{1}{2}s_t^\top \hat Qs_t+\hat q^\top s_t+a_t^\top \hat Ra_t+\hat c+r_t)s_t=0,\label{attack:LQR-single-c7-app}\\
  &&\sum_{t=0}^{T-1} (\frac{1}{2}s_t^\top \hat Qs_t+\hat q^\top s_t+a_t^\top \hat Ra_t+\hat c+r_t)=0,\label{attack:LQR-single-c8-app}\\
    &&\hat Q\succeq 0, \hat R\succeq \epsilon I, X\succeq 0\label{attack-LQR-single-c9-app}.
\end{eqnarray}
The objective is obviously convex.~\eqref{attack:LQR-single-c1-app}-\eqref{attack:LQR-single-c3-app} are equivalent to
\begin{equation}
-\gamma \hat B^{\top} X \hat A=\left(\hat R+\gamma \hat B^{\top} X \hat B\right)K^\dagger.
\end{equation}
\begin{equation}
-\gamma \hat B^{\top} x=\left(\hat R+\gamma \hat B^{\top} X \hat B\right)k^\dagger.
\end{equation}
\begin{equation}
X=\gamma \hat A^\top X (\hat A+\hat BK^\dagger)+\hat Q,
\end{equation}
Note that these three equality constraints are all linear in $X$, $\hat R$, $x$, and $\hat Q$.
\eqref{attack:LQR-single-c4-app} is linear in $x$ and $\hat q$.~\eqref{attack:LQR-single-c5-app}-\eqref{attack:LQR-single-c8-app} are also linear in $\hat Q$, $\hat R$, $\hat q$, $\hat c$ and $\bv r$.
Finally,~\eqref{attack-LQR-single-c9-app} contains convex constraints on $\hat Q$, $\hat R$, and $X$. Given all above, the attack problem is convex.
\end{proof}

Next we analyze the feasibility of the surrogate attack optimization.
\begin{restatable}{proposition}{attackLQRfeasible}
\label{attackLQR:feasible}
Let $\hat A$, $\hat B$ be the learner's estimated transition kernel. Let 
\begin{equation}
L^\dagger(s,a)=\frac{1}{2}s^\top Q^\dagger s+(q^\dagger)^\top s+a^\top R^\dagger a+c^\dagger
\end{equation}
be the attacker-defined loss function. Assume $R^\dagger\succeq \epsilon I$.  If the target policy $K^\dagger$, $k^\dagger$ is the optimal control policy induced by the LQR with transition kernel $\hat A$, $\hat B$, and loss function $L^\dagger(s,a)$, then the surrogate attack optimization~\eqref{attack:LQR-single}-\eqref{attack-LQR-single-c9} is feasible. Furthermore, the optimal solution can be achieved.
\end{restatable}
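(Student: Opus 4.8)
The plan is to establish feasibility by exhibiting one explicit feasible point, and then to deduce attainment of the optimum from convexity together with coercivity of the norm objective. The guiding idea is that the attacker can report, as the poisoned reward for each data item, the negated attacker loss evaluated at that point; this makes the learner's least-squares fit exact, so that the learner recovers precisely the attacker's loss matrices, whose induced optimal controller is $K^\dagger,k^\dagger$ by hypothesis.

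Concretely, I would first invoke the standard discounted-LQR theory for the system $(\hat A,\hat B)$ with loss $L^\dagger$. Since $L^\dagger$ is a valid LQR loss we have $Q^\dagger\succeq 0$, and by assumption $R^\dagger\succeq \epsilon I\succ 0$; hence the Algebraic Riccati Equation~\eqref{LQR:Riccatti} admits a unique solution $X^\dagger\succeq 0$, and the linear equation~\eqref{LQR:Riccatti-2} yields a vector $x^\dagger$, with the gains in~\eqref{LQR:K} equal to the prescribed $K^\dagger,k^\dagger$. I then set $(\hat Q,\hat R,\hat q,\hat c,X,x)=(Q^\dagger,R^\dagger,q^\dagger,c^\dagger,X^\dagger,x^\dagger)$. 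By construction this tuple satisfies the policy and Riccati constraints~\eqref{attack:LQR-single-c1}--\eqref{attack:LQR-single-c4}, and the positive-semidefiniteness constraints~\eqref{attack-LQR-single-c9} hold since $Q^\dagger\succeq 0$, $R^\dagger\succeq \epsilon I$, and $X^\dagger\succeq 0$.

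The crux is the lower-level (argmin) constraint~\eqref{attack:LQR-single-KKT2}. Here I would choose the poisoned rewards $r_t=-L^\dagger(s_t,a_t)=-\left(\frac{1}{2}s_t^\top Q^\dagger s_t+(q^\dagger)^\top s_t+a_t^\top R^\dagger a_t+c^\dagger\right)$ for every $t$. With this choice each summand of the least-squares objective in~\eqref{attack:LQR-single-KKT2} vanishes at $(Q^\dagger,R^\dagger,q^\dagger,c^\dagger)$, so the (nonnegative) objective attains its global minimum value $0$ there; since this tuple also lies in the constraint set $\{Q\succeq 0,R\succeq \epsilon I\}$, it is a constrained minimizer. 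Invoking the data uniqueness conditions (appendix~\ref{LQR:condition_uniqueness}), it is the unique minimizer, so the learner's estimate equals $(Q^\dagger,R^\dagger,q^\dagger,c^\dagger)=(\hat Q,\hat R,\hat q,\hat c)$, verifying~\eqref{attack:LQR-single-KKT2}; the constructed point is therefore feasible. I expect this verification to be the main obstacle, since it is the only place where the bilevel structure bites: one must argue both that zero residual is genuinely a global minimum of the \emph{constrained} least-squares problem and that its PSD-constrained solution is unique, rather than merely optimal.

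Finally, for attainment of the optimum I would argue as follows. By \propref{attackLQR:convexity} the surrogate problem is convex, and we have just shown its feasible region is nonempty. The objective $\|\bv r-\bv r^0\|_\alpha$ is continuous and coercive in $\bv r$, so any sublevel set confines $\bv r$ to a bounded set; moreover, on the feasible region each auxiliary variable is a continuous function of $\bv r$, since the unique least-squares fit determines $(\hat Q,\hat R,\hat q,\hat c)$, which in turn determine $X$ through the unique PSD Riccati solution and $x$ through the linear system~\eqref{attack:LQR-single-c4}. Hence the feasible region is closed, a nonempty sublevel set of it is compact, and the continuous objective attains its minimum there, yielding the desired optimal solution.
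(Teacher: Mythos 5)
Your proposal is correct and takes essentially the same route as the paper's proof: exhibit the explicit feasible point $(\bv r, Q^\dagger, R^\dagger, q^\dagger, c^\dagger, X^\dagger, x^\dagger)$ where the poisoned rewards encode the attacker's own loss so that the least-squares residual in~\eqref{attack:LQR-single-KKT2} vanishes at $(Q^\dagger,R^\dagger,q^\dagger,c^\dagger)$, then deduce attainment of the optimum. Two minor points of comparison: your sign choice $r_t=-L^\dagger(s_t,a_t)$ is the correct one for the residual $\frac{1}{2}s_t^\top Qs_t+q^\top s_t+a_t^\top Ra_t+c+r_t$ to vanish (the paper writes $r_t=+L^\dagger(s_t,a_t)$, a sign slip), and your coercivity-based attainment argument is more careful than the paper's bare appeal to closedness of the feasible set, while your invocation of the appendix uniqueness conditions is not actually needed (zero residual already certifies membership in the argmin, which is how the constraint is interpreted in the paper's KKT reformulation) and is not an assumption of the proposition.
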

\begin{proof}
To prove feasibility, it suffices to construct a feasible solution to optimization~\eqref{attack:LQR-single}-\eqref{attack-LQR-single-c9}. Let 
\begin{equation}
r_t=\frac{1}{2}s_t^\top Q^\dagger s_t+{q^\dagger}^\top s_t+a_t^\top R^\dagger a_t+c^\dagger
\end{equation}
and $\bv r$ be the vector whose $t$-th element is $r_t$. We next show that $\bv r$, $Q^\dagger$, $R^\dagger$, $q^\dagger$, $c^\dagger$, together with some $X$ and $x$ is a feasible solution. Note that since $K^\dagger$, $k^\dagger$ is induced by the LQR with transition kernel $\hat A$, $\hat B$ and cost function $L^\dagger(s,a)$, constraints~\eqref{attack:LQR-single-c1}-\eqref{attack:LQR-single-c4} must be satisfied with some $X$ and $x$. The poisoned reward vector $\bv r$ obviously satisfies~\eqref{attack:LQR-single-KKT2} since it is constructed exactly as the minimizer. By our assumption, $R^\dagger\succeq \epsilon I$, thus~\eqref{attack-LQR-single-c9} is satisfied. Therefore, $\bv r$, $Q^\dagger$, $R^\dagger$, $q^\dagger$, $c^\dagger$, together with the corresponding $X$, $x$ is a feasible solution, and the optimization~\eqref{attack:LQR-single}-\eqref{attack-LQR-single-c9} is feasible. Furthermore, since the feasible set is closed, the optimal solution can be achieved.
\end{proof}

\section{Conditions for The LQR Learner to Have Unique Estimate}\label{LQR:condition_uniqueness}
The LQR learner estimates the cost function by
\begin{equation}\label{LQR:estimate-reward-app}
(\hat{Q}, \hat{R},\hat q,\hat c)= \argmin _{(Q\succeq 0, R\succeq \epsilon I,q,c)}  \frac{1}{2}\sum_{t=0}^{T-1}\left\|\frac{1}{2}s_t^\top Qs_t+q^\top s_t+a_t^\top Ra_t+c+r_t\right\|_{2}^{2}.
\end{equation}
We want to find a condition that guarantees the uniqueness of the solution.

Let $\psi\in\R^T$ be a vector, whose $t$-th element is
\begin{equation}\label{psi}
\psi_t=\frac{1}{2}s_t^\top Qs_t+q^\top s_t+a_t^\top Ra_t+c, 0\le t\le T-1.
\end{equation}

Note that we can view $\psi$ as a function of $D$, $Q$, $R$, $q$, and $c$, thus we can also denote $\psi(D,Q,R,q,c)$.
Define $\Psi(D)=\{\psi (D,Q,R,q,c)\mid Q\succeq 0, R\succeq \epsilon I,q,c\}$, i.e., all possible vectors that are achievable with form~\eqref{psi} if we vary $Q$, $R$, $q$ and $c$ subject to positive semi-definite constraints on $Q$ and $R$. We can prove that $\Psi$ is a closed convex set.

\begin{proposition}
$\forall D$, $\Psi(D)=\{\psi (D,Q,R,q,c)\mid Q\succeq 0, R\succeq \epsilon I,q,c\}$ is a closed convex set.
\end{proposition}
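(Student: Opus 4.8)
The plan is to prove the two assertions separately: convexity, which is routine, and then closedness, which is the substance.

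\textbf{Convexity and setup.} The map $\Phi:(Q,R,q,c)\mapsto \psi(D,Q,R,q,c)$ with $t$-th component $\frac{1}{2}s_t^\top Q s_t+q^\top s_t+a_t^\top R a_t+c$ is \emph{linear} in the parameters $(Q,R,q,c)$, and the feasible parameter set $\mathcal{F}=\{Q\succeq 0\}\times\{R\succeq\epsilon I\}\times\R^n\times\R$ is convex. Hence $\Psi(D)=\Phi(\mathcal{F})$ is convex as the linear image of a convex set. For closedness I would first absorb the $\epsilon$-shift: writing $R=R'+\epsilon I$ with $R'\succeq 0$ turns $a_t^\top R a_t$ into $a_t^\top R' a_t+\epsilon\|a_t\|_2^2$, so that $\Psi(D)=\Phi_0(\mathcal{C})+v_\epsilon$, where $v_\epsilon=(\epsilon\|a_t\|_2^2)_{t}$ is a fixed vector, $\mathcal{C}=\{Q\succeq 0\}\times\{R\succeq 0\}\times\R^n\times\R$ is a closed convex cone, and $\Phi_0$ is linear. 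Since translation preserves closedness, it suffices to show $\Phi_0(\mathcal{C})$ is closed.

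\textbf{Decomposition into cones and a subspace.} I would write $\Phi_0(\mathcal{C})=K_Q+K_R+U$, where $K_Q=\{(\tfrac{1}{2}s_t^\top Q s_t)_t:Q\succeq 0\}$, $K_R=\{(a_t^\top R a_t)_t:R\succeq 0\}$, and $U=\{(q^\top s_t+c)_t:q\in\R^n,c\in\R\}$ is a linear subspace. Diagonalizing $Q=\sum_j\lambda_j u_j u_j^\top$ shows $K_Q$ is the conic hull of the compact generator set $G_Q=\{(\tfrac{1}{2}(s_t^\top u)^2)_t:\|u\|_2=1\}$. The key structural observation is that the test directions are rank-one PSD, so $G_Q\subseteq\R^T_{\ge 0}$ and thus $K_Q\subseteq\R^T_{\ge 0}$ is \emph{pointed}; the conic hull of a compact set whose convex hull avoids the origin is closed, which gives closedness of $K_Q$. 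When $\{s_t\}$ fails to span $\R^n$ one has $0\in G_Q$, and I would handle this by restricting $u$ to $\mathrm{span}\{s_t\}$ (which does not change $K_Q$) so that the generators are bounded away from the origin. The same argument applies verbatim to $K_R$. Thus $K_Q$ and $K_R$ are closed pointed convex cones.

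\textbf{Assembling the pieces.} To combine $K_Q$, $K_R$, and $U$ I would invoke the standard recession-cone criterion: a Minkowski sum of closed convex cones is closed provided that any representation $0=x_Q+x_R+u$ with $x_Q\in K_Q$, $x_R\in K_R$, $u\in U$ forces $x_Q,x_R$ into the lineality spaces of their cones, which here are $\{0\}$ by pointedness. Because $x_Q,x_R\ge 0$ componentwise, this reduces to controlling how the nonnegative cones can cancel against the signed subspace $U$. The cleanest organization is to pass to the orthogonal complement of $U$: since $\Phi_0(\mathcal{C})$ is $U$-saturated, it is closed if and only if $P(K_Q+K_R)$ is closed, where $P$ is the orthogonal projection onto $U^\perp$, and $P(K_Q+K_R)$ is again the conic hull of the compact generator set $P(G_Q\cup G_R)$.

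\textbf{Main obstacle.} The step I expect to require the most care is exactly this last reduction: verifying that the projected conic hull stays closed. Projection can send boundary generators tangentially toward the origin, and conic hulls of compact sets that touch the origin need not be closed in general, so this is where a naive argument would break. The rescue is that any generator landing in $U$ contributes a direction that is already fully contained in $\Phi_0(\mathcal{C})$ via $U$, so it cannot create a genuinely missing boundary ray; making this precise — ruling out tangential escape by using that the absorbed directions live in $U$ while the surviving generators, viewed on $\mathrm{span}\{s_t\}$ and $\mathrm{span}\{a_t\}$, remain bounded away from $0$ after projection — is the technical heart of the proof, with everything else being bookkeeping.
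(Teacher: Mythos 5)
Your convexity argument is correct, and it is in fact the \emph{entirety} of the paper's own proof: the paper simply takes convex combinations of the parameters $(Q,R,q,c)$, notes the PSD constraints are preserved, and stops --- closedness is never addressed there at all. Your further steps also start out sound: restricting the rank-one generators to $\mathrm{span}\{s_t\}$ (resp.\ $\mathrm{span}\{a_t\}$) does make the generator sets compact and bounded away from $0$, so $K_Q$ and $K_R$ are closed pointed cones, and since both sit in $\R^T_{\ge 0}$, $K_Q\cap(-K_R)=\{0\}$ and the sum $K_Q+K_R$ is a closed pointed cone as well. The gap is exactly the step you flag as the ``technical heart'': adding the subspace $U$. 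Your proposed rescue --- that a generator landing in $U$ ``cannot create a genuinely missing boundary ray'' --- is false, and no argument can close this gap, because the proposition as stated ($\forall D$) is false: tangential escape along a generator lying in $U$ is precisely what can happen.

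Counterexample: let $n=2$, all $a_t=0$ (so the $R$-term and $v_\epsilon$ vanish; alternatively take all $a_t$ equal, the constant contribution is absorbed by $c$), and take $T=5$ states on a parabola, $s_t=(x_t,\tfrac12 x_t^2)$ with distinct $x_1,\dots,x_5$. Writing $Q=\left(\begin{smallmatrix}\alpha&\beta\\ \beta&\delta\end{smallmatrix}\right)$, every $\psi\in\Psi(D)$ is the evaluation at $(x_1,\dots,x_5)$ of the quartic $p(x)=\tfrac{\delta}{8}x^4+\tfrac{\beta}{2}x^3+\tfrac{\alpha+q_2}{2}x^2+q_1x+c$. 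Evaluation at five distinct points is a linear isomorphism on quartic coefficients, and the achievable coefficient set is $\{\delta>0,\ \beta \text{ arbitrary}\}\cup\{\delta=0,\ \beta=0\}$ with the lower-order coefficients free (when $\delta=0$, $Q\succeq0$ forces $\beta=0$); this set is convex but not closed. Concretely, $Q_\eta=\left(\begin{smallmatrix}4/\eta&2\\ 2&\eta\end{smallmatrix}\right)\succeq0$ with $q=(0,-4/\eta)$, $c=0$ gives $\psi_t=x_t^3+\tfrac{\eta}{8}x_t^4\to x_t^3$ as $\eta\to0^+$, yet $(x_t^3)_t\notin\Psi(D)$: matching five values of a quartic forces $\delta=0$ and $\beta=2$, contradicting $Q\succeq 0$. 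Note the culprit is the rank-one generator $(\tfrac12(s_t^\top e_1)^2)_t=(\tfrac12 x_t^2)_t=(e_2^\top s_t)_t\in U$: the curvature of the PSD cone around that ray, flattened into $U$, produces the missing limit point --- exactly the phenomenon your heuristic rules out.

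What \emph{is} true, and is all the paper needs downstream, is closedness under the non-degeneracy condition introduced later in that same appendix: if the design matrix $\bv A$ there has full column rank, the linear map $(Q,R,q,c)\mapsto\psi$ is injective, hence proper ($\|\bv A\bv x\|_2\ge\sigma_{\min}\|\bv x\|_2$ with $\sigma_{\min}>0$), hence a closed map, so the image of the closed set $\{Q\succeq0\}\times\{R\succeq\epsilon I\}\times\R^n\times\R$ is closed. The honest fix is to add that rank hypothesis to the proposition (both for your proof and for the paper's), rather than to try to repair the conic-sum argument.
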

\begin{proof}
Let $\psi_1, \psi_2\in\Psi(D)$. We use $\psi_{i,t}$ to denote the $t$-th element of vector $\psi_i$. Then we have
\begin{equation}
\psi_{1,t}=\frac{1}{2}s_t^\top Q_1s_t+q_1^\top s_t+a_t^\top R_1a_t+c_1
\end{equation}
for some $Q_1\succeq 0$, $R_1\succeq \epsilon I$, $q_1$ and $c_1$, and
\begin{equation}
\psi_{2,t}=\frac{1}{2}s_t^\top Q_2s_t+q_2^\top s_t+a_t^\top R_2a_t+c_2
\end{equation}
for some $Q_2\succeq 0$, $R_2\succeq\epsilon I$, $q_2$ and $c_2$. $\forall k\in[0,1]$, let $\psi_3 = k\psi_1+(1-k)\psi_2$. Then the $t$-th element of $\psi_3$ is
\begin{equation}
\begin{aligned}
\psi_{3,t}=&\frac{1}{2}s_t^\top [kQ_1+(1-k)Q_2]s_t+[kq_1+(1-k)q_2]^\top s_t\\
&+a_t^\top [kR_1+(1-k)R_2]a_t+kc_1+(1-k)c_2
\end{aligned}
\end{equation}
Since $kQ_1+(1-k)Q_2\succeq 0$ and $kR_1+(1-k)R_2\succeq \epsilon I$, $\psi_3\in \Psi(D)$, concluding the proof.
\end{proof}

The optimization~\eqref{LQR:estimate-reward-app} is intrinsically a least-squares problem with positive semi-definite constraints on $Q$ and $R$, and is equivalent to solving the following linear equation:
\begin{equation}\label{LQR:estimate-reward-app-equivalence}
\frac{1}{2}s_t^\top \hat Qs_t+\hat q^\top s_t+a_t^\top \hat Ra_t+\hat c=\psi_t^*, \forall t,
\end{equation}
where $\psi^*=\argmin_{\psi\in\Psi(D)}\|\psi+\bv r\|^2_2$ is the projection of the negative reward vector $-\bv r$ onto the set $\Psi(D)$. 
The solution to~\eqref{LQR:estimate-reward-app-equivalence} is unique if and only if the following two conditions both hold 
\begin{itemize}
\item [$i).$] The projection $\psi^*$ is unique.
\item[$ii).$] ~\eqref{LQR:estimate-reward-app-equivalence} has a unique solution for $\psi^*$.
\end{itemize}
Condition $i)$ is satisfied because $\Psi(D)$ is convex, and any projection (in $\ell_2$ norm) onto a convex set exists and is always unique (see Hilbert Projection Theorem). We next analyze when condition $ii)$ holds.~\eqref{LQR:estimate-reward-app-equivalence} is a linear function in $\hat Q$, $\hat R$, $\hat q$, and $\hat c$, thus one can vectorize $\hat Q$ and $\hat R$ to obtain a problem in the form of linear regression. Then the uniqueness is guaranteed if and only if the design matrix has full column rank. Specifically, let $\hat Q\in\R^{n\times n}$, $\hat R\in \R^{m\times m}$, and $\hat q\in\R^n$. Let $s_{t,i}$ and $a_{t,i}$ denote the $i$-th element of $s_t$ and $a_t$ respectively. Define
$$
\bv A=
\arraycolsep=3pt
\left[\begin{array}{ccccc|ccccc|c|c}
{\frac{s_{0,1}^2}{2}} &\ldots & {\frac{s_{0,i}s_{0,j}}{2}} &\ldots & {\frac{s_{0,n}^2}{2}} & {a_{0,1}^2} &\ldots & {a_{0,i}a_{0,j}} &\ldots & {a_{0,m}^2} & {s_0^\top} & 1\\
{\frac{s_{1,1}^2}{2}} &\ldots & {\frac{s_{1,i}s_{1,j}}{2}} &\ldots & {\frac{s_{1,n}^2}{2}} & {a_{1,1}^2} &\ldots & {a_{1,i}a_{2,j}} &\ldots & {a_{1,m}^2} & {s_1^\top} & 1\\
\vdots& & \vdots & & \vdots & \vdots  & & \vdots&& \vdots  &\vdots & \vdots \\
{\frac{s_{t,1}^2}{2}} &\ldots & {\frac{s_{t,i}s_{t,j}}{2}} &\ldots & {\frac{s_{t,n}^2}{2}} & {a_{t,1}^2} &\ldots & {a_{t,i}a_{t,j}} &\ldots & {a_{t,m}^2} & {s_t^\top} & 1\\
\vdots& & \vdots & & \vdots & \vdots  & & \vdots&& \vdots  &\vdots &  \vdots \\
{\frac{s_{T-1,1}^2}{2}} &\ldots & {\frac{s_{T-1,i}s_{T-1,j}}{2}} &\ldots & {\frac{s_{T-1,n}^2}{2}} & {a_{T-1,1}^2} &\ldots & {a_{T-1,i}a_{T-1,j}} &\ldots & {a_{T-1,m}^2} & {s_{T-1}^\top} & 1\\
\end{array}\right],
$$
$$
\bv x^\top = \left[\begin{array}{ccccc|ccccc|ccccc|c}
{\hat Q_{11}} & \ldots & {\hat Q_{ij}}  &  \ldots & {\hat Q_{nn}} & {\hat R_{11}} & \ldots & {\hat R_{ij}}  & \ldots & {\hat R_{mm}} & {\hat q_{1}} & \ldots & {\hat q_{i}} &\ldots & {\hat q_{n}} & {\hat c}\\
\end{array}\right],
$$
then~\eqref{LQR:estimate-reward-app-equivalence} is equivalent to $\bv A\bv x=\psi^*$, where $\bv x$ contains the vectorized variables $\hat Q$, $\hat R$, $\hat q$ and $\hat c$. $\bv A\bv x=\psi^*$ has a unique solution if and only if $\bv A$ has full column rank.

\section{Sparse Attacks on TCE and LQR}\label{sparse-attack}
In this section, we present experimental details for both TCE and LQR victims when the attacker uses $\ell_1$ norm to measure the attack cost, i.e. $\alpha=1$. The other experimental parameters are set exactly the same as in the main text.

We first show the result for MDP experiment 2 with $\alpha=1$, see Figure~\ref{fig:grid-world-1-complete-sparse}. The attack cost is $\|\bv r -\bv r^0\|_1=3.27$, which is small compared to $\|\bv r^0\|_1=105$. We note that the reward poisoning is extremely sparse: only the reward corresponding to action ``go up'' at the terminal state $G$ is increased by 3.27, and all other rewards remain unchanged. To explain this attack, first note that we set the target action for the terminal state to ``go up'', thus the corresponding reward must be increased. Next note that after the attack, the terminal state becomes a sweet spot, where the agent can keep taking action ``go up'' to gain large amount of discounted future reward. However, such future reward is discounted more if the agent reaches the terminal state via a longer path. Therefore, the agent will choose to go along the red trajectory to get into the terminal state earlier, though at a price of two discounted $-10$ rewards.

\begin{figure}[]
		\centering
		\begin{tikzpicture}
		\draw[step=2cm,black,thin,opacity=0.5] (0,0) grid (14,12);
		\fill[black] (0,2) rectangle (2,12);
		\fill[black] (4,0) rectangle (14,2);
		\fill[black] (4,4) rectangle (10,8);
		\fill[black] (4,10) rectangle (8,12);
		\fill[black] (12,0) rectangle (14,12);
		\fill[black] (10,10) rectangle (12,12);
		\fill[gray,opacity=0.8] (2,4) rectangle (4,8);
		\fill[blue,opacity=0.5] (2,10) rectangle (4,12);
		\node at (1,1) {S};
		\node at (3,11) {G};
		
		\draw [orange,->,>=stealth] (0.5,1) -- (-0.5,1);
		\draw [orange,->,>=stealth] (1,0.5) -- (1,-0.5);
		\draw [orange,->,>=stealth] (1.5,1.2) -- (2.5,1.2);
		\draw [orange,->,>=stealth] (2.5,0.8) -- (1.5,0.8);
		\draw [orange,->,>=stealth] (1,1.5) -- (1,2.5);
		
		\draw [orange,->,>=stealth] (3,0.5) -- (3,-0.5);
		\draw [orange,->,>=stealth] (3.2,2.5) -- (3.2,1.5);
		\draw [orange,->,>=stealth] (3.5,1) -- (4.5,1);
		\draw [orange,->,>=stealth] (2.8,1.5) -- (2.8,2.5);
		
		\draw [orange,->,>=stealth] (3.2,4.5) -- (3.2,3.5);
		\draw [orange,->,>=stealth] (3.5,3.2) -- (4.5,3.2);
		\draw [orange,->,>=stealth] (4.5,2.8) -- (3.5,2.8);
		\draw [orange,->,>=stealth] (2.8,3.5) -- (2.8,4.5);
		\draw [orange,->,>=stealth] (2.5,3) -- (1.5,3);
		
		\draw [orange,->,>=stealth] (5.5,3.2) -- (6.5,3.2);
		\draw [orange,->,>=stealth] (6.5,2.8) -- (5.5,2.8);
		\draw [orange,->,>=stealth] (5,3.5) -- (5,4.5);
		\draw [orange,->,>=stealth] (5,2.5) -- (5,1.5);
		
		\draw [orange,->,>=stealth] (7.5,3.2) -- (8.5,3.2);
		\draw [orange,->,>=stealth] (8.5,2.8) -- (7.5,2.8);
		\draw [orange,->,>=stealth] (7,3.5) -- (7,4.5);
		\draw [orange,->,>=stealth] (7,2.5) -- (7,1.5);
		
		\draw [orange,->,>=stealth] (9.5,3.2) -- (10.5,3.2);
		\draw [orange,->,>=stealth] (10.5,2.8) -- (9.5,2.8);
		\draw [orange,->,>=stealth] (9,3.5) -- (9,4.5);
		\draw [orange,->,>=stealth] (9,2.5) -- (9,1.5);
		
		\draw [orange,->,>=stealth] (11.2,4.5) -- (11.2,3.5);
		\draw [orange,->,>=stealth] (11.5,3) -- (12.5,3);
		\draw [orange,->,>=stealth] (10.8,3.5) -- (10.8,4.5);
		\draw [orange,->,>=stealth] (11,2.5) -- (11,1.5);
		
		\draw [orange,->,>=stealth] (11.2,6.5) -- (11.2,5.5);
		\draw [orange,->,>=stealth] (11.5,5) -- (12.5,5);
		\draw [orange,->,>=stealth] (10.8,5.5) -- (10.8,6.5);
		\draw [orange,->,>=stealth] (10.5,5) -- (9.5,5);
		
		\draw [orange,->,>=stealth] (11.2,8.5) -- (11.2,7.5);
		\draw [orange,->,>=stealth] (11.5,7) -- (12.5,7);
		\draw [orange,->,>=stealth] (10.8,7.5) -- (10.8,8.5);
		\draw [orange,->,>=stealth] (10.5,7) -- (9.5,7);
		
		\draw [orange,->,>=stealth] (11.5,9) -- (12.5,9);
		\draw [orange,->,>=stealth] (11,9.5) -- (11,10.5);
		\draw [orange,->,>=stealth] (9.5,9.2) -- (10.5,9.2);
		\draw [orange,->,>=stealth] (10.5,8.8) -- (9.5,8.8);
		
		\draw [orange,->,>=stealth] (8.8,9.5) -- (8.8,10.5);
		\draw [orange,->,>=stealth] (9.2,10.5) -- (9.2,9.5);
		\draw [orange,->,>=stealth] (7.5,9.2) -- (8.5,9.2);
		\draw [orange,->,>=stealth] (8.5,8.8) -- (7.5,8.8);
		\draw [orange,->,>=stealth] (9,8.5) -- (9,7.5);
		
		\draw [orange,->,>=stealth] (7,9.5) -- (7,10.5);
		\draw [orange,->,>=stealth] (5.5,9.2) -- (6.5,9.2);
		\draw [orange,->,>=stealth] (6.5,8.8) -- (5.5,8.8);
		\draw [orange,->,>=stealth] (7,8.5) -- (7,7.5);
		
		\draw [orange,->,>=stealth] (5,9.5) -- (5,10.5);
		\draw [orange,->,>=stealth] (3.5,9.2) -- (4.5,9.2);
		\draw [orange,->,>=stealth] (4.5,8.8) -- (3.5,8.8);
		\draw [orange,->,>=stealth] (5,8.5) -- (5,7.5);
		
		\draw [orange,->,>=stealth] (2.8,9.5) -- (2.8,10.5);
		\draw [orange,->,>=stealth] (3.2,10.5) -- (3.2,9.5);
		\draw [orange,->,>=stealth] (2.5,9) -- (1.5,9);
		\draw [orange,->,>=stealth] (2.8,7.5) -- (2.8,8.5);
		\draw [orange,->,>=stealth] (3.2,8.5) -- (3.2,7.5);
		
		\draw [orange,->,>=stealth] (2.8,5.5) -- (2.8,6.5);
		\draw [orange,->,>=stealth] (3.2,6.5) -- (3.2,5.5);
		\draw [orange,->,>=stealth] (2.5,7) -- (1.5,7);
		\draw [orange,->,>=stealth] (3.5,7) -- (4.5,7);
		
		\draw [orange,->,>=stealth] (2.5,5) -- (1.5,5);
		\draw [orange,->,>=stealth] (3.5,5) -- (4.5,5);
		
		\draw [orange,->,>=stealth] (2.5,11) -- (1.5,11);
		\draw [orange,->,>=stealth] (3.5,11) -- (4.5,11);
		\draw [orange,->,>=stealth] (3,11.5) -- (3,12.5);
		
		\draw [orange,->,>=stealth] (8.5,11) -- (7.5,11);
		\draw [orange,->,>=stealth] (9.5,11) -- (10.5,11);
		\draw [orange,->,>=stealth] (9,11.5) -- (9,12.5);
		\node at (3,12) {\textcolor{red}{{\normalsize+3.270}}}; 
		\end{tikzpicture}
		\caption{Sparse reward modification for MDP experiment 2.}
		\label{fig:grid-world-1-complete-sparse}
\end{figure}
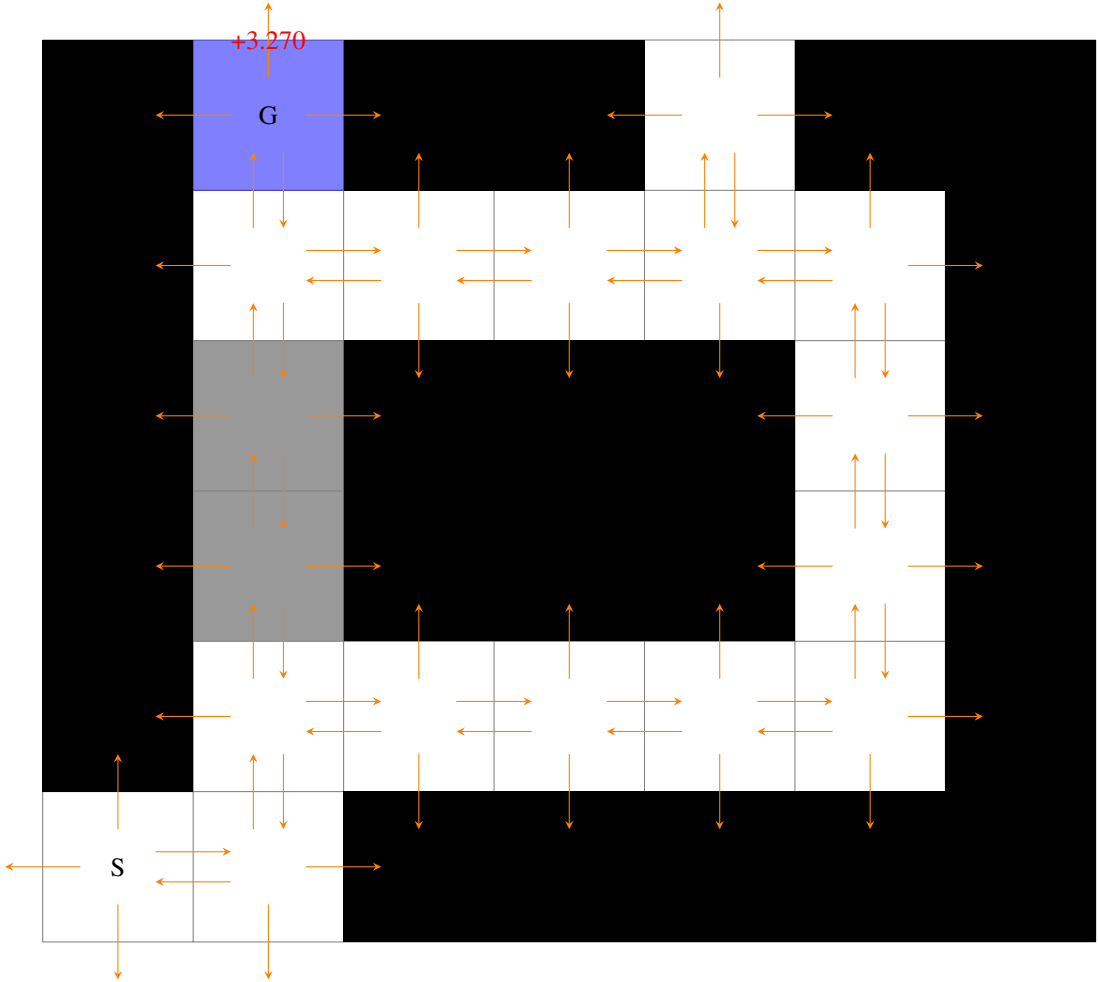

The result is similar for MDP experiment 3. The attack cost is $\|\bv r -\bv r^0\|_1=1.05$, compared to $\|\bv r^0\|_1=121$. In Figure~\ref{fig:grid-world-2-complete-sparse}, we show the reward modification for each state action pair. Again, the attack is very sparse: only rewards of 12 state-action pairs are modified out of a total of 124.

\begin{figure}[]
		\centering
		\begin{tikzpicture}
		\draw[step=2cm,black,thin,opacity=0.5] (0,0) grid (12,12);
		\fill[black] (2,8) rectangle (4,12);
		\fill[black] (8,8) rectangle (10,12);
		\node at (1,11) {G1};
		\node at (11,11) {G2};
		\fill[blue,opacity=0.5] (0,10) rectangle (2,12);
		\fill[blue,opacity=0.5] (10,10) rectangle (12,12);
		\node at (3,7) {S};
		
		\draw [orange,->,>=stealth] (0.5,1) -- (-0.5,1);
		\draw [orange,->,>=stealth] (1,0.5) -- (1,-0.5);
		\draw [orange,->,>=stealth] (1.5,1.2) -- (2.5,1.2);
		\draw [orange,->,>=stealth] (2.5,0.8) -- (1.5,0.8);
		\draw [orange,->,>=stealth] (0.8,1.5) -- (0.8,2.5);
		\draw [orange,->,>=stealth] (1.2,2.5) -- (1.2,1.5);
		
		\draw [orange,->,>=stealth] (0.5,3) -- (-0.5,3);
		\draw [orange,->,>=stealth] (1.5,3.2) -- (2.5,3.2);
		\draw [orange,->,>=stealth] (2.5,2.8) -- (1.5,2.8);
		\draw [orange,->,>=stealth] (0.8,3.5) -- (0.8,4.5);
		\draw [orange,->,>=stealth] (1.2,4.5) -- (1.2,3.5);
		
		\draw [orange,->,>=stealth] (0.5,5) -- (-0.5,5);
		\draw [orange,->,>=stealth] (1.5,5.2) -- (2.5,5.2);
		\draw [orange,->,>=stealth] (2.5,4.8) -- (1.5,4.8);
		\draw [orange,->,>=stealth] (0.8,5.5) -- (0.8,6.5);
		\draw [orange,->,>=stealth] (1.2,6.5) -- (1.2,5.5);
		
		\draw [orange,->,>=stealth] (0.5,7) -- (-0.5,7);
		\draw [orange,->,>=stealth] (1.5,7.2) -- (2.5,7.2);
		\draw [orange,->,>=stealth] (2.5,6.8) -- (1.5,6.8);
		\draw [orange,->,>=stealth] (0.8,7.5) -- (0.8,8.5);
		\draw [orange,->,>=stealth] (1.2,8.5) -- (1.2,7.5);
		
		\draw [orange,->,>=stealth] (0.5,9) -- (-0.5,9);
		\draw [orange,->,>=stealth] (1.5,9) -- (2.5,9);
		\draw [orange,->,>=stealth] (0.8,9.5) -- (0.8,10.5);
		\draw [orange,->,>=stealth] (1.2,10.5) -- (1.2,9.5);
		
		\draw [orange,->,>=stealth] (0.5,11) -- (-0.5,11);
		\draw [orange,->,>=stealth] (1.5,11) -- (2.5,11);
		\draw [orange,->,>=stealth] (1,11.5) -- (1,12.5);
		
		\draw [orange,->,>=stealth] (3,0.5) -- (3,-0.5);
		\draw [orange,->,>=stealth] (3.5,1.2) -- (4.5,1.2);
		\draw [orange,->,>=stealth] (4.5,0.8) -- (3.5,0.8);
		\draw [orange,->,>=stealth] (2.8,1.5) -- (2.8,2.5);
		\draw [orange,->,>=stealth] (3.2,2.5) -- (3.2,1.5);
		
		\draw [orange,->,>=stealth] (5,0.5) -- (5,-0.5);
		\draw [orange,->,>=stealth] (5.5,1.2) -- (6.5,1.2);
		\draw [orange,->,>=stealth] (6.5,0.8) -- (5.5,0.8);
		\draw [orange,->,>=stealth] (4.8,1.5) -- (4.8,2.5);
		\draw [orange,->,>=stealth] (5.2,2.5) -- (5.2,1.5);
		
		\draw [orange,->,>=stealth] (7,0.5) -- (7,-0.5);
		\draw [orange,->,>=stealth] (7.5,1.2) -- (8.5,1.2);
		\draw [orange,->,>=stealth] (8.5,0.8) -- (7.5,0.8);
		\draw [orange,->,>=stealth] (6.8,1.5) -- (6.8,2.5);
		\draw [orange,->,>=stealth] (7.2,2.5) -- (7.2,1.5);
		
		\draw [orange,->,>=stealth] (9,0.5) -- (9,-0.5);
		\draw [orange,->,>=stealth] (9.5,1.2) -- (10.5,1.2);
		\draw [orange,->,>=stealth] (10.5,0.8) -- (9.5,0.8);
		\draw [orange,->,>=stealth] (8.8,1.5) -- (8.8,2.5);
		\draw [orange,->,>=stealth] (9.2,2.5) -- (9.2,1.5);
		
		\draw [orange,->,>=stealth] (11,0.5) -- (11,-0.5);
		\draw [orange,->,>=stealth] (11.5,1) -- (12.5,1);
		\draw [orange,->,>=stealth] (10.8,1.5) -- (10.8,2.5);
		\draw [orange,->,>=stealth] (11.2,2.5) -- (11.2,1.5);
		
		\draw [orange,->,>=stealth] (3.5,3.2) -- (4.5,3.2);
		\draw [orange,->,>=stealth] (4.5,2.8) -- (3.5,2.8);
		\draw [orange,->,>=stealth] (2.8,3.5) -- (2.8,4.5);
		\draw [orange,->,>=stealth] (3.2,4.5) -- (3.2,3.5);
		
		\draw [orange,->,>=stealth] (3.5,5.2) -- (4.5,5.2);
		\draw [orange,->,>=stealth] (4.5,4.8) -- (3.5,4.8);
		\draw [orange,->,>=stealth] (2.8,5.5) -- (2.8,6.5);
		\draw [orange,->,>=stealth] (3.2,6.5) -- (3.2,5.5);
		
		\draw [orange,->,>=stealth] (3.5,7.2) -- (4.5,7.2);
		\draw [orange,->,>=stealth] (4.5,6.8) -- (3.5,6.8);
		\draw [orange,->,>=stealth] (3,7.5) -- (3,8.5);
		
		\draw [orange,->,>=stealth] (5.5,3.2) -- (6.5,3.2);
		\draw [orange,->,>=stealth] (6.5,2.8) -- (5.5,2.8);
		\draw [orange,->,>=stealth] (4.8,3.5) -- (4.8,4.5);
		\draw [orange,->,>=stealth] (5.2,4.5) -- (5.2,3.5);
		
		\draw [orange,->,>=stealth] (5.5,5.2) -- (6.5,5.2);
		\draw [orange,->,>=stealth] (6.5,4.8) -- (5.5,4.8);
		\draw [orange,->,>=stealth] (4.8,5.5) -- (4.8,6.5);
		\draw [orange,->,>=stealth] (5.2,6.5) -- (5.2,5.5);
		
		\draw [orange,->,>=stealth] (5.5,7.2) -- (6.5,7.2);
		\draw [orange,->,>=stealth] (6.5,6.8) -- (5.5,6.8);
		\draw [orange,->,>=stealth] (4.8,7.5) -- (4.8,8.5);
		\draw [orange,->,>=stealth] (5.2,8.5) -- (5.2,7.5);
		
		\draw [orange,->,>=stealth] (5.5,9.2) -- (6.5,9.2);
		\draw [orange,->,>=stealth] (6.5,8.8) -- (5.5,8.8);
		\draw [orange,->,>=stealth] (4.8,9.5) -- (4.8,10.5);
		\draw [orange,->,>=stealth] (5.2,10.5) -- (5.2,9.5);
		\draw [orange,->,>=stealth] (4.5,9) -- (3.5,9);
		
		\draw [orange,->,>=stealth] (5.5,11.2) -- (6.5,11.2);
		\draw [orange,->,>=stealth] (6.5,10.8) -- (5.5,10.8);
		\draw [orange,->,>=stealth] (5,11.5) -- (5,12.5);
		\draw [orange,->,>=stealth] (4.5,11) -- (3.5,11);
		
		\draw [orange,->,>=stealth] (7.5,3.2) -- (8.5,3.2);
		\draw [orange,->,>=stealth] (8.5,2.8) -- (7.5,2.8);
		\draw [orange,->,>=stealth] (6.8,3.5) -- (6.8,4.5);
		\draw [orange,->,>=stealth] (7.2,4.5) -- (7.2,3.5);
		
		\draw [orange,->,>=stealth] (7.5,5.2) -- (8.5,5.2);
		\draw [orange,->,>=stealth] (8.5,4.8) -- (7.5,4.8);
		\draw [orange,->,>=stealth] (6.8,5.5) -- (6.8,6.5);
		\draw [orange,->,>=stealth] (7.2,6.5) -- (7.2,5.5);
		
		\draw [orange,->,>=stealth] (7.5,7.2) -- (8.5,7.2);
		\draw [orange,->,>=stealth] (8.5,6.8) -- (7.5,6.8);
		\draw [orange,->,>=stealth] (6.8,7.5) -- (6.8,8.5);
		\draw [orange,->,>=stealth] (7.2,8.5) -- (7.2,7.5);
		
		\draw [orange,->,>=stealth] (7.5,9) -- (8.5,9);
		\draw [orange,->,>=stealth] (6.8,9.5) -- (6.8,10.5);
		\draw [orange,->,>=stealth] (7.2,10.5) -- (7.2,9.5);
		
		\draw [orange,->,>=stealth] (7.5,11) -- (8.5,11);
		\draw [orange,->,>=stealth] (7,11.5) -- (7,12.5);
		
		\draw [orange,->,>=stealth] (9.5,3.2) -- (10.5,3.2);
		\draw [orange,->,>=stealth] (10.5,2.8) -- (9.5,2.8);
		\draw [orange,->,>=stealth] (8.8,3.5) -- (8.8,4.5);
		\draw [orange,->,>=stealth] (9.2,4.5) -- (9.2,3.5);
		
		\draw [orange,->,>=stealth] (9.5,5.2) -- (10.5,5.2);
		\draw [orange,->,>=stealth] (10.5,4.8) -- (9.5,4.8);
		\draw [orange,->,>=stealth] (8.8,5.5) -- (8.8,6.5);
		\draw [orange,->,>=stealth] (9.2,6.5) -- (9.2,5.5);
		
		\draw [orange,->,>=stealth] (9.5,7.2) -- (10.5,7.2);
		\draw [orange,->,>=stealth] (10.5,6.8) -- (9.5,6.8);
		\draw [orange,->,>=stealth] (9,7.5) -- (9,8.5);
		
		\draw [orange,->,>=stealth] (11.5,3) -- (12.5,3);
		\draw [orange,->,>=stealth] (10.8,3.5) -- (10.8,4.5);
		\draw [orange,->,>=stealth] (11.2,4.5) -- (11.2,3.5);
		
		\draw [orange,->,>=stealth] (11.5,5) -- (12.5,5);
		\draw [orange,->,>=stealth] (10.8,5.5) -- (10.8,6.5);
		\draw [orange,->,>=stealth] (11.2,6.5) -- (11.2,5.5);
		
		\draw [orange,->,>=stealth] (11.5,7) -- (12.5,7);
		\draw [orange,->,>=stealth] (10.8,7.5) -- (10.8,8.5);
		\draw [orange,->,>=stealth] (11.2,8.5) -- (11.2,7.5);
		
		\draw [orange,->,>=stealth] (11.5,9) -- (12.5,9);
		\draw [orange,->,>=stealth] (10.5,9) -- (9.5,9);
		\draw [orange,->,>=stealth] (10.8,9.5) -- (10.8,10.5);
		\draw [orange,->,>=stealth] (11.2,10.5) -- (11.2,9.5);
		
		\draw [orange,->,>=stealth] (11.5,11) -- (12.5,11);
		\draw [orange,->,>=stealth] (10.5,11) -- (9.5,11);
		\draw [orange,->,>=stealth] (11,11.5) -- (11,12.5);
		
		\node at (2.6,2) {\textcolor{red}{{\normalsize+0.010}}}; 

		\node at (6,1.2) {\textcolor{blue}{{\normalsize-0.010}}}; 
		
		\node at (2.6,4) {\textcolor{red}{{\normalsize+0.010}}}; 
		
		\node at (8,1.2) {\textcolor{blue}{{\normalsize-0.010}}}; 
		
		\node at (2.6,6) {\textcolor{red}{{\normalsize+0.010}}}; 
		
		\node at (10,1.2) {\textcolor{blue}{{\normalsize-0.010}}}; 
		
		\node at (4,7.2) {\textcolor{red}{{\normalsize+0.100}}}; 
		
		\node at (6,7.2) {\textcolor{red}{{\normalsize+0.123}}}; 
		
		\node at (1,12) {\textcolor{red}{{\normalsize+0.100}}}; 
		
		\node at (8,7.2) {\textcolor{red}{{\normalsize+0.123}}}; 
		
		\node at (10,7.2) {\textcolor{red}{{\normalsize+0.123}}}; 

		\node at (11,12) {\textcolor{red}{{\normalsize+0.424}}}; 
		
		\end{tikzpicture}
		\caption{Sparse reward modification for MDP experiment 3.}
		\label{fig:grid-world-2-complete-sparse}
\end{figure}

Finally, we show the result on attacking LQR with $\alpha=1$. The attack cost is $\|\bv r-\bv r^0\|_1=5.44$, compared to $\|\bv r^0\|_1=2088.57$.
In Figure~\ref{fig:poison-LQR-sparse}, we plot the clean and poisoned trajectory of the vehicle, together with the reward modification in each time step. 
The attack is as effective as with a dense 2-norm attack in Figure~\ref{fig:poison-LQR}.
However, the poisoning is highly sparse: only 10 out of 400 rewards are changed.
\begin{figure}[H]
	\begin{subfigure}[t]{.4\textwidth}
		\centering
		\includegraphics[width=0.95\textwidth, height=0.85\textwidth]{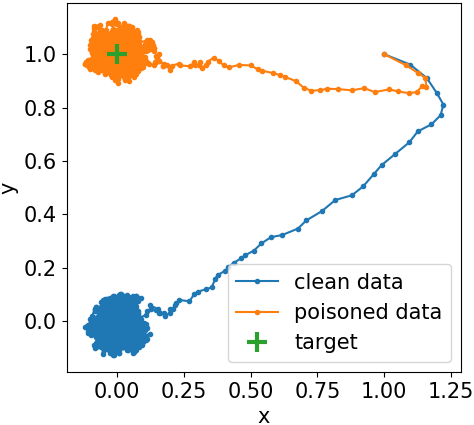}
		\caption{Clean and poisoned vehicle trajectory.}
		\label{fig:traj-LQR-sparse}
	\end{subfigure}%
	\begin{subfigure}[t]{.4\textwidth}
		\centering
		\includegraphics[width=0.95\textwidth, height=0.85\textwidth]{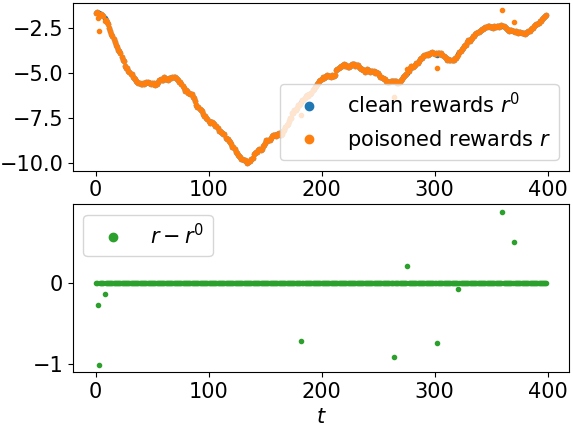}
		\caption{Clean and poisoned rewards.}
		\label{fig:poisoned_rewards-sparse}
	\end{subfigure}%
	\caption{Sparse-poisoning a vehicle running LQR in 4D state space.}
	\label{fig:poison-LQR-sparse}
\end{figure}

\section{Derivation of Discounted Discrete-time Algebraic Riccati Equation}
We provide a derivation for the discounted Discrete-time Algebraic Riccati Equation. For simplicity, we consider the noiseless case, but the derivation easily generalizes to noisy case. We consider the loss function is a general quadratic function w.r.t. $s$ as follows:
\begin{equation}
\begin{aligned}
L(s, a)  =\frac{1}{2} s^{\top} Q s + q^{\top} s + c + a^{\top} R a.
\end{aligned} 
\end{equation}
When $q = 0, c= 0$, we recover the classic LQR setting.
Assume the general value function takes the form $V(s) = \frac{1}{2} s^{\top} X s + s^{\top} x + v$. Let $Q(s,a)$ (note that this is different notation from the $Q$ matrix in $L(s,a)$) be the corresponding action value function. We perform dynamics programming as follows:
\begin{equation}
\begin{aligned}
Q(s,a) & = \frac{1}{2} s^{\top} Q s + q^{\top} s + c + a^{\top}R a + \gamma V(As + B a) \\
& = \frac{1}{2} s^{\top} Q s + q^{\top} s + c + a^{\top} R a + \gamma \left(\frac{1}{2} (As+Ba)^{\top} X (As+Ba) + (As+Ba)^{\top} x + v\right) \\
& = \frac{1}{2} s^{\top} (Q + \gamma A^{\top}X A )s + \frac{1}{2} a^{\top} (R+ \gamma B^{\top}X B) a  + s^{\top}( \gamma A^{\top}XB )a \\
& ~ + s^{\top}(q + \gamma A^{\top}x) + a^{\top}(\gamma B^{\top}x) + (c + \gamma v).
\end{aligned} 
\end{equation}
We minimize $a$ above:
\begin{equation}
\begin{aligned}
&(R+\gamma B^{\top}X B)a + \gamma B^{\top}X A s + \gamma B^{\top}x = 0 \\
& ~ \Rightarrow  a = -\gamma(R+\gamma B^{\top}X B)^{-1} B^{\top}X A s - \gamma(R+\gamma B^{\top}XB)^{-1} B^{\top}x \triangleq K s + k.
\end{aligned}
\end{equation} 
Now we substitute it back to $Q(s,a)$ and regroup terms, we get:
\begin{equation}
\begin{aligned}
V(s)  =& \frac{1}{2} s^{\top}( Q + \gamma A^{\top}X A + K^{\top}(R+\gamma B^{\top}XB) K  + 2\gamma A^{\top}XB K)s \\
&  +  s^{\top}(K^{\top}(R+\gamma B^{\top}XB)k + \gamma A^{\top}XB k + q + \gamma A^{\top}x + \gamma K^{\top}B^{\top}x) + C
\end{aligned}
\end{equation}
for some constant $C$, which gives us the following recursion:
\begin{equation}
\begin{aligned}
&X = \gamma A^{\top} X A - \gamma^2 A^{\top}XB(R+\gamma B^{\top}XB)^{-1}B^{\top}X A+Q,\\
& x =   q + \gamma  (A+BK)^{\top} x. 
\end{aligned}
\end{equation}

\end{document}